\documentclass{article}
\usepackage{arxiv}

\usepackage[T1]{fontenc}
\usepackage[utf8]{inputenc}
\usepackage{fancyhdr}

\usepackage[square,comma,numbers,compress]{natbib}
\setcitestyle{nosort}

\usepackage{tikz}
\usetikzlibrary{arrows.meta,positioning,calc,fit,backgrounds,shadows.blur,shapes.geometric}
\usetikzlibrary{arrows.meta,positioning,calc,matrix,decorations.pathreplacing}

\usepackage{microtype}
\usepackage{lmodern}
\usepackage{amsmath,amssymb,mathtools}
\usepackage{amsthm}
\usepackage{makecell}
\usepackage{thmtools}
\usepackage{thm-restate}
\usepackage{bm}
\usepackage{booktabs}
\usepackage{multirow}
\usepackage{siunitx,etoolbox}

\usepackage{booktabs}
\usepackage{graphicx}
\usepackage{float}
\usepackage{xcolor}
\usepackage[normalem]{ulem}
\usepackage{hyperref}
\hypersetup{colorlinks=true,linkcolor=black,citecolor=black,urlcolor=blue}

\newtheorem{proposition}{Proposition}

\newtheorem{lemma}{Lemma}
\newtheorem{corollary}{Corollary}

\newtheorem{definition}{Definition}

\newtheorem{notation}{Notation}

\newtheorem{remark}{Remark}
\newtheorem{example}{Example}
\newtheorem{algo}{Algorithm}

\DeclareMathOperator{\Proj}{Proj}

\title{Probabilistic classification from possibilistic data: computing Kullback-Leibler  projection with a possibility distribution}
\date{}

\author{
  Ismaïl Baaj \\
  LEMMA, Paris-Panthéon-Assas University \\
  Paris, France \\
  \texttt{ismail.baaj@assas-universite.fr}
  \And
  Pierre Marquis \\
  Univ. Artois, CNRS, CRIL \\
  Institut Universitaire de France \\
  Lens, France \\
 \texttt{marquis@cril.fr}
}

\hypersetup{
pdftitle={Probabilistic classification from possibilistic data: computing Kullback-Leibler projection with a possibility distribution},
pdfsubject={cs.AI},
pdfauthor={Ismaïl Baaj, Pierre Marquis},
}

\begin{document}

\maketitle

\pagestyle{fancy}
\fancyhead{}
\fancyhead[RO]{\textbf{Probabilistic classification from possibilistic data}}

\begin{abstract}
We consider learning with possibilistic supervision for multi-class classification. For each training instance, the supervision is a normalized possibility distribution that expresses graded plausibility over the classes. From this possibility distribution, we construct a non-empty closed convex set of admissible probability distributions by combining two requirements: probabilistic compatibility with the possibility and necessity measures induced by the possibility distribution,  and linear shape constraints that must be satisfied to preserve the qualitative structure of the possibility distribution. Thus, classes with the same possibility degree receive equal probabilities, and if a class has a strictly larger possibility degree than another class, then it receives a strictly larger probability.\\ Given a strictly positive probability vector output by a model for an instance, we compute its Kullback–Leibler projection onto the admissible set. This projection yields the closest admissible probability distribution in Kullback–Leibler sense. We can then train the model by minimizing the divergence between the prediction and its projection, which quantifies the smallest adjustment needed to satisfy the induced dominance and shape constraints. The projection is computed with Dykstra’s algorithm using Bregman projections associated with the negative entropy, and we provide explicit formulas for the projections onto each  constraint set. 
Experiments conducted on synthetic data and on a real-world natural language inference task, based on the ChaosNLI dataset, show that the proposed projection algorithm is efficient enough for practical use, and that the resulting projection-based learning objective can improve predictive performance.

\end{abstract}
\keywords{Possibilistic supervision \and Kullback-Leibler projection \and Dykstra's algorithm}

\section{Introduction}

Possibility Theory is an uncertainty theory, which provides computable methods for the representation of incomplete and/or imprecise information. Initially introduced by Zadeh \cite{zadeh1978fuzzy} and considerably developed by Dubois and Prade \cite{dubois2023reasoning}, Possibility Theory models uncertainty by two dual measures, possibility and necessity, which are useful to distinguish what is possible without being certain at all and what is certain to some extent.

While Probability Theory is the standard uncertainty framework in Machine Learning, considering a single probability distribution can be too restrictive when uncertainty mainly reflects partial ignorance rather than randomness \cite{shafer1976,denoeux2020}. In such cases, Possibility Theory \cite{dubois2023reasoning} which offers a simple representation of epistemic uncertainty (uncertainty due to a lack or limited amount of information) can be a valuable alternative.

In this article, we show how Possibility Theory can be leveraged to define a probabilistic classifier (more precisely, a neural network with a softmax output layer) that is trained from uncertain data, when uncertainty is modeled by a possibilistic distribution.
Formally, given a finite set of classes $\mathcal{Y}$, let $\pi^{\mathrm{full}}:\mathcal{Y}\to[0,1]$
be a normalized possibility distribution, i.e., $\pi^{\mathrm{full}}$ is such that $\exists y \in \mathcal{Y}$, $\pi^{\mathrm{full}}(y)=1$. We restrict attention to its support
$Y:=\{y\in\mathcal{Y}:\ \pi^{\mathrm{full}}(y)>0\}$ (with $|Y|=n$) and write
$\pi:Y\to(0,1]$ for the restriction of $\pi^{\mathrm{full}}$ to $Y$, so that
$\max_{y\in Y}\pi(y)=1$ and $\pi^{\mathrm{full}}(y)=0 \text{ for } y\in\mathcal{Y}\setminus Y.$ By relabelling the elements of $Y$ we may assume $Y=\{1,\dots,n\}$, and we identify $\pi$ with its coordinate vector $(\pi_k)_{k=1}^n$. Likewise, for any probability distribution $p$ on $Y$, we identify it with its coordinate vector $p=(p_1,\dots,p_n)$, where $p_k:=p(k)$.

In order to learn a probabilistic classifier from possibilistic data, 
we first show how to associate with $\pi$ a non-empty closed
convex set $\mathcal F^{\mathrm{box}}$ of admissible probability distributions $p$, see
(\ref{eq:fbox}). This set is defined by two types of constraints. First, we require that, for every event $A\subseteq Y$, the probability measure $P$
associated with $p\in\mathcal F^{\mathrm{box}}$ satisfies $N(A)\le P(A)\le \Pi(A)$, where $(N,\Pi)$ are the necessity and possibility measures induced by $\pi$. These inequalities (for all $A\subseteq Y$) define the set of probability measures $P$ compatible with $\Pi$.
Second, we add linear shape constraints that ensure consistency with the ordering
expressed by $\pi$: whenever $\pi_k\ge \pi_{k'}$ for $k,k'\in\{1,\dots,n\}$, admissible probabilities are
constrained so that $p_k\ge p_{k'}$, i.e.,  $\pi_k\ge\pi_{k'}\Longleftrightarrow p_k\ge p_{k'}$ holds.

The next step is to consider a dataset in which, for each input instance, uncertainty about class membership is represented by a possibility distribution $\pi^{\mathrm{full}}$ on $\mathcal{Y}$, whose restriction to $Y$ is denoted by $\pi$ as above. Such possibilistic annotations arise naturally when supervision is incomplete, imprecise, or heterogeneous (e.g., aggregation of multiple assessments), since $\pi^{\mathrm{full}}$ encodes graded plausibility without requiring precise probabilities. 

Finally, we train a probabilistic classifier (e.g., a neural network with a softmax output layer) on this dataset, so that, for each input instance $x$, the classifier outputs a probability distribution $q_\theta(x)$ on $\mathcal{Y}$. We write $q_{\theta\mid Y}(x)$ for the restriction of $q_\theta(x)$ to $Y$, followed by making it strictly positive (if necessary) and then normalizing, so that $q_{\theta\mid Y}(x)$ is a strictly positive probability vector on $Y$. In general, $q_{\theta\mid Y}(x)$ is not guaranteed to belong to $\mathcal F^{\mathrm{box}}(\pi)$, i.e., to satisfy the constraints induced by $\pi$. We therefore use $\pi$ as a possibilistic soft target and define  $p^\star_\theta(x,\pi)$ as the Kullback--Leibler projection \cite{csiszar1975divergence} of $q_{\theta\mid Y}(x)$ onto $\mathcal{F}^{\mathrm{box}}(\pi)$:
\begin{equation}\label{eq:DKLproblem}
p^\star_\theta(x,\pi)
:=
\arg\min_{p\in\mathcal{F}^{\mathrm{box}}(\pi)}
D_{\mathrm{KL}}\!\bigl(p\|q_{\theta\mid Y}(x)\bigr).
\end{equation}
\noindent where the Kullback-Leibler divergence is
$D_{\mathrm{KL}}(p\|q)=\sum_{k=1}^n p_k\log \frac{p_k}{q_k}$
 for $p,q$ on $Y$ (with the usual convention $0\log(0/t)=0$ for $t>0$). To simplify the notations, whenever a training instance $(x,\pi)$ and parameter vector $\theta$ are fixed, we write
$q:=q_{\theta\mid Y}(x)$, $\mathcal F^{\mathrm{box}}:=\mathcal F^{\mathrm{box}}(\pi)$, and $p^\star:=p^\star_\theta(x,\pi)$.

Kullback-Leibler projections onto convex subsets of probability vectors go back to Csiszár's work~\cite{csiszar1975divergence}. In this article we show that  $p^\star$ of~\eqref{eq:DKLproblem} can be
computed iteratively by Dykstra's algorithm
\cite{dykstra1983algorithm,dykstra1985iterative} with Bregman projections
\cite{bregman1967}, see Algorithm~\ref{ex:algo:dykstra}. The fundamental result
of Bauschke and Lewis \cite[Theorem~3.2]{bauschke2000dykstras} guarantees the
convergence of this algorithm. We also provide explicit formulas for the Bregman projections onto each constraint set.\\ 

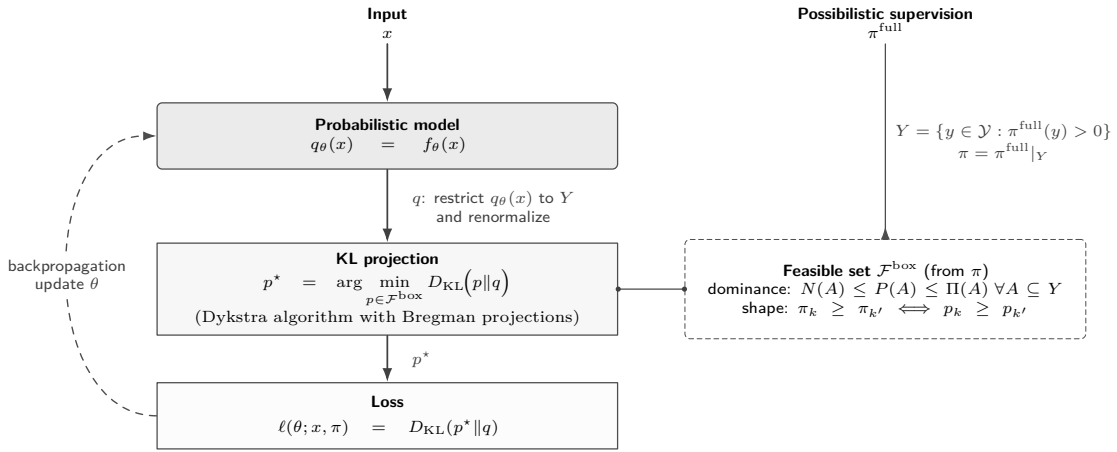
\begin{figure}[H]
\centering
\resizebox{0.9\linewidth}{!}{
\begin{tikzpicture}[
font=\scriptsize\sffamily,
>=Latex,
line cap=round,
line join=round,
node distance=6mm and 12mm,
 /tikz/Wmain/.store in=\Wmain,   /tikz/Wmain=6.6cm,
 /tikz/Wset/.store  in=\Wset,    /tikz/Wset=5.7cm,
 /tikz/Wsmall/.store in=\Wsmall, /tikz/Wsmall=3.1cm,
/tikz/GapX/.store in=\GapX, /tikz/GapX=10mm,
/tikz/GapY/.store in=\GapY, /tikz/GapY=7mm,
 /tikz/GapYbig/.store in=\GapYbig, /tikz/GapYbig=11mm,
 box/.style={draw=black!70,line width=0.55pt,inner xsep=5pt,inner ysep=4pt,minimum height=10mm,align=center},
 box2/.style={draw=black!70,inner xsep=5pt,inner ysep=4pt,minimum height=15mm,align=center},
 datanode/.style={box,rounded corners=2.2pt,fill=black!0},
 wdatanode/.style={box,draw=none,line width=0.pt,fill=black!0},
 modelnode/.style={box,rounded corners=2.2pt,fill=black!8},
 projnode/.style={box,rounded corners=0pt,fill=black!3},
 lossnode/.style={box,rounded corners=0pt,fill=black!1},
 setnode/.style={box2,rounded corners=2.2pt,fill=black!0,dash pattern=on 2.1pt off 1.5pt},
 flow/.style={-{Latex[length=2.1mm,width=1.5mm]},draw=black!75,line width=0.75pt},
 cons/.style={-{Latex[length=2.0mm,width=1.4mm]},draw=black!70,line width=0.55pt},
 loop/.style={-{Latex[length=2.0mm,width=1.4mm]},draw=black!70,line width=0.60pt,dashed},
 tag/.style={font=\scriptsize\sffamily,text=black!75,fill=white,inner sep=1.2pt,rounded corners=1.2pt,align=center}
]
\node[wdatanode,text width=\Wsmall] (x) at (0,0) {\textbf{Input}\\$x$};
\node[modelnode,text width=\Wmain,below=\GapY of x] (model) {\textbf{Probabilistic model}\\$q_\theta(x)=f_\theta(x)$};
\node[projnode,text width=\Wmain,below=\GapYbig of model] (proj) {\textbf{KL projection}\\$p^\star=\arg\min\limits_{p\in\mathcal F^{\mathrm{box}}} D_{\mathrm{KL}}\!\bigl(p\|q\bigr)$\\\textnormal{\scriptsize(Dykstra algorithm with Bregman projections)}};
\node[lossnode,text width=\Wmain,below=\GapY of proj] (loss) {\textbf{Loss}\\[1mm]$\ell(\theta;x,\pi)=D_{\mathrm{KL}}(p^\star\|q)$};
\node[setnode,text width=\Wset,anchor=west] (fbox) at ($(proj.east)+(\GapX,0)$) {\textbf{Feasible set $\mathcal F^{\mathrm{box}}$} (from $\pi$)\\dominance: $N(A)\le P(A)\le \Pi(A)\ \forall A\subseteq Y$\\shape: $\pi_k\ge\pi_{k'}\Longleftrightarrow p_k\ge p_{k'}$};
\node[wdatanode,text width=\Wsmall] (pi) at ($(fbox.center|-x.center)$) {\textbf{Possibilistic supervision}\\$\pi^{\mathrm{full}}$};
\coordinate (elbowPi) at ($(pi.south |- fbox.north)$);
\draw[flow] ($(x.south)-(0,-1.8mm)$) -- (model.north);
\draw[flow] (model.south) -- (proj.north) node[midway,tag,right=3.2mm] {$q$: restrict $q_\theta(x)$ to $Y$\\and renormalize};
\draw[flow] (proj.south) -- (loss.north) node[midway,tag,right=3.2mm] {$p^\star$};
\draw[cons] ($(pi.south)-(0,-1.8mm)$) -- (elbowPi) node[midway,tag,right=1mm] {$Y=\{y\in\mathcal{Y}:\pi^{\mathrm{full}}(y)>0\}$\\$\pi=\pi^{\mathrm{full}}|_Y$} -- ($(fbox.north)+(0,1.9mm)$);
\draw[cons,-] (fbox.west) -- (proj.east);
\fill[black!75] (fbox.west) circle (1.1pt);
\fill[black!75] (proj.east) circle (1.1pt);
\draw[loop] (loss.west) .. controls +(-18mm,0) and +(-18mm,0) .. node[midway,tag] {backpropagation\\update $\theta$} (model.west);
\end{tikzpicture}}
\caption{Training with possibilistic supervision via KL projection onto $\mathcal F^{\mathrm{box}}$.}
\label{fig:learning-pipeline}
\end{figure}

The projection in \eqref{eq:DKLproblem} can then be used for learning purposes, see Figure~\ref{fig:learning-pipeline}. For a given training instance $(x,\pi)$, define the per-instance objective
\begin{equation}\label{eq:DKLpstarq}
\ell(\theta;x,\pi)
=
D_{\mathrm{KL}}(p^\star\|q)
=
\sum_{k=1}^n p^\star_k\log\frac{p^\star_k}{q_k}.
\end{equation}
This divergence quantifies how much the model output $q$ must be corrected, through the projected target $p^\star$, to satisfy the constraints induced by $\pi$. This suggests a learning scheme in which, for each training instance, $p^\star$ is computed from the current prediction $q$ and then used as a soft target to update $\theta$ by minimizing $\ell(\theta;x,\pi)$ (thus, in this learning scheme, we do not backpropagate through the projection step). The loss $\ell(\theta;x,\pi)=D_{\mathrm{KL}}(p^\star\|q)$ is the smallest KL adjustment of $q$ needed to satisfy the dominance and ordering constraints induced by $\pi$. In particular, $\ell(\theta;x,\pi)=0$ whenever $q\in\mathcal F^{\mathrm{box}}$, and for any fixed admissible target $\bar p\in\mathcal F^{\mathrm{box}}$ we have $\ell(\theta;x,\pi)\le D_{\mathrm{KL}}(\bar p\|q)$. Thus, when supervision is specified only through the constraints induced by $\pi$ (rather than by a single probability target satisfying these constraints), the projection-based loss is always no larger than the KL loss to any fixed admissible target.

To evaluate the benefits of the proposed approach, we conduct experiments on synthetic data and on a real natural language inference task \cite{bowman2015large} based on the ChaosNLI dataset \cite{nie2020can}. The obtained results show that the projection algorithm is efficient enough to be used in practice and that the resulting projection-based approach can improve predictive performance.

The rest of the paper is structured as follows. In Section \ref{sec:bg}, we remind the necessary background on Possibility Theory and probability-possibility transformation. In Section \ref{sec:prob-constraints},  given a strictly positive normalized possibility distribution $\pi$, we show how to associate with it a non-empty closed convex set $\mathcal F^{\mathrm{box}}$ of admissible probability distributions. In Section \ref{sec:bregman-dykstra}, we study the Kullback-Leibler projection problem (\ref{eq:DKLproblem}) and show that one can obtain such a projection $p^\star$ using Dykstra algorithm with Bregman projections, see Algorithm \ref{ex:algo:dykstra}. We provide explicit formulas for the projections onto each  constraint set. 
In Section~\ref{sec:numerical}, we present three experiments: first, an empirical evaluation of Algorithm~\ref{ex:algo:dykstra} on synthetic data; second, a synthetic learning task showing that projection-based targets can improve predictive performance over a fixed probability target derived from $\pi$ under the same possibilistic supervision; third, a real natural language inference task based on the ChaosNLI dataset, where the  projection-based approach is evaluated under naturally ambiguous annotations.
Finally, we discuss applications and extensions of the proposed KL-projection framework under possibilistic supervision. Building on Remark~\ref{remark:extension}, we note that the same approach applies to admissible sets other than $\mathcal F^{\mathrm{box}}$: it can be used with any set of probability vectors $F\subseteq\Delta_n$ defined by linear subset inequalities and/or linear shape constraints, provided that $F$ is non-empty, closed, and convex. A key strength of the framework is that different constraint types can be combined by working with their intersection.\\

The proofs of the results of the article are provided in an appendix.

\section{Background}
\label{sec:bg}
In this section, we remind Possibility Theory and present a bijective probability-possibility transformation.
Throughout the article, we use the following notations:
\[
\mathbb{R}^n_{+}
:=
\{x\in\mathbb{R}^n \mid x_i \ge 0,\ i=1,\dots,n\},
\qquad
\mathbb{R}^n_{++}
:=
\{x\in\mathbb{R}^n \mid x_i > 0,\ i=1,\dots,n\}.
\]

\subsection{Possibility Theory}\label{subsection:possibility}

In the following, we give some background on Possibility Theory \cite{dubois2023reasoning,dubois2015possibility}, focusing on concepts needed to define a probability-possibility transformation \cite{dubois1983unfair,dubois1993possibility} that will be presented in the subsequent subsection.

\noindent Let $U$ be a finite set. Any subset  $A \subseteq U$ is called an \textit{event}. In particular, for each $u \in U$, the singleton $\{u\}$ is called an \textit{elementary event}. We denote by $\overline A := U\setminus A$ the complement of $A$.

\begin{definition}
A \emph{possibility measure} on $U$ is a mapping 
$\Pi: 2^U \rightarrow [0,1]$, which assigns a degree $\Pi(A)$ to each event $A \subseteq U$ in order to assess to what extent the event $A$ is possible. It satisfies the following conditions:

\begin{itemize}
    \item $\Pi(\emptyset) = 0$ and $\Pi(U) = 1$,
    \item For any subset $\{ A_1,A_2,\dots,A_m \} \subseteq 2^U$, $\Pi(\bigcup_{i=1}^m A_i) = \max_{i=1,2,\dots,m} \Pi(A_i)$.
\end{itemize}
\end{definition}

\noindent For any event $A$, if $\Pi(A)$ is equal to 1, it means that $A$ is totally possible, while if $\Pi(A)$ is equal to 0, it means that $A$ is impossible. A possibility measure $\Pi$ has the following properties:
\begin{itemize}
    \item $\Pi(A \cup \overline{A}) = \max(\Pi(A),\Pi(\overline{A}))=1$.
    \item 
    For any $A_1,A_2\in 2^U$, if $A_1 \subseteq A_2$, then $\Pi(A_1) \leq \Pi(A_2)$. It follows that for any $A_1,A_2\in 2^U$, we have $\Pi(A_1 \cap A_2) \leq \min(\Pi(A_1),\Pi(A_2))$.
\end{itemize}

\smallskip
\noindent Likewise the notion of possibility measure, a \textit{necessity measure} is defined by:
\begin{definition}
\noindent A \emph{necessity measure} on $U$ is a mapping $N: 2^U \rightarrow [0,1]$, which assigns a degree $N(A)$ to each event $A\subseteq U$ in order to assess to what extent the event $A$ is certain. It satisfies:
\begin{itemize}
    \item $N(\emptyset) = 0$ and $N(U) = 1$,
    \item For any subset $\{ A_1,A_2,\dots,A_m \} \subseteq 2^U$, $N(\bigcap_{i=1}^m A_i) = \min_{i=1,2,\dots,m} N(A_i)$.
\end{itemize}
\end{definition}

\noindent  If $N(A) = 1$, it means that $A$ is certain. If $N(A) = 0$, the event $A$ is not certain at all, but this does not mean that $A$ is impossible. A necessity measure has the following properties:
\begin{itemize}
    \item $N(A \cap \overline{A}) = \min(N(A),N(\overline{A}))=0$.
    \item For any $A_1,A_2\in 2^U$, if $A_1 \subseteq A_2$, then $N(A_1) \leq N(A_2)$. It follows that for any $A_1,A_2\in 2^U$, we have $N(A_1 \cup A_2) \geq \max(N(A_1),N(A_2))$.
\end{itemize}

\smallskip
\noindent The two notions of possibility measure and of necessity measure are dual to each other in the following sense:
\begin{itemize}
    \item If $\Pi$ is a possibility measure, then the corresponding necessity measure $N$ is defined by the following formula:
    \[ N(A):= 1 - \Pi(\overline{A}).\]
    \item Reciprocally, if $N$ is a necessity measure, then the corresponding possibility measure $\Pi$ is defined by the following formula:
    \[ \Pi(A) := 1 - N(\overline A).\] 
\end{itemize}

\noindent A \textit{possibility distribution} on the set $U$ is defined by: 
\begin{definition}
A \emph{possibility distribution} $\pi$ on the set $U$ is a mapping $\pi: U \rightarrow [0,1]$, which assigns to each element $u \in U$ a possibility degree $\pi(u) \in [0,1]$. A possibility distribution is said to be \emph{normalized} if $\exists u \in U$ such that $\pi(u) = 1$.
\end{definition}

\noindent Any possibility measure $\Pi$ gives rise to a normalized possibility distribution $\pi$ defined by the formula:
\[\pi(u) = \Pi(\{u\}),  u\in U.\]

\noindent Therefore, for any subset $A\subseteq U$, we have:
\[ \Pi(A) = \max_{u\in A} \pi(u) \quad \text{ and }  
N(A) =1 - \Pi(\overline A)= \min_{u \notin A} (1 - \pi(u)). \]

\noindent Reciprocally, a normalized possibility distribution $\pi$ gives rise to a possibility measure $\Pi$ 
and a necessity measure $N$ defined by:
\[ \text{for any } A \subseteq U,\quad \Pi(A) = \max_{u\in A} \pi(u) \quad \text{ and } \quad  
N(A) =1 - \Pi(\overline A)= \min_{u \notin A} (1 - \pi(u)).\]

\noindent Possibilistic conditioning is defined in both the qualitative and the quantitative frameworks of Possibility Theory. For a detailed overview, see \cite{dubois2023reasoning}. In the following, the qualitative framework is used.

\subsection{Probability-possibility transformation}\label{sec:ppt}

In the following, we present the bijective probability-possibility transformation  introduced in \cite{dubois1983unfair} and named  ``antipignistic method'' in  \cite{dubois2020possibilistic}.

Transforming a probability distribution $p$ on $Y$ (with its associated probability measure $P$) into a possibility distribution $\pi$ on $Y$  (with its associated possibility measure $\Pi$ and necessity measure $N$) 
consists in \textit{finding a framing interval} $[N(A), \Pi(A)]$ of $P(A)$ for any subset $A \subseteq Y$ \cite{dubois2006possibility,dubois1993possibility}:  the possibility measure $\Pi$ dominates the probability measure $P$. The transformation of the probability distribution $p$ into a possibility distribution $\pi$ should preserve the shape of the distribution: for $y,y' \in Y$, $p(y) \ge p(y') \Longleftrightarrow \pi(y) \ge \pi(y')$.

\subsubsection{Antipignistic method}
\label{subsec:antipignistic}

\noindent If $p$ is a probability distribution on a finite set $Y$, let $P$ denote the probability measure on $Y$ defined by $p$, i.e., $P(A) = \sum_{y\in A} p_y$ where $p_y = P(\{y\})$. 

\noindent The antipignistic method associates  a normalized possibility distribution $\pi$ on $Y$ with $p$, which is such that for all $A \subseteq Y$:
 \[ N(A) \leq P(A) \leq \Pi(A),\]
\noindent where $N(A)$ and $\Pi(A)$ are the necessity and possibility measures defined by $\pi$.

\noindent \textit{Let us suppose that the elements of $Y$ are ordered} so that for $Y=\{y_1, \dots, y_n\}$, we have  $p_1 \geq p_2 \geq \dots \geq p_n\,$ \text{where} $\quad p_i =P(\{y_i\}).$ We call this assumption \textit{the decreasing assumption}.

\noindent The possibility degree $\pi_i = \pi(y_i)$ of $y_i$ where $1 \leq i \leq n$ is defined  by:

\begin{equation}\label{eq:piiform}
\pi_i = i p_i + \sum_{j = i+1}^n p_j =  \sum_{j=1}^n \min(p_j, p_i).    
\end{equation}

\noindent where the equality  $i p_i + \sum_{j = i+1}^n p_j =  \sum_{j=1}^n \min(p_j, p_i)$ holds because of the assumption $p_1 \ge p_2 \ge \dots \ge p_n$.

\noindent For all $A \subseteq Y$, the necessity measure of $A$ can be computed as:
\[
N(A)=\sum_{y\in A}\max\Bigl(p_y-\max_{y'\notin A}p_{y'},\,0\Bigr).
\]
\noindent  Note that $Y$ can be exhausted as follows:
\[ A_0 = \emptyset  \subset A_1 \subset A_2 \subset \dots \subset A_n = Y,\,  \text{with }\,
A_i = \{y_1, y_2, \dots, y_i\}.\]
Then, we have:
\[ N(A) = \max_{0\leq k \leq n, A_k \subseteq A} N(A_k). \]
For $k = 0, 1, 2, \dots, n$, the computation of $N(A_k)$ by the preceding abstract formula (with the convention $p_{n+1} = 0 $) becomes:
\[ N(\emptyset) = 0,\, N(A_k) = \sum_{i=1}^k (p_i - p_{k+1}),\, N(Y) = \sum_{i=1}^n p_i = 1.\]

\noindent We then have for all $A \subseteq Y$:  $N(A) \leq P(A) \leq \Pi(A)$ (see \cite{dubois1983unfair} for the proof and the   underlying semantics of this result). 

\noindent Note that from (\ref{eq:piiform}),  the possibility distribution $\pi$ associated with such a probability distribution $p$ verifies:
\begin{equation}
     \pi_1 = 1, \quad 
\pi_i - \pi_{i+1 } = i(p_i  - p_{i+1}) \geq 0.
\end{equation}
and then we have $\pi_1 = 1 \geq  \pi_2 \geq \dots \geq \pi_n$.

\noindent Reciprocally, starting from a normalized possibility distribution $\pi$  that verifies  $\pi_1 = 1 \geq  \pi_2 \geq \dots \geq \pi_n$, the following formula generates from $\pi$ a probability distribution $p$ which verifies $p_1 \geq p_2 \geq \dots \geq p_n$:
\begin{equation}\label{eq:p_i}
 p_i = \sum_{j=i}^n \frac{1}{j}(\pi_j - \pi_{j+1}) \quad \text{with the convention } \quad  \pi_{n+1} = 0.   
\end{equation}

\noindent Clearly, we have $p_1 \geq p_2 \geq \dots \geq p_n$ and we easily check that the normalized possibility distribution associated with $p$ via the formula (\ref{eq:piiform}) is equal to $\pi$.

\noindent To sum up, between the set of probability values  on the set $\{1, 2, \dots, n\}$ which verifies $p_1 \geq p_2 \geq \dots \geq p_n$ and 
the set of   normalized possibility values 
on the set $\{1, 2, \dots, n\}$ that verify 
$\pi_1=1 \geq  \pi_2 \geq \dots \geq \pi_n$
we have the following one-to-one correspondence:
\[ p \mapsto \pi: \pi_i =   i p_i + \sum_{j = i+1}^n p_j =  \sum_{j=1}^n \min(p_j, p_i),\] 
\[ \pi \mapsto p: p_i =   \sum_{j=i}^n \frac{1}{j}(\pi_j - \pi_{j+1}), \text{with the convention }  \pi_{n+1} = 0.\]
This one-to-one correspondence can be used on any set $Y = \{y_1, y_2, \dots, y_n\}$ where the  domains of definition of each of the two mappings $p \mapsto \pi$ and $\pi \mapsto p$ satisfy the decreasing assumption. 

\noindent Finally, one can observe 
that the mapping $\pi \mapsto p$ preserves the shape of the distributions, i.e., for all $  i\in\{1, 2, \dots, n-1\}$, we have the equivalence  $\pi_i \ge \pi_{i+1} \Longleftrightarrow p_{i} \ge p_{i+1}$ and also the following useful result:
\begin{lemma}
For any $1 \leq k \leq n$, we have $p_k = 0 \Longleftrightarrow \pi_k = 0$. Thus, we have:
\[ \pi\in\mathbb{R}^n_{++} \Longleftrightarrow  p\in\mathbb{R}^n_{++}.\]
\end{lemma}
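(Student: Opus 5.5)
We work under the decreasing assumption, where $p$ and $\pi$ are linked by the one-to-one correspondence of Subsection~\ref{sec:ppt}. We prove the two directions of $p_k=0 \Longleftrightarrow \pi_k=0$ separately, using whichever of the two explicit formulas is convenient. In both directions the only ingredient is that a sum of nonnegative terms vanishes exactly when each term vanishes.

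For ($\pi_k=0 \Rightarrow p_k=0$), we use formula~\eqref{eq:p_i}, $p_k=\sum_{j=k}^n \frac{1}{j}(\pi_j-\pi_{j+1})$ with $\pi_{n+1}=0$. Since $\pi_1\ge\cdots\ge\pi_n\ge 0$ and $\pi_k=0$, we get $\pi_j=0$ for every $j\ge k$. Hence every increment $\pi_j-\pi_{j+1}$ with $j\ge k$ is zero, and so $p_k=0$.

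For ($p_k=0 \Rightarrow \pi_k=0$), we use formula~\eqref{eq:piiform}, $\pi_k = k p_k + \sum_{j=k+1}^n p_j$. Since $p_j\ge 0$ and $p_1\ge\cdots\ge p_n$, the hypothesis $p_k=0$ forces $p_j=0$ for all $j\ge k$. Hence $\pi_k=0$.

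The two directions can also be obtained symmetrically. The sum $\sum_{j\ge k}\frac1j(\pi_j-\pi_{j+1})$ has nonnegative terms and telescopes down to $\pi_n$ in the limiting case. One can check that $p_k=0$ forces each increment $\pi_j-\pi_{j+1}$ with $j\ge k$ to vanish, including $\pi_n-\pi_{n+1}=\pi_n$, which gives $\pi_k=\pi_n=0$.

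The statement for vectors follows at once. Applying the equivalence coordinatewise, with all entries nonnegative, we get that $\pi$ has a zero coordinate iff $p$ does. Equivalently, $\pi\in\mathbb{R}^n_{++}\Longleftrightarrow p\in\mathbb{R}^n_{++}$. In fact, by monotonicity it suffices to look at the last coordinate: $\pi_n = n p_n$, so $\pi_n>0 \Longleftrightarrow p_n>0$.

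There is no real obstacle here. The only point requiring care is to invoke the decreasing ordering (and nonnegativity), since this is what makes all the terms of the sums nonnegative.
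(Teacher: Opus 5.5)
Your proof is correct: under the decreasing assumption, both \eqref{eq:piiform} and \eqref{eq:p_i} write each coordinate as a sum of nonnegative terms, so each direction and the coordinatewise passage to $\mathbb{R}^n_{++}$ (or simply $\pi_n = n p_n$) follow as you say. The paper states this lemma without a written proof, treating it as immediate from these formulas, and your telescoping argument in the fourth paragraph is the one it later uses for part~1 of Lemma~\ref{lemma:lemmaPropAntipignisticProb}: if $\dot p_{\sigma(r)}=0$, then all increments from $r$ on vanish, which forces $\tilde\pi_n=0$.
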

\qed

Dubois and Prade state that the antipignistic method provides an intuitive ground to the perception of the idea of certainty \cite{dubois2020possibilistic}.

\begin{example}\label{ex:probposs_antipignistic}
Let us study a classification problem where ten classes  $Y=\{0,1,\dots,9\}$ are considered.

We take the following probability distribution on $\{0,\dots,9\}$:
\begin{align*}
p^{(1)} &= \bigl(p^{(1)}(0),p^{(1)}(1),\dots,p^{(1)}(9)\bigr)\\
&= \bigl[0.91, 0.01, 0.01, 0.01, 0.01, 0.01, 0.01, 0.01, 0.01, 0.01\bigr].
\end{align*}
Applying the antipignistic probability–possibility transformation yields the normalized possibility distribution
\[
\pi^{(1)} = \bigl(\pi^{(1)}(0),\dots,\pi^{(1)}(9)\bigr)
          = \bigl[1.00, 0.10, 0.10, \dots, 0.10\bigr].
\]

Now consider a more ambiguous probability distribution on $Y$:
\begin{align*}
p^{(2)} &= \bigl(p^{(2)}(0),p^{(2)}(1),\dots,p^{(2)}(9)\bigr)\\
&= \bigl[0.15, 0.14, 0.13, 0.12, 0.11, 0.09, 0.08, 0.07, 0.06, 0.05\bigr],
\end{align*}
for which the antipignistic transformation gives
\[
\pi^{(2)} = \bigl(\pi^{(2)}(0),\dots,\pi^{(2)}(9)\bigr)
          = \bigl[1.00, 0.99, 0.97, 0.94, 0.90, 0.80, 0.74, 0.67, 0.59, 0.50\bigr].
\]
In both cases, the ranking of the classes is preserved.
\end{example}

\section{Constraints induced by a possibility distribution}
\label{sec:prob-constraints}

In this section, we show how to characterize using linear constraints the set $\mathcal F^{\mathrm{box}}$ 
of probability distributions that are compatible with a given
normalized possibility distribution $\pi^{\mathrm{full}}$ on a finite set of
classes $\mathcal{Y}$. 
The set $\mathcal F^{\mathrm{box}}$ is
obtained by combining two types of constraints. First, we impose the dominance
requirements induced by $\pi^{\mathrm{full}}$: letting $\Pi^{\mathrm{full}}$ and
$N^{\mathrm{full}}$ denote the possibility and necessity measures induced by
$\pi^{\mathrm{full}}$, an admissible probability measure $P$ must satisfy
$N^{\mathrm{full}}(A)\le P(A)\le \Pi^{\mathrm{full}}(A)$ for every event
$A\subseteq\mathcal{Y}$. In particular, if $\pi^{\mathrm{full}}(y)=0$ then
$\Pi^{\mathrm{full}}(\{y\})=0$, so the upper bound $P(\{y\})\le
\Pi^{\mathrm{full}}(\{y\})$ forces $P(\{y\})=0$.
Second, we add linear shape constraints that
preserve the ordering carried by $\pi$ on $Y$: for any $k,k'\in\{1,\dots,n\}$, if $\pi_k\ge \pi_{k'}$, then admissible probabilities $p$ must satisfy $p_k\ge p_{k'}$.

The characterization of $\mathcal F^{\mathrm{box}}$ proceeds as follows. We first show how the dominance constraints
$N^{\mathrm{full}}(A)\le P(A)\le \Pi^{\mathrm{full}}(A)$ can be modeled on $Y$ by  a finite
family of linear inequalities on the probability vector $p$
(Proposition~\ref{prop:dominance}). We then introduce the shape constraints and
define $\mathcal F^{\mathrm{box}}$ using~\eqref{eq:fbox}. We show that
$\mathcal F^{\mathrm{box}}$ is non-empty and forms a closed convex subset of the
probability simplex (Propositions~\ref{prop:f-box} and~\ref{prop:Fbox-convex-closed}).
Finally, we express $\mathcal F^{\mathrm{box}}$ as an intersection of simple
constraint sets, see~\eqref{eq:fboxintersection}; this last representation will be used
in the next section.

\subsection{Constraints as linear inequalities}

\subsubsection{Notations and preliminaries}

\noindent For any $y\in\mathcal{Y}$ with $\pi^{\mathrm{full}}(y)=0$ we have
\[
\Pi^{\mathrm{full}}(\{y\}) = \pi^{\mathrm{full}}(y) = 0,
\qquad
\Pi^{\mathrm{full}}(\{y\}^c) = 1,
\qquad
N^{\mathrm{full}}(\{y\}) = 1-\Pi^{\mathrm{full}}(\{y\}^c) = 0,
\]
where $(\cdot)^c$ denotes complement in $\mathcal{Y}$.
Hence the compatibility requirement
\[
N^{\mathrm{full}}(A) \le P(A) \le \Pi^{\mathrm{full}}(A),
\qquad A\subseteq\mathcal{Y},
\]
implies $P(\{y\})=0$ whenever $\pi^{\mathrm{full}}(y)=0$: any probability measure compatible with $\pi^{\mathrm{full}}$ is supported on $Y$.

It is therefore enough to consider
$
Y=\{1,\dots,n\}
$ only.

We write $\pi:Y\to(0,1]$ for the restriction of $\pi^{\mathrm{full}}$ to $Y$ and, with a slight abuse of notation, identify it with the vector
\[
\pi=(\pi_1,\dots,\pi_n)\in (0,1]^n,\qquad \pi_k:=\pi(k),\quad \max_k \pi_k = 1.
\]

Let  $\sigma$ be a permutation of $\{1,\dots,n\}$ that sorts $\pi$ in
nonincreasing order:
\[
\pi_{\sigma(1)} \ \ge\ \pi_{\sigma(2)} \ \ge\ \cdots \ \ge\ \pi_{\sigma(n)} > 0.
\]
We set $\tilde\pi \in (0,1]^n$ to be the possibility distribution defined by
$\tilde\pi_r := \pi_{\sigma(r)}$ for $r=1,2,\dots,n$, and we define $\tilde\pi_{n+1} := 0$ (introduced for notational convenience only).

For each $r=1,\dots,n$, define the ``top–$r$'' index set
\[
A_r:=\{\sigma(1),\dots,\sigma(r)\},
\]
so that $A_1=\{\sigma(1)\}$ and $A_n=Y$. We also set $A_0:=\emptyset$ (so that $A_0^c=Y$). We denote by
\[
A_r^{c}:=Y\setminus A_r=\{\sigma(r+1),\dots,\sigma(n)\}
\]
the complement of $A_r$ in $Y$. By construction,
\[
\tilde\pi_{r+1} = \max_{j\in A_r^{c}} \pi_j,
\qquad r=1,\dots,n \quad \text{with the convention} \quad \tilde\pi_{n+1} = \max_{\emptyset} \pi_j = 0.
\]

We represent probability distributions on $Y$ by vectors $p=(p_1,\dots,p_n)$ in the probability simplex
\begin{equation}\label{eq:deltansimplex}
  \Delta_n
:= \Bigl\{\,p\in\mathbb{R}^n \,\Bigm|\, p_k\ge 0,\ \sum_{k=1}^n p_k = 1\Bigr\}.
\end{equation}
For $p\in\Delta_n$, we denote by $P$ the associated probability measure on $Y$, i.e.,
$P(A)=\sum_{k\in A}p_k$ for $A\subseteq Y$.

\subsubsection{Dominance constraints}
\begin{restatable}[Equivalent reformulation of Proposition 2 in \cite{delgado1987concept}]{proposition}{PropDominance}
\label{prop:dominance}

Let $p$ denote a probability distribution on $Y=\{1,\dots,n\}$. The 
$n{-}1$ nested subset constraints induced by the given normalized possibility
distribution $\pi$,
\begin{equation}\label{eq:nestedsubsetscons}
    \sum_{k\in A_r} p_k \ \ge\ 1-\tilde\pi_{r+1},
\qquad r=1,\dots,n-1,
\end{equation}
are necessary and sufficient to enforce $N(A) \le P(A)\le \Pi(A)$ for all
$A\subseteq Y$, where $N$ and $\Pi$ are the necessity and possibility measures
associated with $\pi$.
\end{restatable}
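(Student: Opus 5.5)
The plan is to first reduce the two-sided family of inequalities to the lower bounds alone, and then reduce these to the nested sets $A_r$.

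\textbf{Step 1: discard the upper bounds.} By duality $\Pi(A)=1-N(A^c)$ and $P(A)=1-P(A^c)$. Hence $P(A)\le\Pi(A)$ is equivalent to $N(A^c)\le P(A^c)$. As $A$ ranges over all subsets of $Y$, so does $A^c$. Therefore the condition ``$N(A)\le P(A)\le\Pi(A)$ for all $A$'' is equivalent to ``$N(A)\le P(A)$ for all $A\subseteq Y$''.

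\textbf{Step 2: compute $N$ on the nested sets.} From $N(A)=\min_{u\notin A}(1-\pi_u)=1-\max_{u\notin A}\pi_u$ we get, for $r=1,\dots,n$,
\[
N(A_r)=1-\tilde\pi_{r+1}.
\]
This holds by the identity $\tilde\pi_{r+1}=\max_{j\in A_r^c}\pi_j$ recorded above. It gives $N(A_n)=N(Y)=1$, and also $N(A_0)=N(\emptyset)=0$.

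\textbf{Step 3: necessity.} The constraints for $r=1,\dots,n-1$ are simply the instances $A=A_r$ of $N(A)\le P(A)$, so they are necessary.

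\textbf{Step 4: sufficiency.} Assume \eqref{eq:nestedsubsetscons} holds and let $A\subseteq Y$ be arbitrary. The key reduction is to find an index $r$ with
\[
A_r\subseteq A \quad\text{and}\quad N(A)=N(A_r).
\]
Monotonicity of $P$ then gives $P(A)\ge P(A_r)\ge N(A_r)=N(A)$.

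To construct $r$, I would proceed in three cases.
\begin{itemize}
\item If $A=Y$, the inequality is trivial since $P(Y)=1=N(Y)$.
\item Otherwise, let $m:=\max_{u\notin A}\pi_u>0$, so that $N(A)=1-m$. Define $r$ as the largest index such that $\tilde\pi_r>m$ (take $r=0$ if there is none). Every $u$ with $\pi_u>m$ lies in $A$, and $A_r$ consists exactly of the elements with $\pi_{\sigma(s)}>m$, so $A_r\subseteq A$.
\item Furthermore $\tilde\pi_{r+1}\le m$. In fact $\tilde\pi_{r+1}=m$: some element outside $A$ attains the value $m$, and all elements ranked after position $r$ have value at most $m$. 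Hence $N(A_r)=1-m=N(A)$. If $r=0$, then $N(A)=1-\tilde\pi_1=0$ because $\pi$ is normalized, and the inequality holds trivially.
\end{itemize}

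\textbf{Main obstacle.} The delicate point is handling ties in $\pi$. The choice of $\sigma$ among tied elements is arbitrary, so one cannot hope to embed $A$ into the chain at a tied position. Choosing $r$ via the strict inequality $\tilde\pi_r>m$ sidesteps this, and I expect this to be the main thing to verify carefully. One must also check that $r\le n-1$ whenever $A\neq Y$, so that only the listed constraints are used. This holds because $m\ge\tilde\pi_n$ and the inequality defining $r$ is strict.
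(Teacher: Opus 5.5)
Your proof is correct and is essentially the paper's argument in dual form. The paper proves the upper bounds $P(A)\le\Pi(A)$ by taking $s$ as the first position of $A$ in the $\sigma$-order, so that $A\subseteq A_{s-1}^c$ and $\Pi(A)=\tilde\pi_s=\Pi(A_{s-1}^c)$, and then gets the lower bounds by duality; you instead dualize first and embed a prefix $A_r$ into $A$ with $N(A_r)=N(A)$. The ties you flag as the main obstacle are in fact harmless: choosing $r:=\min\{t:\sigma(t)\notin A\}-1$ (the mirror of the paper's $s$) directly gives $A_r\subseteq A$ and $\tilde\pi_{r+1}=\max_{u\notin A}\pi_u$, so your strict-threshold construction works but is not needed.
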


See proof in Subsection \ref{subsec:proof:propdominance}.

\begin{corollary}\label{cor:dominance-delgado-form}
For each $r=1,\dots,n-1$, the dominance constraint
\[
\sum_{k\in A_r} p_k \ \ge\ 1-\tilde\pi_{r+1}
\]
is equivalent to
\[
\sum_{k\in A_r^{c}} p_k \ \le\ \tilde\pi_{r+1}.
\]
Define the reversed possibility levels and the corresponding reversed
probabilities by
\[
\check\pi_i := \tilde\pi_{\,n-i+1},
\qquad
\check p_i := p_{\sigma(n-i+1)},
\qquad i=1,\dots,n,
\]
so that $\check\pi_1=\tilde\pi_n$, $\check\pi_n=\tilde\pi_1=1$, and
\[
0 \le \check\pi_1 \le \check\pi_2 \le \cdots \le \check\pi_n = 1.
\]
Then the dominance constraints (\ref{eq:nestedsubsetscons}) can be written in the form introduced by
\cite{delgado1987concept} and used in \cite{lienen2023conformal}:
\begin{equation}\label{eq:delgadoreversedform}
    \sum_{k=1}^{i} \check p_k \ \le\ \check\pi_i,
\qquad i=1,\dots,n.
\end{equation}
\end{corollary}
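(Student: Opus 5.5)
The corollary is a direct rewriting of the constraints in Proposition~\ref{prop:dominance}, and I would prove it in two short steps.

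\textbf{First equivalence.} Since $p\in\Delta_n$, we have $\sum_{k\in A_r}p_k+\sum_{k\in A_r^c}p_k=\sum_{k=1}^n p_k=1$, because $A_r$ and $A_r^c$ partition $Y$. Substituting $\sum_{k\in A_r}p_k = 1-\sum_{k\in A_r^c}p_k$ into \eqref{eq:nestedsubsetscons} gives $1-\sum_{k\in A_r^c}p_k\ge 1-\tilde\pi_{r+1}$, that is, $\sum_{k\in A_r^c}p_k\le\tilde\pi_{r+1}$. Every step is reversible, so this is an equivalence for each $r$.

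\textbf{Reindexing.} Set $i:=n-r$. As $r$ ranges over $1,\dots,n-1$, the index $i$ ranges over $1,\dots,n-1$. We have $A_r^c=\{\sigma(r+1),\dots,\sigma(n)\}=\{\sigma(n-i+1),\dots,\sigma(n)\}$. Under $\check p_k=p_{\sigma(n-k+1)}$, the indices $k=1,\dots,i$ correspond exactly to $\sigma(n),\sigma(n-1),\dots,\sigma(n-i+1)$. Hence
\[
\sum_{k\in A_r^c}p_k=\sum_{k=1}^{i}\check p_k .
\]
On the right-hand side, $\tilde\pi_{r+1}=\tilde\pi_{n-i+1}=\check\pi_i$. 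So the $n-1$ constraints become $\sum_{k=1}^i\check p_k\le\check\pi_i$ for $i=1,\dots,n-1$.

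\textbf{The case $i=n$.} Here the constraint reads $\sum_{k=1}^n\check p_k=1\le\check\pi_n=\tilde\pi_1=1$. It holds automatically for every $p\in\Delta_n$, so adding it to the family does not change the feasible set. This gives exactly \eqref{eq:delgadoreversedform} for $i=1,\dots,n$.

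\textbf{Monotonicity of $\check\pi$.} The chain $0\le\check\pi_1\le\cdots\le\check\pi_n=1$ follows because $\tilde\pi$ is nonincreasing with $\tilde\pi_1=\max_k\pi_k=1$, and reversing the order makes it nondecreasing.

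The only place needing care is the index bookkeeping between $r$, $i$ and $\sigma$, in particular the boundary cases: $r=n-1$, where $A_{n-1}^c=\{\sigma(n)\}$, corresponds to $i=1$; and $i=n$ is the added trivial constraint. There is no real mathematical obstacle.
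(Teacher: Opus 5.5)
Your proof is correct and follows the paper's argument. The paper's one-line proof is the same reindexing $i=n-r$, giving $\sum_{k=1}^{i}\check p_k=\sum_{k\in A_{n-i}^{c}}p_k\le\tilde\pi_{n-i+1}=\check\pi_i$; it absorbs the case $i=n$ through the convention $A_0=\emptyset$ (so $A_0^c=Y$ and $\tilde\pi_1=1$) instead of treating it separately, and it leaves implicit the complement step that you spell out.
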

\begin{proof}
For each $i\in\{1,2, \dots, n\}$, one can easily check:
\[\sum_{k=1}^i \, \check p_k = \sum_{k\in A_{n - i}^c}\, p_k  \le \tilde \pi_{n - i + 1} = \check\pi_i.\]
\end{proof}

The previous proposition shows that the nested subset constraints in (\ref{eq:nestedsubsetscons}) are exactly the
constraints needed for probabilistic compatibility with $\pi$: they ensure that
every event $A$ receives a probability $P(A)$ lying between its necessity
$N(A)$ and its possibility $\Pi(A)$. However, these inequalities only constrain
the cumulative sums on the sets $A_r$ and do not determine the individual
values $p_{\sigma(1)},\dots,p_{\sigma(n)}$. Many different probability vectors
$p$ satisfy all bounds $\sum_{k\in A_r} p_k \ge 1-\tilde\pi_{r+1}$,
$r=1,\dots,n-1$, and thus are compatible with $\pi$, but have different shapes
along the $\pi$-order.

\begin{example}\label{ex:dominance-not-shape}
Consider $n=3$ and the normalized possibility distribution
\[
\pi=(\pi_1,\pi_2,\pi_3)=(1,\,0.51,\,0.50).
\]
Since $\pi_1>\pi_2>\pi_3$, the $\pi$-order is the identity (i.e., $\sigma=\mathrm{id}$), hence $\tilde\pi=\pi$ and the induced nested sets are $A_1=\{1\}$ and $A_2=\{1,2\}$.
Thus the dominance constraints \eqref{eq:nestedsubsetscons} are
\[
p_1 \ \ge\ 1-\tilde\pi_2 = 0.49,
\qquad
p_1+p_2 \ \ge\ 1-\tilde\pi_3 = 0.50,
\]
with $p=(p_1,p_2,p_3)\in\Delta_3$.
We now rewrite the same constraints in the reversed form used in Corollary~\ref{cor:dominance-delgado-form}; see \eqref{eq:delgadoreversedform}.
Define the reversed possibility levels
\[
\check\pi:=(\check\pi_1,\check\pi_2,\check\pi_3):=(\tilde\pi_3,\tilde\pi_2,\tilde\pi_1)=(\pi_3,\pi_2,\pi_1),
\]
so that $0<\check\pi_1\le \check\pi_2\le \check\pi_3=1$.
Given $p\in\Delta_3$, define its reversed coordinate vector $\check p$ by
\[
\check p:=(\check p_1,\check p_2,\check p_3):=(p_3,p_2,p_1).
\]
For $i=1,2,3$, let
$
s_i(\check p):=\sum_{k=1}^i \check p_k
$
denote the partial sums. Then \eqref{eq:delgadoreversedform} is
\[
s_1(\check p)\le \check\pi_1,
\qquad
s_2(\check p)\le \check\pi_2,
\]
i.e.,
\[
p_3\le \pi_3,
\qquad
p_3+p_2\le \pi_2,
\]
which is equivalent to the two dominance constraints above (since $p_1+p_2+p_3=1$).\\

\medskip
\noindent  $\bullet$ Situation 1: \emph{As a solution of the dominance constraints, one can obtain a probability distribution such that the most probable class is not the most possible one.}\\
Indeed, let
\[
p^{(a)}=(0.49,\,0.50,\,0.01)\in\Delta_3.
\]
We check \eqref{eq:nestedsubsetscons}:
\[
p^{(a)}_1 = 0.49 \ge 0.49,
\qquad
p^{(a)}_1+p^{(a)}_2 = 0.49+0.50 = 0.99 \ge 0.50.
\]
Equivalently, in the reversed form \eqref{eq:delgadoreversedform}, we have
\[
\check p^{(a)}=(0.01,\,0.50,\,0.49),
\qquad
s_1(\check p^{(a)})=0.01 \le \check\pi_1=0.50,
\qquad
s_2(\check p^{(a)})=0.01+0.50=0.51 \le \check\pi_2=0.51.
\]
Hence $p^{(a)}$ satisfies the dominance constraints.
However, the unique maximizers are
\[
\arg\max_{k\in\{1,2,3\}} \pi_k = \{1\},
\qquad
\arg\max_{k\in\{1,2,3\}} p^{(a)}_k = \{2\},
\]
so the identity of the top class is not preserved.\\

\medskip
\noindent $\bullet$ Situation 2: \emph{As a solution of the dominance constraints, one can obtain a probability distribution such that the ordering between the second and third classes is not preserved.}\\
Indeed, let
\[
p^{(b)}=(0.49,\,0.01,\,0.50)\in\Delta_3.
\]
We check \eqref{eq:nestedsubsetscons}:
\[
p^{(b)}_1 = 0.49 \ge 0.49,
\qquad
p^{(b)}_1+p^{(b)}_2 = 0.49+0.01 = 0.50 \ge 0.50.
\]
Equivalently, in the reversed form \eqref{eq:delgadoreversedform}, we have
\[
\check p^{(b)}=(0.50,\,0.01,\,0.49),
\qquad
s_1(\check p^{(b)})=0.50 \le \check\pi_1=0.50,
\qquad
s_2(\check p^{(b)})=0.50+0.01=0.51 \le \check\pi_2=0.51.
\]
Hence $p^{(b)}$ satisfies the dominance constraints.
However, the possibility ordering among the last two classes is
\[
\pi_2 = 0.51 > 0.50 = \pi_3,
\]
while the probability ordering under $p^{(b)}$ is reversed:
\[
p^{(b)}_2 = 0.01 < 0.50 = p^{(b)}_3.
\]
\medskip
Both $p^{(a)}$ and $p^{(b)}$ satisfy the same dominance constraints induced by $\pi$ (equivalently, $s_i(\check p)\le \check\pi_i$ for $i=1,2$; see \eqref{eq:delgadoreversedform}), yet one changes the top class (Situation~1) and the other reverses the order of classes $2$ and $3$ (Situation~2). This shows that the dominance constraints enforce compatibility but do not preserve the shape of $\pi$.
\end{example}

Such situations are unexpected. Indeed, in addition to mere compatibility, it is often desirable (for instance, for multi-class classification tasks) that the probability
vector preserves the qualitative structure of $\pi$. We
would like the same pattern of ties and strict inequalities to hold for $\pi$ and $p$, in the
sense that
\[
\text{ from }\tilde\pi_r \ge \tilde\pi_{r+1}
\text{ we want to have }
p_{\sigma(r)} \ge p_{\sigma(r+1)},
\qquad r=1,\dots,n-1,
\]
and, more specifically, strict drops of $\tilde\pi_r$ should correspond to
noticeable drops of $p_{\sigma(r)}$, while equal levels of $\tilde\pi_r$ should
lead to (approximately) tied probabilities.  This motivates the introduction of
explicit constraints on adjacent differences in the $\pi$-order, i.e.,\ shape
constraints of the form
\[
p_{\sigma(r)} - p_{\sigma(r+1)} \ \ge\ \underline{\delta}_r,
\qquad r=1,\dots,n-1,
\]
with gap parameters $(\underline{\delta}_r)$ elicited from the structure of $\pi$.

\subsubsection{Antipignistic reverse mapping}
\noindent To define such constraints, we reuse the bijective probability–possibility transformation called ``Antipignistic'' and reminded in Section~\ref{sec:ppt}. From the normalized possibility distribution $\tilde \pi$ on $Y$ where we have for all $r \in \{1,2,\cdots,n\}$, $\tilde\pi_{r} > 0$ and, by convention, $\tilde\pi_{n+1}=0$, we compute the antipignistic probability distribution $ \dot{p}$ on $Y$ which is defined by (reminded in (\ref{eq:p_i})):
\begin{equation}\label{eq:ppoint}
    \dot{p}_{\sigma(r)} = \sum_{j=r}^n \frac{\tilde\pi_j-\tilde\pi_{j+1}}{j}
\qquad r=1,\dots,n,
\end{equation}

The probability distribution $\dot p\in \Delta_n$ satisfies the following properties:
\begin{restatable}{lemma}{lemmaPropAntipignisticProb}
\label{lemma:lemmaPropAntipignisticProb}\mbox{}
\begin{enumerate}
    \item For all  $i\in \{1, 2, \dots, n\}$, we have $\dot p_{i} > 0$, i.e., $\dot p\in \Delta_n\cap\mathbb{R}^n_{++}$.
    \item For all  $r\in \{1, 2, \dots, n - 1\}$, we have $\dot p_{\sigma(r)} - \dot p_{\sigma(r + 1)}  = \dfrac{1}{r}(\tilde\pi_r - \tilde\pi_{r + 1})$. Therefore the following equivalence holds: $\tilde\pi_r \ge \tilde\pi_{r+1}
\Longleftrightarrow
\dot p_{\sigma(r)} \ge \dot p_{\sigma(r+1)}$ for $r=1,\dots,n-1$. 
    \item For all  $r\in \{1, 2, \dots, n - 1\}$, we have $\sum_{i\in A_r}\, \dot p_i \ge 1 - \tilde\pi_{r + 1}$.
\end{enumerate}
\end{restatable}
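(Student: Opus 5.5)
The plan is to work entirely in the sorted coordinates $\dot p_{\sigma(r)}$, $r=1,\dots,n$, and read each item off the closed formula \eqref{eq:ppoint} using the telescoping structure of the differences $\tilde\pi_j-\tilde\pi_{j+1}$. Since $\sigma$ is a bijection, statements about all $\dot p_i$ are the same as statements about all $\dot p_{\sigma(r)}$, and $A_r=\{\sigma(1),\dots,\sigma(r)\}$ gives $\sum_{i\in A_r}\dot p_i=\sum_{s=1}^r \dot p_{\sigma(s)}$.

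For item 1, every summand $(\tilde\pi_j-\tilde\pi_{j+1})/j$ in \eqref{eq:ppoint} is nonnegative because $\tilde\pi$ is nonincreasing and $\tilde\pi_{n+1}=0$. The sum for index $r$ always contains the term $j=n$, which equals $\tilde\pi_n/n>0$ since all $\tilde\pi_r>0$. Hence $\dot p_{\sigma(r)}>0$ for every $r$. That $\dot p\in\Delta_n$ follows by exchanging sums:
\[
\sum_{r=1}^n\dot p_{\sigma(r)}=\sum_{j=1}^n\frac{\tilde\pi_j-\tilde\pi_{j+1}}{j}\cdot j=\tilde\pi_1-\tilde\pi_{n+1}=1,
\]
using $\tilde\pi_1=\max_k\pi_k=1$. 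Alternatively, one can simply invoke the positivity lemma of Section~\ref{sec:ppt}.

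For item 2, subtracting the formulas for $r$ and $r+1$ leaves only the $j=r$ term, so $\dot p_{\sigma(r)}-\dot p_{\sigma(r+1)}=(\tilde\pi_r-\tilde\pi_{r+1})/r$. Since $r>0$, the two differences have the same sign, which gives the stated equivalence (indeed also with strict inequalities and equalities).

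Item 3 is the only step needing care, and I expect it to be the main obstacle. I would compute the partial sum $S_r:=\sum_{s=1}^r\dot p_{\sigma(s)}$ by again swapping the order of summation. The term $j$ appears in $\dot p_{\sigma(s)}$ for every $s\le\min(j,r)$, so
\[
S_r=\sum_{j=1}^{r}(\tilde\pi_j-\tilde\pi_{j+1})+\sum_{j=r+1}^{n}\frac{r}{j}(\tilde\pi_j-\tilde\pi_{j+1}).
\]
The first sum telescopes to $1-\tilde\pi_{r+1}$. The second sum is nonnegative because each difference is nonnegative, and therefore $S_r\ge 1-\tilde\pi_{r+1}$. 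Equivalently, this is the classical fact from \cite{dubois1983unfair} recalled in Section~\ref{sec:ppt} that the antipignistic $p$ satisfies $N(A)\le P(A)$, combined with Proposition~\ref{prop:dominance}. I would, however, give the direct computation above, since it is self-contained.
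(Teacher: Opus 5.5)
Your proposal is correct and follows essentially the same route as the paper. The paper also gets positivity from the strictly positive $j=n$ term $\tilde\pi_n/n$ (phrased as a contradiction), item 2 from the one-term difference, and item 3 by interchanging the summation; the only differences are that the paper simply drops the nonnegative terms with $j>r$ where you compute them exactly, and that your check $\sum_r \dot p_{\sigma(r)}=1$ is something the paper takes for granted from the background section.
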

See proof in Subsection \ref{subsec:proof:lemmaPropAntipignisticProb}.

The adjacent differences of $\dot{p}$ (which depend only on $\tilde \pi$) are
\begin{equation}\label{eq:adjdiff}
    \dot{g}_r := 
 \frac{\tilde\pi_r-\tilde\pi_{r+1}}{r}=\dot p_{\sigma(r)} - \dot p_{\sigma(r + 1)},
\qquad r=1,2,\dots,n-1.
\end{equation}

Hence $\dot{g}_r=0$ when $\tilde\pi_r=\tilde\pi_{r+1}$ and $\dot{g}_r>0$ when $\tilde\pi_r>\tilde\pi_{r+1}$. Thus, we have $\dot p_{\sigma(1)} \geq \dot  p_{\sigma(2)} \geq \dots \geq \dot  p_{\sigma(n)} > 0$, consistently with
$1 = \tilde\pi_1 \ge \tilde\pi_2 \ge \cdots \ge \tilde\pi_n > 0$.
Moreover,  
we have:
\[
0 \le \tilde\pi_r - \tilde\pi_{r+1} < 1
\qquad\text{for all } r=1,2,\dots,n-1.
\]
It follows that
\[
0 \le \dot{g}_r = \frac{\tilde\pi_r-\tilde\pi_{r+1}}{r} < 1
\qquad\text{for all } r=1,2,\dots,n-1.
\]
Thus every gap  $\dot{g}_r$ lies in the interval $[0,1)$.

\subsubsection{Shape gaps associated with the possibility distribution \texorpdfstring{$\pi$}{pi}}

From the nonincreasing sequence $(\tilde\pi_r)_{r=1}^n$ we distinguish indices where two consecutive values coincide and indices where a strict decrease occurs. This leads to the two index sets:
\begin{equation}\label{eq:eqstrict}
\mathcal R_{\mathrm{equal}}:=\{\,r\in\{1,\dots,n-1\}:\ \tilde\pi_r=\tilde\pi_{r+1}\,\},\qquad
\mathcal R_{\mathrm{strict}}:=\{\,r\in\{1,\dots,n-1\}:\ \tilde\pi_r>\tilde\pi_{r+1}\,\},    
\end{equation}

so that $\mathcal R_{\mathrm{equal}}\cup\mathcal R_{\mathrm{strict}}=\{1,\dots,n-1\}$ and $\mathcal R_{\mathrm{equal}}\cap\mathcal R_{\mathrm{strict}}=\varnothing$.

Define the enforced lower gaps for $r = 1,2,\dots,n-1$:
\begin{equation}\label{eq:underlinedeltar}
    \underline{\delta}_r :=
\begin{cases}
0, & r\in\mathcal R_{\mathrm{equal}},\\
\text{any value in }(0,\,\dot{g}_r] & r\in\mathcal R_{\mathrm{strict}}
\end{cases},
\end{equation}
\noindent where $\dot{g}_r$ is defined in (\ref{eq:adjdiff}).
\subsubsection{Set  of admissible probability vectors}

We now collect in a single set the probability vectors that are
(i) compatible with the possibility distribution $\pi$ in the sense of
the dominance constraints, and (ii) respect the qualitative shape of $\pi$
through the enforced gaps $(\underline{\delta}_r)_{r=1}^{n-1}$ in the
$\pi$-order.

Recall $A_r:=\{\sigma(1),\dots,\sigma(r)\}$ for $r=1,\dots,n-1$. Define
\[
\mathcal F
:= \Bigl\{\, p\in\Delta_n \ \Big|\ 
\sum_{k\in A_r} p_k \ \ge\ 1-\tilde\pi_{r+1}\ \ (r=1,\dots,n-1),\quad
p_{\sigma(r)}-p_{\sigma(r+1)} \ \ge\ \underline{\delta}_r\ \ (r=1,\dots,n-1)
\Bigr\}.
\]

As a direct consequence of Lemma \ref{lemma:lemmaPropAntipignisticProb} we have:
\begin{proposition}\label{prop:f-lower}
For any $(\underline{\delta}_r)_{r=1}^{n-1}$ chosen as in (\ref{eq:underlinedeltar}), the antipignistic probability distribution $\dot{p}$ satisfies $\dot{p}\in\mathcal F \cap \mathbb{R}^n_{++}$, thus $\mathcal F\neq\emptyset$.
\end{proposition}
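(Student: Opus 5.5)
The plan is to check directly that $\dot p$ satisfies each defining condition of $\mathcal F$, using the three items of Lemma~\ref{lemma:lemmaPropAntipignisticProb}. Nonemptiness of $\mathcal F$ then follows at once.

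\emph{Membership in $\Delta_n\cap\mathbb{R}^n_{++}$.} Item~1 of Lemma~\ref{lemma:lemmaPropAntipignisticProb} gives $\dot p\in\Delta_n\cap\mathbb{R}^n_{++}$. Strict positivity is stated explicitly there. Membership in $\Delta_n$ is part of how $\dot p$ was introduced through \eqref{eq:ppoint}. If one wants to verify normalization independently, exchange the order of summation:
\[
\sum_{r=1}^n\sum_{j=r}^n \frac{\tilde\pi_j-\tilde\pi_{j+1}}{j}
=\sum_{j=1}^n(\tilde\pi_j-\tilde\pi_{j+1})
=\tilde\pi_1-\tilde\pi_{n+1}=1 .
\]

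\emph{Dominance constraints.} Item~3 of the same lemma states exactly that $\sum_{k\in A_r}\dot p_k\ge 1-\tilde\pi_{r+1}$ for $r=1,\dots,n-1$. These are the nested subset constraints that appear in the definition of $\mathcal F$.

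\emph{Shape constraints.} By item~2 of the lemma, $\dot p_{\sigma(r)}-\dot p_{\sigma(r+1)}=\dot g_r$ as in \eqref{eq:adjdiff}. We split according to the partition \eqref{eq:eqstrict}.
\begin{itemize}
\item If $r\in\mathcal R_{\mathrm{equal}}$, then $\tilde\pi_r=\tilde\pi_{r+1}$. Hence $\dot g_r=0=\underline\delta_r$, and the constraint holds with equality.
\item If $r\in\mathcal R_{\mathrm{strict}}$, then by \eqref{eq:underlinedeltar} the gap $\underline\delta_r$ is chosen in $(0,\dot g_r]$. Hence $\dot p_{\sigma(r)}-\dot p_{\sigma(r+1)}=\dot g_r\ge\underline\delta_r$.
\end{itemize}
This is precisely why the admissible interval for $\underline\delta_r$ in \eqref{eq:underlinedeltar} is capped at $\dot g_r$. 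It also makes the result hold for every admissible choice of the gaps.

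Combining the three parts gives $\dot p\in\mathcal F\cap\mathbb{R}^n_{++}$, and hence $\mathcal F\neq\emptyset$. There is no real obstacle, since everything reduces to Lemma~\ref{lemma:lemmaPropAntipignisticProb}. The only point requiring care is the strict case of the shape constraints, where one must use that the upper endpoint of $\underline\delta_r$ is exactly $\dot g_r$.
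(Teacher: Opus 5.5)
Your proof is correct and follows the paper's approach: the paper states the proposition as a direct consequence of Lemma~\ref{lemma:lemmaPropAntipignisticProb}, which is exactly what you do by combining items 1--3 with the choice $\underline\delta_r=0$ on $\mathcal R_{\mathrm{equal}}$ and $\underline\delta_r\in(0,\dot g_r]$ on $\mathcal R_{\mathrm{strict}}$. Your exchange-of-summation check that $\sum_k \dot p_k = 1$ is a useful addition, since the paper's proof of item 1 only establishes strict positivity and never verifies normalization.
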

\qed 

We may want to prevent the highest probability from becoming too large.
The set $\mathcal F$ imposes no upper constraint on $p_{\sigma(1)}$; it only enforces lower subset bounds and lower gap constraints. We
add explicit upper bounds on consecutive differences.

To impose  upper-gap constraints $p_{\sigma(r)}-p_{\sigma(r+1)} \le \overline{\delta}_r$, we choose
upper gaps $\overline{\delta}_r$ such that
\begin{equation}\label{eq:overlinedeltar}
    \dot g_r \le \overline{\delta}_r < 1
\text{ for all } r\in \mathcal R_{\mathrm{strict}} \quad \text{ and } \quad \overline{\delta}_r  = 0 \text{ for all } r\in \mathcal R_{\mathrm{equal}}.
\end{equation}

The restriction $\overline{\delta}_r<1$ is natural, since for any $p\in\Delta_n$ we always have
$p_{\sigma(r)}-p_{\sigma(r+1)}\le 1$, so taking $\overline{\delta}_r\ge 1$ would add no constraint. Since $\underline{\delta}_r \le \dot{g}_r$ by construction, this also implies
$\overline{\delta}_r \ge \underline{\delta}_r$ for all $r$.
Thus, we define
\begin{equation}\label{eq:fbox}
    \mathcal F^{\mathrm{box}}
:= \Bigl\{\, p\in\Delta_n \ \Big|\ 
\sum_{k\in A_r} p_k \ \ge\ 1-\tilde\pi_{r+1},\quad
\underline{\delta}_r \ \le\ p_{\sigma(r)}-p_{\sigma(r+1)}\ \le\ \overline{\delta}_r,\ \ r=1,\dots,n-1
\Bigr\}.
\end{equation}
\begin{example}\label{ex:fbox-eps-and-dotg}
(Example \ref{ex:dominance-not-shape}, cont'ed)\\
We reuse the normalized possibility distribution
$
\pi=(\pi_1,\pi_2,\pi_3)=(1,\,0.51,\,0.50)
$ of Example \ref{ex:dominance-not-shape}, so that $\tilde\pi=\pi$, and the  dominance constraints \eqref{eq:nestedsubsetscons} are
$
p_1 \ \ge\ 1-\tilde\pi_2 = 0.49,$ and $
p_1+p_2 \ \ge\ 1-\tilde\pi_3 = 0.50
$.\\
\noindent We have $\mathcal R_{\mathrm{strict}}=\{1,2\}$ and
$\mathcal R_{\mathrm{equal}}=\varnothing$. The antipignistic reference gaps $ \dot g_r:=\frac{\tilde\pi_r-\tilde\pi_{r+1}}{r},$ for $r=1,2,$
are:
\[
\dot g_1=\frac{1-0.51}{1}=0.49,
\qquad
\dot g_2=\frac{0.51-0.50}{2}=0.005.
\]
\medskip
\noindent
Fix $\varepsilon>0$ such that $0<\varepsilon\le 0.005$ and set
\[
\underline\delta_1=\underline\delta_2=\varepsilon,
\qquad
\overline\delta_1=\dot g_1=0.49,
\qquad
\overline\delta_2=\dot g_2=0.005.
\]
Then the admissible set \eqref{eq:fbox} is
\begin{equation}\label{eq:exfbox}
    \mathcal F^{\mathrm{box}}
=
\Bigl\{p\in\Delta_3\ \Bigm| 
p_1 \ge 0.49, \ \
p_1+p_2 \ge 0.50,\ \
\varepsilon \le p_1-p_2 \le 0.49,\ \
\varepsilon \le p_2-p_3 \le 0.005\}.
\end{equation}
\end{example}

Clearly, we have:
\begin{proposition}\label{prop:f-box}
With the above choices (\ref{eq:underlinedeltar}) and (\ref{eq:overlinedeltar}) of lower and upper gaps $(\underline{\delta}_r,\overline{\delta}_r)_{r=1}^{n-1}$, the antipignistic probability distribution $\dot{p}$ belongs to $\mathcal F^{\mathrm{box}} \cap \mathbb{R}^n_{++}$. Therefore, $\mathcal F^{\mathrm{box}}\neq\emptyset$.
\end{proposition}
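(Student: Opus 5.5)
The plan is to check directly that $\dot p$ meets each defining condition of $\mathcal F^{\mathrm{box}}$ in \eqref{eq:fbox}, relying on Lemma~\ref{lemma:lemmaPropAntipignisticProb} and Proposition~\ref{prop:f-lower}.

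First, Proposition~\ref{prop:f-lower} already gives $\dot p\in\mathcal F\cap\mathbb{R}^n_{++}$. Since $\mathcal F^{\mathrm{box}}$ is $\mathcal F$ with the extra upper-gap constraints, it follows that $\dot p\in\Delta_n$, $\dot p$ is strictly positive, and $\dot p$ satisfies every dominance constraint \eqref{eq:nestedsubsetscons} and every lower-gap constraint $\dot p_{\sigma(r)}-\dot p_{\sigma(r+1)}\ge\underline\delta_r$. For a self-contained argument, these facts are exactly items 1 and 3 of the lemma together with the identity $\dot p_{\sigma(r)}-\dot p_{\sigma(r+1)}=\dot g_r$ from item 2. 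Membership in $\Delta_n$ is the one step not stated explicitly: after exchanging the order of summation, $\sum_{r=1}^n\dot p_{\sigma(r)}=\sum_{j=1}^n(\tilde\pi_j-\tilde\pi_{j+1})=\tilde\pi_1-\tilde\pi_{n+1}=1$. The lower gaps then hold case by case: for $r\in\mathcal R_{\mathrm{equal}}$ we have $\dot g_r=0=\underline\delta_r$, and for $r\in\mathcal R_{\mathrm{strict}}$ we have $\underline\delta_r\le\dot g_r$ by \eqref{eq:underlinedeltar}.

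Next, the upper gaps, again by cases on $r$. If $r\in\mathcal R_{\mathrm{strict}}$, then \eqref{eq:overlinedeltar} gives $\dot g_r\le\overline\delta_r$, so $\dot p_{\sigma(r)}-\dot p_{\sigma(r+1)}=\dot g_r\le\overline\delta_r$. If $r\in\mathcal R_{\mathrm{equal}}$, then $\dot g_r=0=\overline\delta_r$. Hence $\underline\delta_r\le \dot p_{\sigma(r)}-\dot p_{\sigma(r+1)}\le\overline\delta_r$ for all $r$, so $\dot p\in\mathcal F^{\mathrm{box}}\cap\mathbb{R}^n_{++}$, and non-emptiness follows immediately.

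I do not expect a real obstacle. The only point needing care is that the equal-level constraints force $\dot p_{\sigma(r)}=\dot p_{\sigma(r+1)}$ exactly. This works because item 2 of the lemma gives the gap $\dot g_r$ exactly, with $\dot g_r=0$ on $\mathcal R_{\mathrm{equal}}$, rather than only an inequality.
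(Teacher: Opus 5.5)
Your proposal is correct and follows the route the paper takes implicitly. The paper marks this as clear and relies on Lemma~\ref{lemma:lemmaPropAntipignisticProb} together with the gap choices \eqref{eq:underlinedeltar}--\eqref{eq:overlinedeltar}, in particular on the exact identity $\dot p_{\sigma(r)}-\dot p_{\sigma(r+1)}=\dot g_r$ with $\dot g_r=0$ on $\mathcal R_{\mathrm{equal}}$. Your explicit check that $\sum_k \dot p_k=1$ is a useful supplement: item~1 of the lemma already asserts $\dot p\in\Delta_n$, but its proof only establishes positivity.
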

\qed 

Thus, to sum up, if the gap parameters satisfy
\[
\text{for all } r\in\mathcal R_{\mathrm{strict}},\quad
0 < \underline{\delta}_r \le \dot g_r \le \overline{\delta}_r < 1,
\qquad\text{and for all } r\in\mathcal R_{\mathrm{equal}},\quad
\underline{\delta}_r=\overline{\delta}_r=0,
\]
then $\dot p\in \mathcal F^{\mathrm{box}}$.
In general, the family of gaps $(\underline{\delta}_r,\overline{\delta}_r)_{r=1}^{n-1}$
could be chosen independently of $\pi$, and one may then check a posteriori whether the resulting
set $\mathcal F^{\mathrm{box}}$ is non-empty. From a practical point of view, however, the antipignistic
reverse mapping \eqref{eq:p_i} provides a natural reference scale for the gaps: the values $\dot g_r$
are determined by $\pi$ and yield a probability distribution $\dot p$ that is compatible with $\pi$
(in the sense of dominance) and satisfies the shape constraints induced by $\pi$. Furthermore,  the probability distribution $\dot p$ is the center of gravity of the set
$\mathcal{K}(\Pi) = \{p \in \Delta_n \mid \forall A \subseteq Y,\ P(A) \le \Pi(A) \}$
of probability measures dominated by $\Pi$, see \cite[Theorem 1]{dubois1993possibility} and \cite{dubois2006possibility}.
In the framework of imprecise probability, $\mathcal{K}(\Pi)$ is called the credal set induced by the
possibility measure $\Pi$ (see, e.g., \cite{destercke2008unifying} and references therein).\\
Other possibility-to-probability transformations have been studied in the literature and can be used as references for the construction of $\mathcal F^{\mathrm{box}}$ in a similar way (see, e.g., \cite{dubois1993possibility}).

However, note that applying a softmax-type normalization to transform a possibility distribution into a probability distribution does not, in general, preserve the dominance constraints $N(A)\le P(A)\le \Pi(A)$. As a matter of illustration, consider $Y=\{1,2\}$ and the normalized possibility distribution $\pi=(\pi_1,\pi_2)=(1,\,0.2)$. Then $N(\{1\})=1-\Pi(\{2\})=1-\pi_2=0.8$, so the dominance constraint $N(\{1\})\le P(\{1\})$ requires $p_1\ge 0.8$. Now define $p_k=\dfrac{e^{\pi_k}}{e^{\pi_1}+e^{\pi_2}}$ for $k=1,2$. We obtain
$p_1=\dfrac{e}{e+e^{0.2}}\approx 0.69$, which violates $p_1\ge 0.8$.

The next proposition states that the distributions in $\mathcal F^{\mathrm{box}}$ respect the qualitative shape of $\pi$, as expected:

\begin{restatable}{proposition}{PropositionPiP}
\label{proposition:PiP}
Let $p \in \mathcal F^{\mathrm{box}}$ be an admissible probability distribution. Then for any $k,k'\in\{1,\dots,n\}$, we have:
\begin{equation}\label{eq:shape}
\pi_k \ge \pi_{k'} \Longleftrightarrow   p_k \ge p_{k'}.
\end{equation}
\end{restatable}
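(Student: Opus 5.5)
The argument only needs the shape constraints of $\mathcal F^{\mathrm{box}}$; the dominance constraints play no role. Fix $p\in\mathcal F^{\mathrm{box}}$ and set $a_r:=p_{\sigma(r)}$ for $r=1,\dots,n$. Since $\sigma$ is a bijection, every pair $k,k'$ can be written as $k=\sigma(s)$, $k'=\sigma(t)$ with $s,t\in\{1,\dots,n\}$. It therefore suffices to prove
\[
\tilde\pi_s\ge\tilde\pi_t \iff a_s\ge a_t .
\]

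\textbf{Step 1: the case $s=t$.} Both sides hold trivially.

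\textbf{Step 2: the case $s<t$.} Here $\tilde\pi_s\ge\tilde\pi_t$ always holds, because $\tilde\pi$ is nonincreasing. So we must show $a_s\ge a_t$. We telescope:
\[
a_s-a_t=\sum_{r=s}^{t-1}\bigl(a_r-a_{r+1}\bigr)\ \ge\ \sum_{r=s}^{t-1}\underline\delta_r\ \ge 0,
\]
using the lower-gap constraints $p_{\sigma(r)}-p_{\sigma(r+1)}\ge\underline\delta_r\ge0$ from \eqref{eq:fbox} and \eqref{eq:underlinedeltar}.

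\textbf{Step 3: the case $s>t$.} We need $\tilde\pi_s\ge\tilde\pi_t\iff a_s\ge a_t$. Since $\tilde\pi$ is nonincreasing, $\tilde\pi_s\ge\tilde\pi_t$ holds exactly when $\tilde\pi_t=\tilde\pi_s$. That in turn holds exactly when every index $r\in\{t,\dots,s-1\}$ lies in $\mathcal R_{\mathrm{equal}}$, because a chain of nonincreasing values has equal endpoints iff all consecutive steps are equalities.
\begin{itemize}
\item If all $r\in\{t,\dots,s-1\}$ lie in $\mathcal R_{\mathrm{equal}}$, then $\underline\delta_r=\overline\delta_r=0$ by \eqref{eq:underlinedeltar} and \eqref{eq:overlinedeltar}. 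Hence $a_r=a_{r+1}$ for each such $r$, so $a_s=a_t$ and in particular $a_s\ge a_t$.
\item Otherwise some $r_0\in\{t,\dots,s-1\}$ lies in $\mathcal R_{\mathrm{strict}}$. Then
\[
a_t-a_s=\sum_{r=t}^{s-1}(a_r-a_{r+1})\ \ge\ \underline\delta_{r_0}>0,
\]
since all other terms are at least $0$. So $a_s<a_t$, and both sides of the equivalence are false.
\end{itemize}

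\textbf{Where care is needed.} There is no real obstacle; the one delicate point is the tie case. The lower bounds $\underline\delta_r=0$ alone give only $a_r\ge a_{r+1}$. The reverse inequality $a_r\le a_{r+1}$, which forces equal probabilities for tied possibilities, comes from the upper gaps $\overline\delta_r=0$ on $\mathcal R_{\mathrm{equal}}$. Strict positivity of $\underline\delta_r$ on $\mathcal R_{\mathrm{strict}}$ is what rules out ties when the possibility degrees differ.
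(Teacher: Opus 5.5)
Your proof is correct and uses the same three ingredients as the paper's proof. These are telescoping the adjacent differences $p_{\sigma(r)}-p_{\sigma(r+1)}$ with $\underline\delta_r\ge 0$, the upper gaps $\overline\delta_r=0$ forcing ties on $\mathcal R_{\mathrm{equal}}$, and $\underline\delta_r>0$ forcing strict drops on $\mathcal R_{\mathrm{strict}}$; the paper proves each implication by contradiction, while you organize the same argument as a direct trichotomy on the sorted positions ($s<t$, $s=t$, $s>t$), which is slightly cleaner but essentially the same route.
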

See proof in Subsection \ref{subsec:proof:proposition:PiP}.

For the sequel, it is convenient to view $\mathcal{F}^{\mathrm{box}}$ as  the intersection of three families
of closed convex subsets of $\Delta_n$:
\begin{equation}\label{eq:fboxintersection}
    \mathcal{F}^{\mathrm{box}}
=
\Bigl(\bigcap_{s=1}^{n-1} C^{\mathrm{pref}}_s\Bigr)
\cap
\Bigl(\bigcap_{s=1}^{n-1} C^{\mathrm{low}}_s\Bigr)
\cap
\Bigl(\bigcap_{s=1}^{n-1} C^{\mathrm{up}}_s\Bigr),
\end{equation}
where, for $s=1,\dots,n-1$,
\begin{subequations}
    \begin{equation}\label{eq:Cprefsdef}
        C^{\mathrm{pref}}_s
:= \Bigl\{p\in\Delta_n: \sum_{k\in A_s} p_k \ge 1-\tilde\pi_{s+1}\Bigr\},
    \end{equation}
    \begin{equation}\label{eq:Clowdef}
        C^{\mathrm{low}}_s
:= \Bigl\{p\in\Delta_n: p_{\sigma(s)} - p_{\sigma(s+1)} \ge \underline{\delta}_s\Bigr\},
    \end{equation}
    \begin{equation}\label{eq:Cupdef}
        C^{\mathrm{up}}_s
:= \Bigl\{p\in\Delta_n: p_{\sigma(s+1)} - p_{\sigma(s)} \ge -\,\overline{\delta}_s\Bigr\}.
    \end{equation}
\end{subequations}

\begin{proposition}\label{prop:Fbox-convex-closed}
The admissible set $\mathcal F^{\mathrm{box}}$ is a closed convex subset of $\Delta_n$ which contains the probability distribution $\dot{p}$.
\end{proposition}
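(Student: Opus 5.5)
The proof will rest on the representation \eqref{eq:fboxintersection} of $\mathcal F^{\mathrm{box}}$ as a finite intersection of sets of the form $\Delta_n\cap H$, where $H$ is a closed half-space of $\mathbb{R}^n$. I will show that $\Delta_n$ is closed and convex, that each $C^{\mathrm{pref}}_s$, $C^{\mathrm{low}}_s$, $C^{\mathrm{up}}_s$ is therefore closed and convex, and then use stability of both properties under intersection. Membership of $\dot p$ will come from Proposition~\ref{prop:f-box}.

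First I will treat the simplex. By \eqref{eq:deltansimplex}, $\Delta_n$ is the intersection of the $n$ closed half-spaces $\{p: p_k\ge 0\}$ with the hyperplane $\{p:\sum_k p_k=1\}$. Each of these is the preimage of a closed subset of $\mathbb{R}$ under a continuous linear functional, so each is closed and convex, and hence so is $\Delta_n$.

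Next, for each $s$, I will write the three constraint sets as $\Delta_n\cap\{p:\langle a,p\rangle\ge b\}$:
\begin{itemize}
\item for $C^{\mathrm{pref}}_s$, take $a=\mathbf 1_{A_s}$ and $b=1-\tilde\pi_{s+1}$;
\item for $C^{\mathrm{low}}_s$, take $a=e_{\sigma(s)}-e_{\sigma(s+1)}$ and $b=\underline\delta_s$;
\item for $C^{\mathrm{up}}_s$, take $a=e_{\sigma(s+1)}-e_{\sigma(s)}$ and $b=-\overline\delta_s$.
\end{itemize}
A set of the form $\{p:\langle a,p\rangle\ge b\}$ is closed because $p\mapsto\langle a,p\rangle$ is continuous. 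It is convex because, for $\lambda\in[0,1]$, linearity gives $\langle a,\lambda p+(1-\lambda)p'\rangle\ge\lambda b+(1-\lambda)b=b$. So every $C$-set is closed and convex. By \eqref{eq:fboxintersection}, $\mathcal F^{\mathrm{box}}$ is a finite intersection of such sets, and arbitrary intersections preserve closedness and convexity. It is also contained in $\Delta_n$.

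It remains to check that \eqref{eq:fboxintersection} really agrees with the definition \eqref{eq:fbox}. This is immediate once each two-sided gap constraint $\underline\delta_s\le p_{\sigma(s)}-p_{\sigma(s+1)}\le\overline\delta_s$ is split into the lower inequality defining $C^{\mathrm{low}}_s$ and the upper inequality, rewritten as $p_{\sigma(s+1)}-p_{\sigma(s)}\ge-\overline\delta_s$, defining $C^{\mathrm{up}}_s$.

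Finally, $\dot p\in\mathcal F^{\mathrm{box}}$ is exactly Proposition~\ref{prop:f-box}. That proposition in turn relies on Lemma~\ref{lemma:lemmaPropAntipignisticProb}: item~3 gives the dominance constraints, and item~2 gives $\dot g_s$ as the adjacent gap, with $\underline\delta_s\le\dot g_s\le\overline\delta_s$ by the choices \eqref{eq:underlinedeltar} and \eqref{eq:overlinedeltar}. On $\mathcal R_{\mathrm{equal}}$ both bounds are $0=\dot g_s$. The argument has no genuine obstacle; the only point requiring care is the bookkeeping that the intersection representation matches \eqref{eq:fbox}.
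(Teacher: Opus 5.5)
Your proposal is correct and follows the same argument as the paper. The paper's proof simply notes that each $C^{\mathrm{pref}}_s$, $C^{\mathrm{low}}_s$, $C^{\mathrm{up}}_s$ is the preimage of a closed interval under a linear map, restricted to the closed set $\Delta_n$, and takes membership of $\dot p$ from Proposition~\ref{prop:f-box}; you additionally spell out the convexity step and check that \eqref{eq:fboxintersection} agrees with \eqref{eq:fbox}.
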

\begin{proof}
The sets defined in (\ref{eq:Cprefsdef}), (\ref{eq:Clowdef})  and (\ref{eq:Cupdef})  are reciprocal images of closed intervals of $\mathbb{R}$ by linear maps  on  $\mathbb{R}^n$ restricted to the closed set $\Delta_n$.
\end{proof}

\begin{remark}\label{remark:extension}
Although $\mathcal F^{\mathrm{box}}$ is constructed here from a possibility distribution $\pi$,
the proposed approach can be used in the more general case when $\mathcal F^{\mathrm{box}}$ is characterized using a set of linear constraints such that admissible families of probability vectors are defined as an intersection
\[
F=\bigcap_{i=1}^m C_i \subseteq \Delta_n,
\]
where the sets $C_i$ are closed convex sets that  encode nested subset inequalities of the form
$\sum_{k\in A_r} p_k \ge b_r$ (equivalently, $\sum_{k\in A_r^c} p_k \le u_r$),
together with linear shape constraints of the form
$\underline{\delta} \le p_k-p_{k'} \le \overline{\delta}$. Whether or not such constraints are derived from a possibility distribution is irrelevant.
Accordingly, all the projection and optimization results in this article apply to such families $F$ as long as
$F$ is closed, convex, and non-empty.
\end{remark}

\section{Kullback-Leibler projection as a Bregman distance}
\label{sec:bregman-dykstra}

In this section, we study the Kullback-Leibler projection problem (\ref{eq:DKLproblem}). 
We keep the finite set of classes $Y=\{1,\dots,n\}$. We reuse the probability simplex
$
\Delta_n
:= \Bigl\{\,p\in\mathbb{R}^n \,\Bigm|\, p_k \ge 0,\ \sum_{k=1}^n p_k = 1\Bigr\}
$, see (\ref{eq:deltansimplex}).
We consider a normalized possibility distribution
$
\pi=(\pi_1,\dots,\pi_n)\in (0,1]^n
$
on $Y$ such that
$
\pi_k > 0 \quad \text{for all } k=1,\dots,n,
$ and $
\max_{1\le k\le n} \pi_k = 1.
$
In Section~\ref{sec:prob-constraints}, we have shown how this possibility distribution induces a
family of linear constraints: dominance constraints 
and shape constraints.
Collecting all these inequalities defines the admissible set
$\mathcal{F}^{\mathrm{box}}\subseteq\Delta_n$, see (\ref{eq:fbox}), which is non-empty, closed and convex (Proposition \ref{prop:Fbox-convex-closed}).

We now interpret this construction in a multi-class prediction setting where $Y$ is the set of classes.
For a fixed instance, a probabilistic classifier (for example, a neural network
with a softmax output layer) produces a strictly positive probability vector
\[
q=(q_1,\dots,q_n)\in\Delta_n,
\qquad q_k>0 \ \text{for all } k,
\]
where $q_k$ is the predicted probability assigned to class $k$. The
possibilistic information for the same instance is encoded by $\pi$ and $\mathcal{F}^{\mathrm{box}}$ is  the admissible set of probability vectors
that are compatible with $\pi$.

In this section, we perform a correction
step, which consists in replacing $q$ by a distribution $p^\star\in\mathcal{F}^{\mathrm{box}}$
that satisfies all these constraints while remaining as close as possible to $q$
in Kullback-Leibler sense.

The Kullback-Leibler divergence between $p \in \Delta_n$ and $q\in \Delta_n \cap \mathbb{R}^n_{++}$ is
defined by
\begin{equation}\label{eq:KLdiv}
D_{\mathrm{KL}}(p\|q)
= \sum_{k=1}^n p_k \log\frac{p_k}{q_k},    
\end{equation}
with the convention $0\log(0/t)=0$ for $t>0$. As we will see below,  the corrected probability distribution
$p^\star$ is defined as the unique solution of the optimization problem:
\begin{equation}\label{eq:pstar}
p^\star
:= \arg\min_{p\in\mathcal{F}^{\mathrm{box}}} D_{\mathrm{KL}}(p\|q).    
\end{equation}

The aim of this section is to show that $p^\star$ can be computed using  Dykstra's
algorithm \cite{dykstra1983algorithm,dykstra1985iterative} with Bregman projections \cite{bregman1967} associated with the negative entropy function, as in \cite{censor1998dykstra}. We rely on Bauschke et al's works~\cite{bauschke2000dykstras, bauschke1997legendre}, who state Dykstra’s algorithm for Bregman projections and prove its convergence under explicit assumptions; see~\cite[Theorem~3.2]{bauschke2000dykstras}.

This section is structured as follows:
\begin{itemize}
    \item In Subsection \ref{subsec:kldiv}, we begin by relating the Kullback-Leibler divergence to the Bregman distance  associated  with the negative entropy function. Then, we check that the negative entropy function and the closed convex set $\mathcal{F}^{\mathrm{box}}$ satisfy the assumptions of ~\cite[Theorem~3.2]{bauschke2000dykstras}, and thus allows us to apply Dykstra's algorithm with Bregman projections for obtaining $p^\star$.
    \item In Subsection \ref{subsec:bregmanProj}, we show  (Lemma \ref{lemma:Projty}) that, on the set $\Delta_n \cap \mathbb{R}^n_{++}$, the Bregman projection (as defined in \cite{bauschke1997legendre}) on  a closed convex set $C \subseteq \Delta_n$ such that  $C \cap \mathbb{R}^n_{++} \neq \varnothing$ with respect to the negative entropy function  coincides with the Kullback-Leibler projection on such set $C$. In Corollary \ref{cor:projCzzNat}, we show that the Bregman projection of a vector $z \in \mathbb{R}^n_{++}$ on such a convex set $C$ coincides with the Bregman projection of any homothetic vector of the form $t . z$ on the convex set $C$, where $t > 0$.\\
    We provide explicit formulas for the Bregman projections with the negative entropy function on each of the constraints  $C^{\mathrm{pref}}_1,\cdots,C^{\mathrm{pref}}_{n-1}$, $C^{\mathrm{low}}_1,\cdots,C^{\mathrm{low}}_{n-1}$, $C^{\mathrm{up}}_1,\cdots,C^{\mathrm{up}}_{n-1}$, see (\ref{eq:Cprefsdef}-\ref{eq:Cupdef}), involved in the set $\mathcal{F}^{\mathrm{box}}
=
\Bigl(\bigcap_{s=1}^{n-1} C^{\mathrm{pref}}_s\Bigr)
\cap
\Bigl(\bigcap_{s=1}^{n-1} C^{\mathrm{low}}_s\Bigr)
\cap
\Bigl(\bigcap_{s=1}^{n-1} C^{\mathrm{up}}_s\Bigr)$, see (\ref{eq:fboxintersection}). These formulas, established in Proposition \ref{prop:projCbv1} and Proposition \ref{prop:projCbv2}, are based on the Karush-Kuhn-Tucker (KKT) conditions \cite[Chapter 5]{boyd2004convex} and Corollary \ref{cor:projCzzNat}.
\item In Subsection \ref{subsec:dykstraalgo}, we apply Dykstra's algorithm with the negative entropy function and the convex set  $\mathcal{F}^{\mathrm{box}}$, based on its formulation in~\cite[Theorem~3.2]{bauschke2000dykstras}. Thus, the algorithm converges to $p^\star$, see the proof of its convergence in~\cite{bauschke2000dykstras}. In Lemma \ref{lemma:AlgoD1}, for our setting, we reformulate the algorithm of ~\cite[Theorem~3.2]{bauschke2000dykstras} in a simpler form  that makes it easier to implement on a computer, see our numerical study in Section \ref{sec:numerical}.
\end{itemize}

\subsection{Kullback-Leibler divergence   as the Bregman distance  associated  with the negative entropy function}
\label{subsec:kldiv}

In this subsection, we closely follow \cite{bauschke2000dykstras}.\\ 
We briefly recall a well-known result in Lemma \ref{lemma:Df=KL}: the Kullback-Leibler divergence  $D_{\mathrm{KL}}(p\|q)$, where $p\in \Delta_n$,    arises as the Bregman distance  associated  with the negative entropy function \cite{censor1998dykstra,bregman1967,bauschke2000dykstras}.\\
In Proposition \ref{prop:hyp3.2}, we verify that the negative entropy function and the closed convex set $\mathcal{F}^{\mathrm{box}}$ satisfy the assumptions of Theorem 3.2 of \cite{bauschke2000dykstras}.  We   compute   the gradient of the conjugate function of the negative entropy, which is used in the algorithm in Theorem 3.2 of \cite{bauschke2000dykstras}. \\
Finally, we end  this subsection by reminding the Bregman projection associated with the Bregman distance induced by the negative entropy function \cite{bauschke2000dykstras,bauschke1997legendre,censor1998dykstra}.

We use $\operatorname{int}(\Omega)$ to denote the interior of a subset $\Omega$ in a metric space $E$: $\operatorname{int}(\Omega)$ is the largest open subset of $E$ contained in  $\Omega$.
    In particular, for a function $f:\mathbb{R}^n\to(-\infty,+\infty]$ with effective domain
\[
\mathrm{dom} f := \{x\in\mathbb{R}^n \mid f(x)<+\infty\},
\]
we write $\operatorname{int}(\mathrm{dom} f)$ for the interior of $\mathrm{dom} f$.

We denote by $\langle x , y \rangle$ the Euclidean scalar product of the vectors $x , y \in\mathbb{R}^n $.

\subsubsection{Kullback-Leibler divergence as Bregman distance}
\noindent
Let $f:\mathbb{R}^n\to\mathbb{R}\cup\{+\infty\}$ be the  negative
entropy
\begin{equation}\label{eq:f}
f(x) := \sum_{k=1}^n x_k \log x_k,    
\end{equation}
with the convention $0\log 0=0$ and $f(x)=+\infty$ whenever some $x_k<0$.
Then
\[
\mathrm{dom}f = \mathbb{R}^n_+,
\qquad
\mathrm{int}(\mathrm{dom}f) = \mathbb{R}^n_{++}.
\]
It is easy to see that $f$ is continuous and convex  on $\mathbb{R}^n_+$ and thus a closed convex proper function on $\mathbb{R}^n$
  in the sense of \cite{Rockafellar1970ConvexA}.  General theorems of differential calculus imply that $f$ is  differentiable on
$\mathrm{int}(\mathrm{dom}f) = \mathbb{R}^n_{++}$ and that the gradient function $\nabla f$ is given by:
\[\nabla f(y) = (\log y_1 + 1, \log y_2 + 1, \dots, \log y_n + 1)
\quad \text{for all $y = (y_1, y_2, \dots, y_n)\in \mathbb{R}^n_{++}$}. \] 
The Bregman distance  \cite{bauschke1997legendre,bauschke2000dykstras} associated with $f$ is the function
\begin{align*}
D_f:\ \mathbb{R}^n \times \mathbb{R}^n_{++} &\rightarrow \mathbb{R},\\
(x,y) &\mapsto f(x) - f(y) - \langle \nabla f(y), x-y\rangle.
\end{align*}

For all $(x, y)\in \mathbb{R}^n_{+} \times \mathbb{R}^n_{++}$, a direct calculation gives:
\[
D_f(x, y)
= \sum_{k=1}^n x_k \log\frac{x_k}{y_k}
  - \sum_{k=1}^n x_k
  + \sum_{k=1}^n y_k \quad \text{ with the convention } 0\cdot\log 0 =0.
\]
from which we easily deduce:
\begin{lemma}\label{lemma:Dfxty}
For all  $x\in \Delta_n$  and $y\in \Delta_n \cap  \mathbb{R}^n_{++}$, we have:
\begin{enumerate}
    \item $D_f(x, y) = \sum_{k=1}^n x_k \log\frac{x_k}{y_k} = D_{\mathrm{KL}}(x\|y)$,
    \item $D_f(x, t y)
= D_f(x, y)  + t  - \log t - 1$ for all $t > 0$.
    \end{enumerate}     
\end{lemma}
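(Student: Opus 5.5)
The plan is to start from the explicit formula for $D_f$ on $\mathbb{R}^n_{+}\times\mathbb{R}^n_{++}$ displayed just before the statement,
\[
D_f(x, y) = \sum_{k=1}^n x_k \log\frac{x_k}{y_k} - \sum_{k=1}^n x_k + \sum_{k=1}^n y_k ,
\]
and to use only the normalization constraints $\sum_k x_k = 1$ and $\sum_k y_k=1$. Before that, I will briefly check the formula itself, since everything rests on it. Expanding $f(x)-f(y)-\langle \nabla f(y),x-y\rangle$ with $\nabla f(y)_k=\log y_k+1$ gives
\[
\sum_k x_k\log x_k-\sum_k y_k\log y_k-\sum_k(\log y_k+1)(x_k-y_k).
\]
The terms $\sum_k y_k\log y_k$ cancel, which leaves $\sum_k x_k\log(x_k/y_k)-\sum_k x_k+\sum_k y_k$. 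The convention $0\log 0=0$ handles coordinates with $x_k=0$; this is harmless because $y_k>0$.

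For item 1, take $x\in\Delta_n$ and $y\in\Delta_n\cap\mathbb{R}^n_{++}$. The two linear terms are $-1+1=0$, so $D_f(x,y)=\sum_k x_k\log(x_k/y_k)$. This is exactly $D_{\mathrm{KL}}(x\|y)$ as defined in \eqref{eq:KLdiv}, and the conventions agree because $0\log(0/t)=0$ for $t>0$.

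For item 2, fix $t>0$. Then $ty\in\mathbb{R}^n_{++}$, so the same formula applies to the pair $(x,ty)$, even though $ty$ is no longer in $\Delta_n$. Since $\log\frac{x_k}{ty_k}=\log\frac{x_k}{y_k}-\log t$, we obtain
\[
D_f(x,ty)=\sum_k x_k\log\frac{x_k}{y_k}-\log t\sum_k x_k-\sum_k x_k+t\sum_k y_k .
\]
Substituting $\sum_k x_k=\sum_k y_k=1$ gives $D_{\mathrm{KL}}(x\|y)-\log t-1+t$. By item 1 this equals $D_f(x,y)+t-\log t-1$.

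There is no real obstacle here. The only point needing care is the bookkeeping for zero coordinates of $x$: every term with $x_k=0$ contributes $0$ in each expression, including in the $-\log t\sum_k x_k$ term.
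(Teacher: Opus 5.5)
Your proposal is correct and follows the paper's route: the paper obtains the lemma directly ("we easily deduce") from the displayed identity $D_f(x,y)=\sum_k x_k\log\frac{x_k}{y_k}-\sum_k x_k+\sum_k y_k$ on $\mathbb{R}^n_{+}\times\mathbb{R}^n_{++}$ together with $\sum_k x_k=\sum_k y_k=1$. Your check of that identity, applied to $(x,ty)$, and your handling of zero coordinates simply spell out what the paper leaves implicit.
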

\qed

As our probability distribution $q$ satisfy $q\in \Delta_n \cap\mathbb{R}^n_{++}$ and $\mathcal{F}^{\mathrm{box}} \subseteq \Delta_n$, we deduce (as in \cite{censor1998dykstra,bregman1967}):
\begin{lemma}\label{lemma:Df=KL}
    For all $p\in \Delta_n$, we have:
    \begin{equation}
        D_{\mathrm{KL}}(p\|q) =  D_f(p, q) \quad \text{ and }\quad \min_{p\in \mathcal{F}^{\mathrm{box}}} D_{\mathrm{KL}}(p\|q) = \min_{p\in \mathcal{F}^{\mathrm{box}}} D_f(p, q).
    \end{equation}
\end{lemma}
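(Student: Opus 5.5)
The plan is to obtain Lemma~\ref{lemma:Df=KL} as a direct specialization of Lemma~\ref{lemma:Dfxty}, item~1, with $y=q$. The only thing to check is that the hypotheses of that lemma hold.

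The model output satisfies $q\in\Delta_n\cap\mathbb{R}^n_{++}$ by construction: it is restricted to $Y$, made strictly positive, and renormalized. Hence $q$ is an admissible second argument. For any $p\in\Delta_n$ we have in particular $p\in\mathbb{R}^n_+=\mathrm{dom}f$. Lemma~\ref{lemma:Dfxty} then gives
\[
D_f(p,q)=\sum_{k=1}^n p_k\log\frac{p_k}{q_k}-\sum_{k=1}^n p_k+\sum_{k=1}^n q_k .
\]
Since both $p$ and $q$ sum to one, the last two sums cancel. This leaves $D_{\mathrm{KL}}(p\|q)$ with the same convention $0\log(0/t)=0$ on both sides, because $0\log 0=0$ in $f$ and $q_k>0$. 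That proves the first identity pointwise on $\Delta_n$.

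For the second identity, note that $\mathcal F^{\mathrm{box}}\subseteq\Delta_n$, so the two objective functions coincide on the feasible set. Their infima are therefore equal. I would also justify writing ``min'' rather than ``inf'': the minimum is attained. The set $\mathcal F^{\mathrm{box}}$ is non-empty by Proposition~\ref{prop:f-box}. It is closed by Proposition~\ref{prop:Fbox-convex-closed} and bounded as a subset of $\Delta_n$, hence compact. The map $p\mapsto D_{\mathrm{KL}}(p\|q)$ is continuous on $\Delta_n$ because $q_k>0$ and $t\mapsto t\log t$ extends continuously to $0$. Weierstrass then gives attainment, and the minimizers of the two problems coincide.

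I do not expect any real obstacle. The only delicate point is the boundary convention when some $p_k=0$: the identity must be checked there, using the continuous extension of $t\log t$ rather than the gradient formula, since $\nabla f$ is evaluated only at $q\in\mathbb{R}^n_{++}$.
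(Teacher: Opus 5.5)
Your proposal is correct and follows the paper's route. The paper also derives this lemma directly from Lemma~\ref{lemma:Dfxty}, item~1, applied with $y=q\in\Delta_n\cap\mathbb{R}^n_{++}$ and combined with $\mathcal F^{\mathrm{box}}\subseteq\Delta_n$. Your compactness--continuity argument for attainment of the minimum is a sound supplement that the paper leaves implicit; the paper instead relies later on the Bregman-projection existence result for this.
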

\qed

\subsubsection{Verifying that the negative entropy function and the closed convex set \texorpdfstring{$\mathcal{F}^{\mathrm{box}}$}{fbox} satisfy the assumptions of Theorem 3.2 of \texorpdfstring{\cite{bauschke2000dykstras}}{bauschke and lewis}}

To apply Theorem 3.2 of \cite{bauschke2000dykstras}, we must check that its assumptions are satisfied by $f$
 and $\mathcal{F}^{\mathrm{box}}$:
 \begin{restatable}{proposition}{propositionOkNegEntropFbox}
     \label{prop:hyp3.2}
 The negative entropy function $f$ is very strictly convex, co-finite and of Legendre type in the sense of  \cite{bauschke2000dykstras}.
 Moreover,  
 $\mathcal{F}^{\mathrm{box}} \cap \operatorname{int}(\mathrm{dom}f)  =    \mathcal{F}^{\mathrm{box}}  \cap \mathbb{R}^n_{++}  \not= \emptyset $.
 \end{restatable}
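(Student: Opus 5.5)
We will verify the properties of $f$ one at a time against the definitions of \cite{bauschke2000dykstras,bauschke1997legendre}, and then dispose of the statement about $\mathcal F^{\mathrm{box}}$ by invoking Proposition~\ref{prop:f-box}.

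\emph{Legendre type.} Since $f(x)=\sum_k \phi(x_k)$ with $\phi(t)=t\log t$ on $[0,\infty)$ (and $+\infty$ on $(-\infty,0)$), it suffices to treat the one-dimensional function $\phi$ and pass to the separable sum. We recall that $f$ is closed, convex and proper, and that $\mathrm{int}(\mathrm{dom} f)=\mathbb{R}^n_{++}\neq\emptyset$.
\begin{itemize}
\item \emph{Essential smoothness.} $f$ is differentiable on $\mathbb{R}^n_{++}$. For any sequence $y^{(m)}\in\mathbb{R}^n_{++}$ converging to a boundary point, some coordinate tends to $0$, so $\log y^{(m)}_k+1\to-\infty$ and $\|\nabla f(y^{(m)})\|\to\infty$.
\item \emph{Essential strict convexity.} On $\mathbb{R}^n_{++}$ the Hessian is $\mathrm{diag}(1/y_k)$, which is positive definite, so $f$ is strictly convex on every convex subset of $\mathrm{dom}\,\partial f=\mathbb{R}^n_{++}$. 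Indeed $\partial f(x)=\emptyset$ when some $x_k=0$, because the one-sided derivative of $\phi$ at $0$ is $-\infty$.
\end{itemize}
Together these give Legendre type.

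\emph{Very strict convexity.} In \cite{bauschke2000dykstras} this means that $\nabla^2 f(y)$ is positive definite for all $y\in\mathrm{int}(\mathrm{dom} f)$, which is the diagonal computation above.

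\emph{Co-finiteness.} We compute the conjugate: $\phi^*(s)=\sup_{t\ge0}(st-t\log t)=e^{s-1}$, attained at $t=e^{s-1}$. Hence
\[
f^*(s)=\sum_k e^{s_k-1},
\]
which is finite on all of $\mathbb{R}^n$, so $\mathrm{dom} f^*=\mathbb{R}^n$. Equivalently, $f(x)/\|x\|\to+\infty$ as $\|x\|\to\infty$ within $\mathrm{dom} f$. As a by-product, $\nabla f^*(s)=(e^{s_k-1})_k$, which is needed later.

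\emph{The set $\mathcal F^{\mathrm{box}}$.} We have $\operatorname{int}(\mathrm{dom}f)=\mathbb{R}^n_{++}$ by the description of $\mathrm{dom} f$ above. By Proposition~\ref{prop:f-box}, $\dot p\in\mathcal F^{\mathrm{box}}\cap\mathbb{R}^n_{++}$ (using Lemma~\ref{lemma:lemmaPropAntipignisticProb}(1)), so the intersection is nonempty.

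\emph{Main obstacle.} There is no real mathematical difficulty. The delicate point is matching the exact definitions in \cite{bauschke2000dykstras}, in particular checking that $\partial f$ is empty at boundary points and stating very strict convexity in the Hessian form used there.
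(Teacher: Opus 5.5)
Your proof is correct and follows the paper's overall plan: check each property of $f$ against the definitions in \cite{bauschke2000dykstras}, then use $\dot p$ for non-emptiness. The very-strict-convexity step (diagonal Hessian) and the final non-emptiness step are the same as the paper's.

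You differ from the paper in how two of the properties are certified.
\begin{itemize}
\item \textbf{Legendre type.} The paper checks the directional-derivative condition $\lim_{t\to 0^+}\langle \nabla f(x+t(y-x)),\,y-x\rangle=-\infty$ for $x\in\mathbb{R}^n_+\setminus\mathbb{R}^n_{++}$ and $y\in\mathbb{R}^n_{++}$, together with strict convexity on $\mathbb{R}^n_{++}$. You instead show that $\|\nabla f\|$ blows up along sequences approaching the boundary, and that $\partial f=\emptyset$ there, so $\mathrm{dom}\,\partial f=\mathbb{R}^n_{++}$. These are equivalent formulations of essential smoothness and essential strict convexity \cite[Section~26]{Rockafellar1970ConvexA}, so either route works.
\item \textbf{Co-finiteness.} The paper checks $f(rx)/r\to+\infty$ for every $x\neq 0$ directly: it equals $\sum_{k\in S}x_k\log(rx_k)$ on the support $S$ of $x\in\mathbb{R}^n_+\setminus\{0\}$, and is $+\infty$ otherwise. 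You compute $f^*(s)=\sum_k e^{s_k-1}$ and conclude $\mathrm{dom}\,f^*=\mathbb{R}^n$.
\end{itemize}

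Your conjugate computation also gives $\nabla f^*(s)=(e^{s_k-1})_k$ for free; the paper has to derive this separately from $\nabla f^*\circ\nabla f=\mathrm{id}$. The cost is that the paper takes co-finiteness to mean the recession-type limit, so your argument needs one more step. You must invoke the standard fact that, for closed proper convex functions, this limit condition is equivalent to $\mathrm{dom}\,f^*=\mathbb{R}^n$ \cite[Section~13]{Rockafellar1970ConvexA}. State this link explicitly, or simply add the paper's one-line limit computation.
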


See proof in Subsection \ref{subsec:proof:hyp32}.

For computing $p^\star$, see (\ref{eq:pstar}),  by the algorithm of \cite[Theorem 3.2]{bauschke2000dykstras}, we also require to compute  
the gradient of the conjugate function $f^\ast$ of the negative entropy $f$,   which  satisfies  $(\nabla f^\ast \circ \nabla f)(x) = x$ for all $x\in \mathbb{R}^n_{++}$, see \cite[Fact 2.7]{bauschke2000dykstras}. Then, from 
$\nabla f(x) = [\log x_k  +  1]$, 
we deduce that  the gradient of the conjugate function $f^\ast$ is given by:
\begin{align}\label{eq:gradfstar}
\nabla f^\ast: \ \mathbb{R}^n  &\rightarrow \mathbb{R}^n,\\
y &\mapsto \nabla f^\ast(y) = (e^{y_1 - 1}, e^{y_2 - 1}, \dots e^{y_n - 1}).\nonumber
\end{align}

\subsubsection{Bregman projections associated with the negative entropy function and \texorpdfstring{$\mathcal{F}^{\mathrm{box}}$}{fbox}}

Let $C\subseteq \mathbb{R}^n$ be a non-empty closed convex set  where $C\cap\mathbb{R}^n_{++}\neq\emptyset$. From the third statement of \cite[Theorem 3.12]{bauschke1997legendre}, we know that for any    $y\in\mathbb{R}^n_{++}$, the set $\text{argmin}_{x\in C \cap \mathbb{R}^n_{+}}\, D_f(x, y)$ is a one-point set which is contained in $\mathbb{R}^n_{++}$.  Following \cite[Theorem 3.12]{bauschke1997legendre}, we set 
\begin{equation}\label{eq:projB}
 \Proj_C^f(y):= \arg\min_{x\in C \cap \mathbb{R}^n_{+}}\, D_f(x, y)   
\end{equation}
and  call the mapping:
\[ \begin{aligned}
\Proj_C^f: \mathbb{R}^n_{++} &\rightarrow C \cap \mathbb{R}^n_{++},\\
 y &\mapsto \Proj_C^f(y)
\end{aligned}\]
the Bregman projection  on $C$ with respect to $f$.

For any   $y\in \mathbb{R}^n_{++}$, \cite[Proposition 3.16]{bauschke1997legendre}    characterizes the vector $\Proj_C^f(y)$:
\begin{equation}\label{eq:projRule}
    \text{if }  z\in C \cap \mathbb{R}^n_{++}\text{ satisfies } \langle \nabla f(y) - \nabla f(z), x - z \rangle \le 0
 \text{ for all } x\in C, \text{ then }  z = \Proj_C^f(y).
\end{equation}
Clearly,  (\ref{eq:projRule}) implies:
\begin{equation}\label{eq:projRule1}
\Proj_C^f(y)  = y \quad \text{for all $y\in C \cap \mathbb{R}^n_{++}$ } \quad\text{ and }\quad(\Proj_C^f\circ \Proj_C^f)(y) =  \Proj_C^f(y) \quad \text{for all $y\in   \mathbb{R}^n_{++}$.}   
\end{equation}

From (\ref{eq:projB}) and Lemma \ref{lemma:Dfxty}, we deduce:
\begin{restatable}{lemma}{lemmaProjty}
 \label{lemma:Projty}
Let $C\subseteq \Delta_n$ be a non-empty closed convex set  such that $C\cap\mathbb{R}^n_{++}\neq\emptyset$.
We have 
\begin{equation}\label{eq:Projty}
\text{For any $y\in \Delta_n \cap \mathbb{R}^n_{++}$}, \quad 
\Proj_C^f(t y) = \Proj_C^f(y) = \arg\min_{x\in C} D_{\mathrm{KL}}(x\|y) \quad \text{for all $t >0$}.
\end{equation}
Thus, on the set $\Delta_n \cap \mathbb{R}^n_{++}$, the Bregman projection on such set $C$ with respect to $f$  coincides with the Kullback-Leibler projection on such set $C$. 
\end{restatable}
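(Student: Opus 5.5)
The argument is direct: combine the definition \eqref{eq:projB} of $\Proj_C^f$ with the two identities of Lemma~\ref{lemma:Dfxty}. Fix $y\in\Delta_n\cap\mathbb{R}^n_{++}$ and $t>0$. Then $ty\in\mathbb{R}^n_{++}$, so by \cite[Theorem 3.12]{bauschke1997legendre} (recalled before \eqref{eq:projB}) both $\Proj_C^f(y)$ and $\Proj_C^f(ty)$ are well defined as the unique minimizers over $C\cap\mathbb{R}^n_+$. Since $C\subseteq\Delta_n\subseteq\mathbb{R}^n_+$, we have $C\cap\mathbb{R}^n_+=C$, so both are simply minimizers over $C$.

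\emph{Step 1 (first equality).} For every $x\in C\subseteq\Delta_n$, the second item of Lemma~\ref{lemma:Dfxty} gives
\[
D_f(x,ty)=D_f(x,y)+t-\log t-1 .
\]
The additive term $t-\log t-1$ does not depend on $x$. Hence the functions $x\mapsto D_f(x,ty)$ and $x\mapsto D_f(x,y)$ have the same argmin on $C$, and $\Proj_C^f(ty)=\Proj_C^f(y)$.

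\emph{Step 2 (second equality).} For $x\in C\subseteq\Delta_n$ and $y\in\Delta_n\cap\mathbb{R}^n_{++}$, the first item of Lemma~\ref{lemma:Dfxty} gives $D_f(x,y)=D_{\mathrm{KL}}(x\|y)$. The two objectives therefore coincide on $C$, so $\arg\min_{x\in C}D_{\mathrm{KL}}(x\|y)$ is exactly the one-point set $\{\Proj_C^f(y)\}$. In particular, the KL minimizer exists, is unique, and lies in $C\cap\mathbb{R}^n_{++}$.

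The only delicate point is that the scaled vector $ty$ no longer lies in $\Delta_n$, so the first item of Lemma~\ref{lemma:Dfxty} cannot be applied to $ty$ directly. That is why the argument goes through the second item, which only needs $x\in\Delta_n$ and $y\in\Delta_n\cap\mathbb{R}^n_{++}$. Existence and uniqueness come entirely from the cited theorem of \cite{bauschke1997legendre}, so no further work is needed.
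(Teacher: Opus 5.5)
Your proposal is correct and follows essentially the same route as the paper's proof. Both use the first item of Lemma~\ref{lemma:Dfxty} to identify $\Proj_C^f(y)$ with the KL minimizer. Both use the second item, with its $x$-independent additive constant $t-\log t-1$, to conclude that the minimizers of $D_f(\cdot,ty)$ and $D_f(\cdot,y)$ over $C$ coincide, with existence and uniqueness taken from \cite[Theorem 3.12]{bauschke1997legendre}.
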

See proof in Subsection \ref{subsec:proof:lemma:Projty}.

From   Lemma \ref{lemma:Projty}, it follows that:

\begin{corollary}\label{cor:projCzzNat}
Let $C\subseteq \Delta_n$ be a non-empty closed convex set  such that $C\cap\mathbb{R}^n_{++}\neq\emptyset$.
For any $z\in   \mathbb{R}^n_{++}$, set  $z^\sharp:= \dfrac{z}{\sum_{k=1}^n z_k}$. Then we have:
\begin{equation}\label{eq:natural}
\Proj_C^f(z)= \Proj_C^f(z^\sharp) \quad \text{and} \quad z^\sharp \in    \Delta_n \cap \mathbb{R}^n_{++}  
\end{equation}
\end{corollary}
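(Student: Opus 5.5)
The plan is to reduce the corollary directly to Lemma~\ref{lemma:Projty} by writing $z$ as a positive multiple of a normalized vector. Given $z\in\mathbb{R}^n_{++}$, set $t:=\sum_{k=1}^n z_k$. Since every coordinate of $z$ is strictly positive, we have $t>0$, so $z^\sharp = z/t$ is well defined.

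The first step checks that $z^\sharp\in\Delta_n\cap\mathbb{R}^n_{++}$. Each coordinate $z^\sharp_k = z_k/t$ is strictly positive. The coordinates also sum to $\sum_k z_k/t = 1$. Hence $z^\sharp\in\Delta_n$ by \eqref{eq:deltansimplex}, and $z^\sharp\in\mathbb{R}^n_{++}$. This proves the second part of \eqref{eq:natural}.

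The second step notes that $z = t\, z^\sharp$ with $t>0$ and $z^\sharp\in\Delta_n\cap\mathbb{R}^n_{++}$. The set $C$ satisfies exactly the hypotheses of Lemma~\ref{lemma:Projty}: it is a non-empty closed convex subset of $\Delta_n$ with $C\cap\mathbb{R}^n_{++}\neq\emptyset$. We may therefore apply \eqref{eq:Projty} with $y:=z^\sharp$ and this $t$, which gives
\[
\Proj_C^f(z) = \Proj_C^f(t z^\sharp) = \Proj_C^f(z^\sharp).
\]
This is the first part of \eqref{eq:natural}. As a by-product, this common value also equals $\arg\min_{x\in C}D_{\mathrm{KL}}(x\|z^\sharp)$.

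There is no real obstacle. The only point needing care is that $\Proj_C^f$ is defined on all of $\mathbb{R}^n_{++}$ by \eqref{eq:projB}, which holds because $C\cap\mathbb{R}^n_{++}\neq\emptyset$. Thus both sides of the identity are meaningful, and the scaling invariance of Lemma~\ref{lemma:Projty} applies. That invariance rests on the identity $D_f(x,ty)=D_f(x,y)+t-\log t-1$ of Lemma~\ref{lemma:Dfxty}: for $x\in C\subseteq\Delta_n$, the added term does not depend on $x$, so it does not change the minimizer.
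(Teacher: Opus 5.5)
Your proof is correct and follows the same route as the paper: take $t:=\sum_k z_k=\Vert z\Vert_1>0$ and apply Lemma~\ref{lemma:Projty} with $y:=z^\sharp$, so that $\Proj_C^f(z)=\Proj_C^f(t z^\sharp)=\Proj_C^f(z^\sharp)$. You also spell out the check that $z^\sharp\in\Delta_n\cap\mathbb{R}^n_{++}$, which the paper leaves implicit.
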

\begin{proof}
To obtain  (\ref{eq:natural}), it suffices to apply    (\ref{eq:Projty}) with $t:= \Vert z \Vert_1$  and $y:= z^\sharp$.
\end{proof}

In our setting we take $C=\mathcal{F}^{\mathrm{box}}$, a closed convex subset of
$\Delta_n$ which, by Proposition~\ref{prop:Fbox-convex-closed}, contains a
strictly positive vector $\dot p\in\Delta_n$. Given any strictly positive
reference probability distribution $q\in\Delta_n \cap \mathbb{R}^n_{++}$, the Kullback-Leibler projection of $q$ onto
$\mathcal{F}^{\mathrm{box}}$ is given by:
\begin{equation}
    p^\star
:=
\arg\min_{p\in\mathcal{F}^{\mathrm{box}}} D_{\mathrm{KL}}(p\|q) =
\Proj_{\mathcal{F}^{\mathrm{box}}}^f(q).
\end{equation}
In the next subsection, since $\mathcal{F}^{\mathrm{box}}
=
\Bigl(\bigcap_{s=1}^{n-1} C^{\mathrm{pref}}_s\Bigr)
\cap
\Bigl(\bigcap_{s=1}^{n-1} C^{\mathrm{low}}_s\Bigr)
\cap
\Bigl(\bigcap_{s=1}^{n-1} C^{\mathrm{up}}_s\Bigr)$, see (\ref{eq:fboxintersection}), we will compute by explicit formulas the Bregman projections on the constraints sets $C^{\mathrm{pref}}_1,\cdots,C^{\mathrm{pref}}_{n-1}$, $C^{\mathrm{low}}_1,\cdots,C^{\mathrm{low}}_{n-1}$, $C^{\mathrm{up}}_1,\cdots,C^{\mathrm{up}}_{n-1}$, see (\ref{eq:Cprefsdef}-\ref{eq:Cupdef}), which we denote  by
\begin{equation}\label{eq:specializationCi}
C_1,\dots,C_{n-1} := C^{\mathrm{pref}}_1,\dots,C^{\mathrm{pref}}_{n-1},\quad
C_n,\dots,C_{2n-2} := C^{\mathrm{low}}_1,\dots,C^{\mathrm{low}}_{n-1},\quad
C_{2n-1},\dots,C_{3n-3} := C^{\mathrm{up}}_1,\dots,C^{\mathrm{up}}_{n-1},
\end{equation}
so we have $\mathcal{F}^{\mathrm{box}} = \bigcap_{i=1}^m C_i$ with    $m=3n-3$ and each convex subset $C_i$ is contained in $\Delta_n$.

For each $i=1,2,\cdots,m$, we denote by 
$\Proj_i : \mathbb{R}^n_{++} \rightarrow C_i \cap  \mathbb{R}^n_{++}$ the Bregman projection on $C_i$ with respect to $f$. For each $i=1,2,\cdots,m$, we deduce from (\ref{eq:projB}):
\begin{equation}\label{eq:projn}
\Proj_i(y) = \arg\min_{x\in C_i} D_f(x,y),
\qquad y\in\mathbb{R}^n_{++}.
\end{equation}

\subsection{Bregman projections onto the convex sets \texorpdfstring{$C_1, C_2, \dots, C_m$ }{C1 C2 ...Cm}}
\label{subsec:bregmanProj}

To compute each projection
\begin{align}\label{eq:projIRn}
\Proj_i:\ \mathbb{R}^n_{++} &\rightarrow C_i \cap \mathbb{R}^n_{++},\\
z &\mapsto \arg\min_{x\in C_i} D_f(x,z), \nonumber
\end{align}
we proceed as follows.

Fix  $b\in \mathbb{R}$ and $v\in \mathbb{R}^n$. Set 
$C = C_{b, v}  := \{x \in \Delta_n\,\mid \, b \le \langle x, v \rangle\}$ and   suppose that $C \,\cap \mathbb{R}^n_{++} \not= \emptyset$. Then   $C$     is a non-empty closed convex subset of $\mathbb{R}^n$ and  
for any   $z\in  \mathbb{R}^n_{++}$, we know from   \cite[Theorem 3.12]{bauschke1997legendre} that the convex optimization problem in the sense of \cite[Chapter 5]{boyd2004convex}:
\begin{equation}\label{eq:optimpb}
\min_{x\in  \mathbb{R}^n}  D_f(x, z) \quad \text{subject to} \quad 
  \langle x, v\rangle \ge b, \, x_k \ge 0 \text{ for all $k\in\{1, 2, \dots n\} $},\, \sum_{k=1}^n x_k = 1.    
\end{equation}
admits a unique solution $\hat z = \Proj_C^f(z) \in C \cap \mathbb{R}^n_{++}$. Using    the Karush-Kuhn-Tucker (KKT) conditions for the above optimization problem, see \cite[Chapter 5, p244]{boyd2004convex},   we will compute $\hat z$ for our convex sets $(C_i)_{1 \le i \le m}$.

We rely on the following lemma:

\begin{restatable}{lemma}{lemmaKKT}
\label{lemma:KKT}\mbox{}
Consider  $b\in \mathbb{R}$ and $v=[v_k]\in \mathbb{R}^n$. Set 
$C = C_{b, v}  := \{x \in \Delta_n\,\mid \, b \le \langle x, v \rangle\}$ and suppose that $C \,\cap \mathbb{R}^n_{++} \not= \emptyset$. For  any   $z\in  \mathbb{R}^n_{++}$, set $\hat z = \Proj_C^f(z)$.
\begin{enumerate}
    \item the convex optimization problem in the sense of \cite[Chapter 5]{boyd2004convex}:
\begin{equation}\label{eq:optimpb2}
\min_{x\in  \mathbb{R}^n}  D_f(x, z) \quad \text{subject to} \quad 
  \langle x, v\rangle \ge b, \, x_k \ge 0 \text{ for all $k\in\{1, 2, \dots n\} $},\, \sum_{k=1}^n x_k = 1.    
\end{equation}
admits  $\hat z$ as a unique solution.
\item There is a pair $(\lambda^\star, \nu^\star)\in \mathbb{R}_+ \times \mathbb{R}$ such that for all $k\in\{1, 2, \dots, n\}$, we have:
\begin{equation}\label{eq:comphaty0}
 \hat z_k = e^{\lambda^\star \, v_k + \nu^\star} z_k.
\end{equation}
If $b < \langle  \hat z, v \rangle$, then $\lambda^\star = 0$.
  
\end{enumerate}
\end{restatable}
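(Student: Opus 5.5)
The plan is to treat part 1 as a restatement of facts already recalled, and to obtain part 2 from the KKT conditions of \eqref{eq:optimpb2}.

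\textbf{Part 1.} Since $C\cap\mathbb{R}^n_{++}\neq\emptyset$ and $z\in\mathbb{R}^n_{++}$, \cite[Theorem 3.12]{bauschke1997legendre} gives that $\arg\min_{x\in C\cap\mathbb{R}^n_+}D_f(x,z)$ is a single point of $\mathbb{R}^n_{++}$. By \eqref{eq:projB}, this point is $\hat z=\Proj_C^f(z)$. The feasible set of \eqref{eq:optimpb2} is exactly $C$, because $C\subseteq\Delta_n\subseteq\mathbb{R}^n_+$. Hence $\hat z$ is the unique solution. The problem is convex in the sense of \cite[Chapter 5]{boyd2004convex}: $D_f(\cdot,z)$ is convex, being $f$ minus an affine function, and all constraints are affine.

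\textbf{Part 2.} First, strong duality and existence of multipliers hold because all constraints are affine and the problem is feasible. Linear constraints satisfy the refined Slater condition, so there exist KKT multipliers:
\begin{itemize}
\item $\lambda^\star\ge0$ for $\langle x,v\rangle\ge b$;
\item $\mu^\star\in\mathbb{R}^n_+$ for $x_k\ge0$;
\item $\tilde\nu\in\mathbb{R}$ for $\sum_k x_k=1$.
\end{itemize}
These satisfy stationarity at $\hat z$ together with complementary slackness. Since $\hat z\in\mathbb{R}^n_{++}$, complementary slackness $\mu_k^\star\hat z_k=0$ forces $\mu^\star=0$. Moreover $D_f(\cdot,z)$ is differentiable at $\hat z$, with gradient $\nabla f(\hat z)-\nabla f(z)=(\log\hat z_k-\log z_k)_k$.

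The Lagrangian is
\[
L(x,\lambda,\mu,\tilde\nu)=D_f(x,z)-\lambda(\langle x,v\rangle-b)-\langle\mu,x\rangle+\tilde\nu\Bigl(\sum_k x_k-1\Bigr).
\]
Stationarity in $x$ at $\hat z$ then reads
\[
\log\hat z_k-\log z_k=\lambda^\star v_k-\tilde\nu,\qquad k=1,\dots,n.
\]
Setting $\nu^\star:=-\tilde\nu$ and exponentiating gives \eqref{eq:comphaty0}. Finally, if $b<\langle\hat z,v\rangle$, complementary slackness $\lambda^\star(\langle\hat z,v\rangle-b)=0$ yields $\lambda^\star=0$.

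\textbf{Main obstacle.} The only delicate point is justifying that KKT multipliers exist at the minimizer without assuming strict feasibility of $\langle x,v\rangle\ge b$, since we only know $C\cap\mathbb{R}^n_{++}\neq\emptyset$. I would handle this by invoking the linear-constraint version of Slater's condition in \cite[Chapter 5]{boyd2004convex}: for affine inequality constraints, feasibility suffices for strong duality and attainment of the dual.

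A subtlety remains: the objective's domain is $\mathbb{R}^n_+$, not open. This is harmless, because $\hat z$ lies in the open set $\mathbb{R}^n_{++}$ where the objective is smooth. Alternatively, one can bypass duality and use the variational characterization \eqref{eq:projRule}. The vector $w=\nabla f(z)-\nabla f(\hat z)$ then lies in the normal cone of the polyhedron $C$ at $\hat z$. The only constraints active at $\hat z$ are possibly $\langle x,v\rangle\ge b$ and always $\sum x_k=1$, since all coordinates of $\hat z$ are positive. By Farkas' lemma, the normal cone is therefore generated by $-v$, when that constraint is active, and $\pm\mathbf 1$. This gives $w=-\lambda^\star v-\nu^\star\mathbf 1$ with $\lambda^\star\ge0$, and $\lambda^\star=0$ when the constraint is inactive. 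This yields the same formula.
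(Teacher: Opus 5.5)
Your proof is correct and follows the paper's route: you use the same Lagrangian and KKT system, get $\mu^\star=0$ from complementary slackness because $\hat z\in\mathbb{R}^n_{++}$, obtain the exponential form from stationarity, and get $\lambda^\star=0$ from complementary slackness. You go slightly beyond the paper by justifying that multipliers exist, which the paper takes for granted. Boyd's refined Slater condition actually requires a feasible point in $\mathrm{relint}\,\mathrm{dom}\,D_f(\cdot,z)=\mathbb{R}^n_{++}$ rather than mere feasibility, since the domain is not open, and that requirement is exactly the hypothesis $C\cap\mathbb{R}^n_{++}\neq\emptyset$.
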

See proof in Subsection \ref{subec:proof:lemmaKKT}.

To compute each projection $\Proj_i$, see (\ref{eq:projIRn}), for $i\in\{1,\dots,m\}$, we specify a pair $(b, v) \in \mathbb{R}\times \mathbb{R}^n$
such that $C_i = C_{b, v}$ and apply Lemma \ref{lemma:KKT}.

\subsubsection{\texorpdfstring{Computing $\Proj_r$ for $r=1,2,\cdots,n-1$}{computing projr for r=1,2,...,n-1}}

We remind that   $C_r = \{x \in \Delta_n\,\mid\, \sum_{i\in A_r} x_i \ge 1 -  \tilde\pi_{r + 1}\}$.

For any $z\in\mathbb{R}^n_{++}$, we set $\Vert z \Vert_1 = \sum_{k=1}^n z_k$ and 
$z^\sharp := \dfrac{z}{\Vert z \Vert_1}$.

\begin{restatable}{proposition}{propprojCbvOne}
\label{prop:projCbv1}
Let $z\in \mathbb{R}^n_{++}$ and $\hat z:= \Proj_{C_r}^f(z)$ where $C_r:= \{x\in \Delta_n\,\mid\, \sum_{i\in A_r} x_i \ge 1 -\tilde\pi_{r+1}\}$. 

Set    $s := \dfrac{1}{\Vert z \Vert_1} \sum_{k\in A_r} z_k$,   $b:= 1 -\tilde\pi_{r+1} < 1$ and $z^\sharp:=\dfrac{z}{\Vert z \Vert_1}$. 
\begin{enumerate}
    \item If  $s \ge b$ then  $z^\sharp\in C_r$ and then   $\hat z:= \Proj_{C_r}^f(z) = \Proj_{C_r}^f(z^\sharp)= z^\sharp$.
    \item If  $s < b$, the components of the vector $\hat z  = [\hat z_k]$ are given by
     \begin{equation}\label{eq:comp}
      \hat z_k = \begin{cases}
       \dfrac{b}{ s}\, \dfrac{z_k}{\Vert z \Vert_1}  & \text{if }   k\in A_r\\
       \dfrac{1 - b}{1 - s} \, \dfrac{z_k}{\Vert z \Vert_1}   & \text{if }   k\notin A_r
  \end{cases}.
    \end{equation}
  \end{enumerate} 
\end{restatable}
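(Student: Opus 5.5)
First I will reduce to the normalized vector. By Corollary~\ref{cor:projCzzNat} (which applies since $C_r\subseteq\Delta_n$ is closed, convex, and contains $\dot p\in\mathbb{R}^n_{++}$ by Lemma~\ref{lemma:lemmaPropAntipignisticProb}), we have $\hat z=\Proj_{C_r}^f(z)=\Proj_{C_r}^f(z^\sharp)$. Moreover $s=\sum_{k\in A_r}z^\sharp_k$. So it suffices to work with $y:=z^\sharp\in\Delta_n\cap\mathbb{R}^n_{++}$. Case 1 is then immediate: if $s\ge b$, then $y\in C_r\cap\mathbb{R}^n_{++}$, and (\ref{eq:projRule1}) gives $\Proj_{C_r}^f(y)=y$.

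For Case 2 I will write $C_r=C_{b,v}$ with $v=\mathbf 1_{A_r}$ (indicator vector of $A_r$) and invoke Lemma~\ref{lemma:KKT}. It gives multipliers $\lambda^\star\ge 0$ and $\nu^\star$ with $\hat z_k=e^{\lambda^\star+\nu^\star}y_k$ for $k\in A_r$ and $\hat z_k=e^{\nu^\star}y_k$ for $k\notin A_r$. Writing $\alpha=e^{\lambda^\star+\nu^\star}$ and $\beta=e^{\nu^\star}$, the vector $\hat z$ is $y$ rescaled by one constant on $A_r$ and another on $A_r^c$.

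The key step is to show the constraint is active, i.e. $\sum_{k\in A_r}\hat z_k=b$. Suppose instead $\sum_{k\in A_r}\hat z_k>b$. Then Lemma~\ref{lemma:KKT} forces $\lambda^\star=0$, so $\hat z=\beta y$, and normalization gives $\beta=1$, hence $\hat z=y$. But then $\sum_{k\in A_r}\hat z_k=s<b$, a contradiction. So equality holds, which gives $\alpha s=b$. Combined with $\sum_k\hat z_k=1$ we get $\beta(1-s)=1-b$. Here $1-s>0$ because $A_r^c\neq\emptyset$ for $r\le n-1$ and $y>0$, and $s>0$ since $A_r\neq\emptyset$. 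Hence $\alpha=b/s$ and $\beta=(1-b)/(1-s)$, which is formula (\ref{eq:comp}) after substituting $y_k=z_k/\Vert z\Vert_1$.

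As a sanity check, I will confirm that $\lambda^\star\ge0$ is consistent: $\alpha/\beta=\frac{b(1-s)}{s(1-b)}>1$ because $s<b<1$. The main obstacle is the activeness argument, which rests on the complementary-slackness clause of Lemma~\ref{lemma:KKT}. I must also check that $b>0$, so that the components of $\hat z$ are positive. This holds: $s<b$ together with $s>0$ forces $b>0$.
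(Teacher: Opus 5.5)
Your proof is correct and follows essentially the paper's argument: Lemma~\ref{lemma:KKT} with $v=\mathbf 1_{A_r}$, then showing the constraint is active by contradiction via the complementary-slackness clause, then solving the two normalization equations for the scaling factors. The only cosmetic difference is that you normalize to $z^\sharp$ first, whereas the paper applies Lemma~\ref{lemma:KKT} to $z$ directly and carries the factor $\Vert z\Vert_1$ through.
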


See proof in Subsection \ref{subsec:proof:Cbv1}.

\subsubsection{\texorpdfstring{Computing $\Proj_r$ for $r= n, n + 1, \cdots,m$}{computing projr for r=n,n+1,...,m}}

For each $r= n, n + 1, \cdots, m$, the convex set $C_r$  is of the form 
$C_r = \{x \in \Delta_n\,\mid\,   x_i - x_j \ge \delta\}$ where $1 \le i \not= j \le n$ and $-1 < \delta < 1$.
By setting $b := \delta$ and $v=[v_k] \in \mathbb{R}^n$ which is defined by:
\begin{equation}\label{eq:unv}
v_k := \begin{cases} 1 & \text{if } \quad k = i\\
-1 & \text{if } \quad k = j\\
 0 & \text{if } \quad k \notin\{i, j\} 
 \end{cases},
\end{equation}
we get that $C_r = C_{b, v}:= \{x\in \Delta_n \,\mid \, b \le \langle x, v \rangle \}$. As we know that 
$C_r \cap \mathbb{R}^n_{++}\not= \emptyset$ (Proposition~\ref{prop:f-box}),    we can apply Lemma \ref{lemma:KKT} and obtain with these notations: 

\begin{restatable}{lemma}{lemmaKKTOne}
\label{lemma:KKT1}\mbox{}
For  any   $z\in    \mathbb{R}^n_{++}$, set $\hat z = \Proj_{C_r}^f(z)$.

There is a pair $(\lambda^\star, \nu^\star)\in \mathbb{R}_+ \times \mathbb{R}$ such that for all $k\in\{1, 2, \dots, n\}$, we have:
\begin{equation}\label{eq:comphatyij}
 \hat z_k = \begin{cases} e^{\lambda^\star + \nu^\star} z_i & \text{if } \quad k = i\\
e^{ - \lambda^\star + \nu^\star} z_j &  \text{if } \quad k = j\\
 e^{ \nu^\star} z_k & \text{if } \quad k \notin\{i, j\} 
 \end{cases}.
\end{equation}
If $\delta  < \hat z_i - \hat z_j$, then $\lambda^\star = 0$ and $\hat z = z^\sharp$.
\end{restatable}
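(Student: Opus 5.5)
The plan is to specialize Lemma~\ref{lemma:KKT} to the pair $(b,v)$ with $b:=\delta$ and $v$ defined in (\ref{eq:unv}). Lemma~\ref{lemma:KKT} requires $C_r=C_{b,v}$ and $C_r\cap\mathbb{R}^n_{++}\neq\emptyset$. The first holds by construction, and the second follows from Proposition~\ref{prop:f-box}, since $\dot p\in\mathcal F^{\mathrm{box}}\cap\mathbb{R}^n_{++}\subseteq C_r$. Lemma~\ref{lemma:KKT} then gives a pair $(\lambda^\star,\nu^\star)\in\mathbb{R}_+\times\mathbb{R}$ with $\hat z_k=e^{\lambda^\star v_k+\nu^\star}z_k$ for all $k$. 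Substituting $v_i=1$, $v_j=-1$ and $v_k=0$ for $k\notin\{i,j\}$ yields the three cases of (\ref{eq:comphatyij}) directly.

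For the last claim, I would assume $\delta<\hat z_i-\hat z_j=\langle\hat z,v\rangle$, i.e.\ $b<\langle\hat z,v\rangle$. The final statement of Lemma~\ref{lemma:KKT} then gives $\lambda^\star=0$, so $\hat z_k=e^{\nu^\star}z_k$ for every $k$. Because $\hat z\in C_r\subseteq\Delta_n$, summing over $k$ gives $1=e^{\nu^\star}\Vert z\Vert_1$. Hence $e^{\nu^\star}=1/\Vert z\Vert_1$ and $\hat z=z/\Vert z\Vert_1=z^\sharp$.

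As a consistency check, one can note that in this case $z^\sharp$ itself lies in $C_r$, so Corollary~\ref{cor:projCzzNat} and (\ref{eq:projRule1}) also give $\Proj_{C_r}^f(z)=\Proj_{C_r}^f(z^\sharp)=z^\sharp$. The argument above does not need this.

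There is no real obstacle, because all the work is in Lemma~\ref{lemma:KKT}, which we may assume. The only delicate point is checking its hypotheses: that $C_r$ is exactly of the form $C_{b,v}$ with $\hat z\in\Delta_n$ (used for the normalization step), and that $C_r$ has nonempty intersection with $\mathbb{R}^n_{++}$, which Proposition~\ref{prop:f-box} provides.
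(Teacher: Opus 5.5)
Your proposal is correct and matches the paper's proof essentially step for step. You specialize Lemma~\ref{lemma:KKT} with $b=\delta$ and $v$ from (\ref{eq:unv}) to obtain (\ref{eq:comphatyij}), use its last clause to get $\lambda^\star=0$, and use $\hat z\in\Delta_n$ to get $e^{\nu^\star}=1/\Vert z\Vert_1$, with the nonemptiness of $C_r\cap\mathbb{R}^n_{++}$ justified via $\dot p$ exactly as the paper does just before the lemma.
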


See proof in Subsection \ref{subsec:proof:lemmaKKTOne}.

To compute $\hat z = \Proj_{C_r}^f(z)$, we need the following elementary result: 
\begin{restatable}{lemma}{lemmaseconddegree}
\label{lemma:seconddegree}\mbox{}
Let  $0 < \omega < 1, \, 0 < \omega' < 1$ and  $-1  < \delta < 1$, set $u:= 1 - \omega - \omega'$. 

If  $\omega - \omega' < \delta$ and $u\ge 0$,  then 
the  positive root $E$ of the second degree polynomial $ \omega (1 - \delta)x^2 - u\delta x - \omega'(1 + \delta)$ is the unique solution of the  equation in $\mathbb{R}_{++}$:
 \[\dfrac{\omega x - \omega' x^{-1}}{\omega x + \omega' x^{-1} +u} = \delta\] 
and we have  $E > 1$.
\end{restatable}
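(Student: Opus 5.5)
The plan is to clear denominators and reduce the equation to the stated quadratic, then read off positivity, uniqueness and the bound $E>1$ from the signs of the coefficients. Throughout, $x>0$.

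\textbf{Reduction to the quadratic.} Multiply numerator and denominator of $\dfrac{\omega x - \omega' x^{-1}}{\omega x + \omega' x^{-1} +u}$ by $x$. This gives $\dfrac{\omega x^2-\omega'}{\omega x^2+ux+\omega'}$. The denominator is strictly positive for $x>0$, because $\omega,\omega'>0$ and $u\ge 0$. Hence the equation is equivalent on $\mathbb{R}_{++}$ to
\[
\omega x^2-\omega'=\delta(\omega x^2+ux+\omega'),
\]
that is, to $Q(x):=\omega(1-\delta)x^2-u\delta x-\omega'(1+\delta)=0$.

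\textbf{Existence and uniqueness of the positive root.} Since $-1<\delta<1$, the leading coefficient $\omega(1-\delta)$ is strictly positive. The constant term $-\omega'(1+\delta)$ is strictly negative. The product of the roots is therefore negative, so $Q$ has two real roots of opposite signs. Exactly one root $E$ lies in $\mathbb{R}_{++}$, and by the equivalence above it is the unique solution of the original equation in $\mathbb{R}_{++}$.

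\textbf{Showing $E>1$.} Because $Q$ opens upward and its roots straddle $0$, we have $Q<0$ exactly on the open interval between the roots. Since $0<1$, it suffices to show $Q(1)<0$. Compute
\[
Q(1)=\omega(1-\delta)-u\delta-\omega'(1+\delta)=\omega-\omega'-\delta(\omega+\omega'+u).
\]
Now $\omega+\omega'+u=1$ by the definition of $u$, so $Q(1)=\omega-\omega'-\delta$. This is strictly negative by the hypothesis $\omega-\omega'<\delta$. Hence $1$ lies strictly between the roots, and $E>1$.

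The only delicate point is keeping track of the sign of the denominator, which is where $u\ge 0$ is used. The rest is routine.
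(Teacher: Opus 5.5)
Your proof is correct and follows essentially the same route as the paper's: you clear denominators to reach the quadratic, use $\omega(1-\delta)>0$ and $-\omega'(1+\delta)<0$ to get roots of opposite sign, and evaluate at $x=1$ to obtain $\omega-\omega'-\delta<0$, hence $E>1$. You also state explicitly that the denominator is strictly positive, which is where $u\ge 0$ is used; the paper leaves this step implicit.
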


See proof in Subsection \ref{subsec:proof:lemma:seconddegree}.

For each $r= n, n + 1, \cdots, m$,   the Bregman projection on   the convex set   
$C_r = \{x \in \Delta_n\,\mid\,   x_i - x_j \ge \delta\}$ where $1 \le i \not= j \le n$ and $-1 < \delta < 1$ is given by:

\begin{restatable}{proposition}{propprojCbvTwo}
\label{prop:projCbv2}
    Let $z\in \mathbb{R}^n_{++}$ and $\hat z:= \Proj_{C_r}^f(z)$ where $C_r:= \{x\in \Delta_n\,\mid\, x_i - x_j \ge \delta\}$ with $1 \le i \not= j \le n$ and $-1 < \delta < 1$. 

Set    $s := \dfrac{1}{\Vert z \Vert_1} (z_i - z_j)$,  $u := \Vert z \Vert_1 - z_i - z_j$ and notice that $u \ge 0$. 
\begin{enumerate}
    \item If  $s \ge \delta$ then  $z^\sharp\in C_r$ and then   $\hat z =   
    \Proj_{C_r}^f(z^\sharp) = z^\sharp$.
    \item If  $s < \delta$, then the equation in $\mathbb{R}_{++}$:
    \[ \dfrac{z_i x - z_j x^{-1}}{z_i x + z_j x^{-1} + u} = \delta\]
    admits  a unique solution   $E > 1$.  Set $D:=  z_i E + z_j E^{-1} + u$.
    
    The components of the vector $\hat z  = [\hat z_k]$ are given by

    \begin{equation}\label{eq:comp1}
      \hat z_k = \begin{cases}
      \dfrac{E}{D} z_i    & \text{ if }   k =i \\
      \dfrac{E^{-1}}{D} z_j   & \text{ if }   k = j\\
     \dfrac{1}{D} z_k   & \text{ if }   k \notin\{i, j\} \\
  \end{cases}.
    \end{equation}
\end{enumerate}
\end{restatable}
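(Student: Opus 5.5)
First reduce to the normalized vector. By Corollary~\ref{cor:projCzzNat}, applicable because $C_r\cap\mathbb{R}^n_{++}\neq\emptyset$ by Proposition~\ref{prop:f-box}, we have $\Proj_{C_r}^f(z)=\Proj_{C_r}^f(z^\sharp)$. We also have $s=z^\sharp_i-z^\sharp_j$ and $u/\Vert z\Vert_1=1-z^\sharp_i-z^\sharp_j\ge 0$, since all coordinates of $z$ are positive. Hence we may work with $z^\sharp\in\Delta_n\cap\mathbb{R}^n_{++}$ throughout, and $u\ge 0$ is immediate.

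\emph{Case 1.} If $s\ge\delta$, then $z^\sharp\in C_r\cap\mathbb{R}^n_{++}$, so $\Proj_{C_r}^f(z^\sharp)=z^\sharp$ by (\ref{eq:projRule1}).

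\emph{Case 2.} Assume $s<\delta$. Apply Lemma~\ref{lemma:KKT1} to get $(\lambda^\star,\nu^\star)$ with $\hat z_i=e^{\lambda^\star+\nu^\star}z_i$, $\hat z_j=e^{-\lambda^\star+\nu^\star}z_j$, and $\hat z_k=e^{\nu^\star}z_k$ otherwise. Set $E:=e^{\lambda^\star}\ge 1$. Since $\hat z\in\Delta_n$, the coordinates sum to one, so $e^{\nu^\star}(z_iE+z_jE^{-1}+u)=1$, i.e.\ $e^{\nu^\star}=1/D$. This gives formula (\ref{eq:comp1}) once $E$ is identified.

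To identify $E$, first show that the constraint is active. If $\hat z_i-\hat z_j>\delta$, then Lemma~\ref{lemma:KKT1} gives $\hat z=z^\sharp$, so $s>\delta$, contradicting $s<\delta$. Hence $\hat z_i-\hat z_j=\delta$. Substituting the formulas and dividing numerator and denominator by $\Vert z\Vert_1$ shows that $E$ solves
\[
\frac{\omega x-\omega' x^{-1}}{\omega x+\omega' x^{-1}+u'}=\delta,
\]
with $\omega=z^\sharp_i$, $\omega'=z^\sharp_j$, $u'=1-\omega-\omega'\ge 0$, which is the same as the equation in the statement after multiplying through by $\Vert z\Vert_1$. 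The hypotheses of Lemma~\ref{lemma:seconddegree} hold: $0<\omega,\omega'<1$ (since $n\ge 2$ and $\omega+\omega'\le 1$ with both positive), $\omega-\omega'=s<\delta$, $-1<\delta<1$, and $u'\ge 0$. That lemma then gives uniqueness of the positive solution, and that this solution satisfies $E>1$. So $E$ is exactly the root described, and $\hat z$ is given by (\ref{eq:comp1}).

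The main delicate step is the complementary-slackness argument that the constraint is active; it is handled by the contrapositive of the last sentence of Lemma~\ref{lemma:KKT1}. The only other point to watch is the edge case $\omega+\omega'=1$ (e.g.\ $n=2$), where $u'=0$. This case is still covered, because Lemma~\ref{lemma:seconddegree} allows $u\ge 0$.
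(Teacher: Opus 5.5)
Your proof is correct and follows essentially the same route as the paper: you reduce to $z^\sharp$ via Corollary~\ref{cor:projCzzNat}, use the multiplicative KKT form of Lemma~\ref{lemma:KKT1}, normalize to get $e^{\nu^\star}$, show the constraint is active, and identify $e^{\lambda^\star}$ as the root $E>1$ via Lemma~\ref{lemma:seconddegree}. The differences are only cosmetic. You apply Lemma~\ref{lemma:KKT1} directly to $z$, so $e^{\nu^\star}=1/D$ appears at once, whereas the paper works with $y=z^\sharp$ and obtains $e^{\nu^\star}=\Vert z\Vert_1/D$. You also get activity from the contrapositive of the last clause of Lemma~\ref{lemma:KKT1}, while the paper first shows $\lambda^\star>0$ from $\hat y\neq y$.
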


See proof in Subsection \ref{subsec:proof:prop:projCbv2}.

\subsection{Applying Dykstra’s algorithm with Bregman projections on \texorpdfstring{$\mathcal{F}^{\mathrm{box}}$}{Fbox}}
\label{subsec:dykstraalgo}

The algorithm of \cite[Theorem 3.2]{bauschke2000dykstras} will be applied with the negative entropy function $f$ and the convex set  $\mathcal{F}^{\mathrm{box}} = \bigcap_{i=1}^m C_i$, see (\ref{eq:specializationCi}) for the definition of the family of the closed convex sets $(C_i)_{i=1}^m$. Following \cite{bauschke2000dykstras}, we  extend      by $m$-periodicity the  finite family 
$(C_i)_{i=1}^m$ of convex subsets together with their Bregman projection $( \Proj_i)_{i=1}^m$ to sequences 
$(C_t)_{t \ge 1}$ and $(\Proj_t)_{t\ge 1}$:

Let $[\cdot]:\mathbb{N}\to\{1,\dots,m\}$ be the $m$-periodic map defined by
\[
[t] := 1 + ((t-1)\bmod m).
\]
Then $[t]=t$ for all $t\in\{1,\dots,m\}$ and $[t+m]=[t]$ for all $t\in\mathbb{N}$.\\
For all $t\ge 1$, set
\begin{equation}\label{eq:projt}
C_t := C_{[t]} \quad \text{and} \quad \Proj_t := \Proj_{C_{[t]}}^f.    
\end{equation}
\begin{algo}[Dykstra's algorithm with Bregman projections {\cite[Theorem~3.2]{bauschke2000dykstras}}]\label{ex:algo:dykstra}
Let $f$ be the negative entropy function and let $C_1,\dots,C_m$ be non-empty closed convex sets such that
\[
\bigcap_{i=1}^m C_i \,\cap\, \mathbb{R}^n_{++}\neq\varnothing.
\]
Let $z^{(0)}\in \mathbb{R}^n_{++}$ and set
\[
d^{(-(m-1))}=\dots=d^{(-1)} =d^{(0)}:=0.
\]
\end{algo}
For $t\ge 1$,  perform the updates 
\begin{equation}\label{eq:AlgoD0}
z^{(t)}   = (\Proj_{t} \circ \nabla f^\ast)(\nabla f(z^{(t -1)}) + d^{(t - m)}), \quad d^{(t)}   = \nabla f(z^{(t - 1)}) + d^{(t - m)} - \nabla f(z^{(t)}).    
\end{equation}
The fundamental result of \cite[Theorem~3.2]{bauschke2000dykstras} is that this algorithm converges: the sequence $(z^{(t)})_{t\ge 1}$ converges in $\mathbb{R}^n_{++}$ to 
$\Proj_{\cap _i \,C_i}(z^{(0)})$.\\
For the negative entropy $f(x)=\sum_{k=1}^n x_k\log x_k$ and the closed  convex subsets $C_1, C_2, \dots C_m$ such that     $\bigcap_{i=1}^m C_i \,\cap\, \mathbb{R}^n_{++}\neq\varnothing$,     we  give an explicit formula for $z^{(t)} $ in terms of $z^{(0)}, z^{(1)}, \cdots, z^{(t - 1)}$ and the $m$  Bregman projections $\Proj_{i}$. 

We use the following notations: 
\begin{notation}\label{notation:seqs}
\mbox{}
    \begin{itemize}
        \item For any vector $u =[u_k]\in \mathbb{R}^n$, we denote by  $\text{exp}(u) \in \mathbb{R}^n_{++}$ the vector
$\text{exp}(u) := (e^{u_1}, e^{u_2}, \dots, e^{u_n})$.
        \item For any vectors  $u =[u_k]\in \mathbb{R}^n$ and  $v =[v_k]\in \mathbb{R}^n$, we denote by  $u \,.\,  v$ the vector
$u \,.\, v := (u_1  v_1, u_2  v_2, \dots, u_n  v_n)$.
\item For any vectors $u =[u_k]\in \mathbb{R}^n_{++}$ and $v =[v_k]\in \mathbb{R}^n_{++}$, we denote by $\dfrac{u}{v}$
and $\log \dfrac{u}{v}$ respectively the vector $\dfrac{u}{v}:= (\dfrac{u_1}{v_1}, \dfrac{u_2}{v_2}, \dots \dfrac{u_n}{v_n}) $ and the vector
$\log \dfrac{u}{v} := (\log \dfrac{u_1}{v_1}, \log \dfrac{u_2}{v_2}, \dots, \log \dfrac{u_n}{v_n})$.
    \end{itemize}
\end{notation}

By easy computation, one can check the following result that we will use:
\begin{lemma}\label{lemma:explog}
 For any vector systems $(u^{(s)})_{1 \le s \le T}$  and $(v^{(s)})_{1 \le s \le T}$ in $\mathbb{R}^n_{++}$, we have:
 \begin{equation}\label{eq:explog}
 \log(\prod_{s= 1}^T \dfrac{u^{(s)}}{v^{(s)}}) =  \sum_{s= 1}^T \log(\dfrac{u^{(s)}}{v^{(s)}}), \quad 
 \text{exp}(\log \prod_{s= 1}^T \dfrac{u^{(s)}}{v^{(s)}}) = \prod_{s= 1}^T \dfrac{u^{(s)}}{v^{(s)}}.
     \end{equation}
\end{lemma}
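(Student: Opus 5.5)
The statement to prove is Lemma~\ref{lemma:explog}: for families $(u^{(s)})_{1\le s\le T}$ and $(v^{(s)})_{1\le s\le T}$ in $\mathbb{R}^n_{++}$, the componentwise logarithm of the componentwise product of the ratios $u^{(s)}/v^{(s)}$ equals the sum of the componentwise logarithms, and $\text{exp}$ undoes $\log$ on that product.

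I will argue coordinate by coordinate, since every operation in Notation~\ref{notation:seqs} acts componentwise. First I fix the meaning of $\prod_{s=1}^T \frac{u^{(s)}}{v^{(s)}}$ as iterated use of the product $u\,.\,v$. By induction on $T$, its $k$-th coordinate is $\prod_{s=1}^T u^{(s)}_k/v^{(s)}_k$. The base case $T=1$ is the definition of $\frac{u}{v}$. The inductive step uses $(a\,.\,b)_k=a_kb_k$.

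Next I note that each $u^{(s)}_k/v^{(s)}_k>0$, because both vectors lie in $\mathbb{R}^n_{++}$. Hence the product is a vector in $\mathbb{R}^n_{++}$, and its componentwise logarithm is well defined. The $k$-th coordinate of $\log\prod_s \frac{u^{(s)}}{v^{(s)}}$ is then $\log\prod_s (u^{(s)}_k/v^{(s)}_k)$. Applying the scalar identity $\log(ab)=\log a+\log b$ for $a,b>0$, again by induction on $T$, turns this into $\sum_s \log(u^{(s)}_k/v^{(s)}_k)$. That is the $k$-th coordinate of $\sum_s \log\frac{u^{(s)}}{v^{(s)}}$, because vector addition is componentwise. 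This proves the first identity.

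For the second identity, I use $e^{\log x}=x$ for $x>0$ in each coordinate of the positive vector $\prod_s \frac{u^{(s)}}{v^{(s)}}$.

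I expect no real obstacle. The only point that needs care is checking positivity before taking logarithms. This is why the lemma assumes $\mathbb{R}^n_{++}$: that hypothesis is what makes every logarithm well defined.
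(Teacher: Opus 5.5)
Your proposal is correct and matches the paper, which gives no written proof and simply states that the lemma follows "by easy computation". Your coordinatewise argument (induction on $T$ using $\log(ab)=\log a+\log b$ for $a,b>0$, plus $e^{\log x}=x$, after checking positivity from $\mathbb{R}^n_{++}$) is exactly that computation written out.
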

\qed

\begin{lemma}\label{lemma:AlgoD1}
Algorithm~\ref{ex:algo:dykstra} can be equivalently written as follows: for
each $t\ge 1$, set:
\begin{equation}\label{eq:ut}
 u^{(t)}  = z^{(t-1)} \,.\,  \text{exp}( d^{(t-m)}) \quad \text{with  $d^{(-(m-1))}=\dots=d^{(-1)} =d^{(0)}:=0$}, 
\end{equation}
then (\ref{eq:AlgoD0}) is equivalent to

\begin{equation}\label{eq:AlgoD2}
 z^{(t)} = \Proj_t\!\bigl(u^{(t)}\bigr),  \quad  
d^{(t)} = d^{(t-m)} + \log\!\left(\frac{z^{(t-1)}}{z^{(t)}}\right) \quad \text{for all $t\ge 1$}.
\end{equation}    
\end{lemma}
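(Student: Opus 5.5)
The plan is to verify that the two formulations produce the same iterates $z^{(t)}$ and $d^{(t)}$. I will argue by induction on $t\ge 1$, with the common initialization $z^{(0)}$ and $d^{(-(m-1))}=\dots=d^{(0)}=0$. The only tools needed are the explicit gradients $\nabla f(x)=(\log x_k+1)_k$ on $\mathbb{R}^n_{++}$ and $\nabla f^\ast(y)=(e^{y_k-1})_k$ from (\ref{eq:gradfstar}), together with the componentwise identities of Lemma~\ref{lemma:explog}.

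\textbf{First update.} Assume $z^{(t-1)}\in\mathbb{R}^n_{++}$; this holds for $t=1$ by hypothesis, and afterwards because each $\Proj_t$ maps $\mathbb{R}^n_{++}$ into $C_{[t]}\cap\mathbb{R}^n_{++}$. Assume also that $d^{(t-m)}\in\mathbb{R}^n$ is the same in both formulations.
\begin{itemize}
\item The $k$-th component of $\nabla f(z^{(t-1)})+d^{(t-m)}$ is $\log z^{(t-1)}_k+1+d^{(t-m)}_k$.
\item Applying $\nabla f^\ast$ subtracts $1$ and exponentiates, giving $z^{(t-1)}_k e^{d^{(t-m)}_k}$.
\item Hence $\nabla f^\ast\bigl(\nabla f(z^{(t-1)})+d^{(t-m)}\bigr)=z^{(t-1)}\,.\,\exp(d^{(t-m)})=u^{(t)}$, which lies in $\mathbb{R}^n_{++}$.
\end{itemize}
So the first relation of (\ref{eq:AlgoD0}) reads $z^{(t)}=\Proj_t(u^{(t)})$.

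\textbf{Second update.} Substituting the gradients into $d^{(t)}=\nabla f(z^{(t-1)})+d^{(t-m)}-\nabla f(z^{(t)})$, the constant terms $+1$ cancel. This leaves $d^{(t)}_k=d^{(t-m)}_k+\log z^{(t-1)}_k-\log z^{(t)}_k$. By Lemma~\ref{lemma:explog}, this is the vector identity $d^{(t)}=d^{(t-m)}+\log\bigl(z^{(t-1)}/z^{(t)}\bigr)$.

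\textbf{Converse and induction.} Every step above is an identity, so the converse direction holds as well. The induction then carries both the positivity of $z^{(t)}$ and the agreement of the $d$-sequences forward. The indices $t-m\le 0$ are covered by the zero initialization, so the two sequences coincide for all $t$.

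\textbf{Main obstacle.} There is no deep step. The only point needing care is that $\nabla f$ is defined only on $\mathbb{R}^n_{++}$, so $z^{(t)}$ must stay in $\mathbb{R}^n_{++}$ at every step. This is exactly what the codomain of the Bregman projection, (\ref{eq:projB}), guarantees, via \cite[Theorem 3.12]{bauschke1997legendre} and the assumption $\bigcap_i C_i\cap\mathbb{R}^n_{++}\neq\varnothing$, which ensures each $C_i\cap\mathbb{R}^n_{++}\neq\varnothing$.
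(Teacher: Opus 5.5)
Your proposal is correct and follows the paper's proof. Both substitute $(\nabla f(z))_k=\log z_k+1$ and $(\nabla f^\ast(y))_k=e^{y_k-1}$ into (\ref{eq:AlgoD0}) and check the two identities componentwise; your induction keeping $z^{(t)}\in\mathbb{R}^n_{++}$ (so that $\nabla f(z^{(t)})$ is defined) makes explicit a well-definedness point the paper leaves implicit.
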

\begin{proof}
As for  every $z=(z_1,\dots,z_n)\in\mathbb{R}^n_{++}$, $y=(y_1,\dots,y_n)\in\mathbb{R}^n$
  and   $k\in\{1,\dots,n\}$, we have 
\[
(\nabla f(z))_k = \log(z_k)+1 \quad \text{and} \quad (\nabla f^\ast(y))_k = \exp(y_k-1).
\]
We establish the two vector equalities (\ref{eq:AlgoD2})   component by component using  (\ref{eq:AlgoD0}) and  Notation \ref{notation:seqs}. 
\end{proof}

\begin{restatable}{proposition}{propzd}
\label{prop:zd}
 With the convention $z^{(t)}:=z^{(0)}$  and $d^{(t)}:=0$ for all $t\le 0$, the following formulas hold: 

For any $j\ge 1$ and any
$h\in\{1,\dots,m\}$, letting $t=(j-1)m+h$, we have:
\begin{equation}\label{eq:AlgoD3}
u^{(t)}
= \begin{cases}
    z^{(t-1)}\, . \,
\prod_{\ell=0}^{j-2}
\frac{z^{(\ell m+h-1)}}{z^{(\ell m+h)}}   & \text{ if } j > 1 \\
z^{(t-1)} & \text{ if } j = 1
\end{cases}, \quad
z^{(t)}=\Proj_h\!\bigl(u^{(t)}\bigr), 
  \quad d^{(t)}
=
\log\!\left(
\prod_{\ell=0}^{j-1}
\frac{z^{(\ell m+h-1)}}{z^{(\ell m+h)}}
\right).
    \end{equation} 
\end{restatable}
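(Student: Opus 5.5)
The plan is to prove the three formulas by strong induction on $t\ge 1$, starting from the simplified recursion of Lemma~\ref{lemma:AlgoD1}. Write $t=(j-1)m+h$ with $j\ge 1$ and $h\in\{1,\dots,m\}$. Since $[t]=h$ by $m$-periodicity, we have $\Proj_t=\Proj_h$, so the middle formula $z^{(t)}=\Proj_h(u^{(t)})$ is just (\ref{eq:AlgoD2}) rewritten. The real content is the closed forms for $d^{(t)}$ and $u^{(t)}$, and I would prove the $d$-formula first.

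\textbf{Step 1 (the $d$-formula).} I argue by induction on $j$ with $h$ fixed.

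For $j=1$ we have $t-m=h-m\le 0$, so $d^{(t-m)}=0$. Then (\ref{eq:AlgoD2}) gives $d^{(t)}=\log\bigl(z^{(h-1)}/z^{(h)}\bigr)$, which is the product with the single factor $\ell=0$. The convention $z^{(0)}$ for index $h-1=0$ is consistent with the initialization.

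For $j>1$, the index $t-m=(j-2)m+h$ has the same residue $h$. By the induction hypothesis,
\[
d^{(t-m)}=\log\Bigl(\prod_{\ell=0}^{j-2}\tfrac{z^{(\ell m+h-1)}}{z^{(\ell m+h)}}\Bigr).
\]
Adding $\log\bigl(z^{(t-1)}/z^{(t)}\bigr)$, which is the $\ell=j-1$ factor since $t-1=(j-1)m+h-1$, and using Lemma~\ref{lemma:explog} to merge the sum of logs into the log of a product, gives the claim.

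The only points needing care are that all quantities are positive vectors, which holds because each $\Proj_h$ maps $\mathbb{R}^n_{++}$ into $\mathbb{R}^n_{++}$, and that the index $h-1$ may equal $0$ or belong to the previous cycle. Neither affects the formula.

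\textbf{Step 2 (the $u$-formula).} Substitute the Step~1 expression for $d^{(t-m)}$ into (\ref{eq:ut}), namely $u^{(t)}=z^{(t-1)}\,.\,\exp(d^{(t-m)})$.

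For $j=1$, $d^{(t-m)}=0$, hence $u^{(t)}=z^{(t-1)}$.

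For $j>1$, Step~1 applied at index $t-m$ (with $j-1$ in place of $j$) gives $\exp(d^{(t-m)})=\prod_{\ell=0}^{j-2}z^{(\ell m+h-1)}/z^{(\ell m+h)}$, by the second identity in Lemma~\ref{lemma:explog}. This yields the stated product.

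The main obstacle is bookkeeping rather than mathematics. One must organize the induction so that $d^{(t-m)}$ is already known when $u^{(t)}$ is computed: induct on $t$, and note that $t-m<t$ shares the residue $h$. One must also check the boundary cases $h=1$, where $z^{(t-1)}$ comes from the previous cycle's index $m$, and $t\le m$, against the stated conventions.
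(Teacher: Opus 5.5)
Your proposal is correct and follows essentially the same route as the paper: induction on $j$ (the paper treats all $h$ simultaneously, you fix $h$), using the recurrences of Lemma~\ref{lemma:AlgoD1} to get $d^{(t-m)}$ from the induction hypothesis, then Lemma~\ref{lemma:explog} to read off $u^{(t)}$ and $d^{(t)}$. Proving the $d$-formula before the $u$-formula is only a reorganization, and it is harmless because the $d$-recurrence does not use the closed form of $u^{(t)}$.
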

See proof in Subsection \ref{subec:proof:prop:zd}.

\begin{example}\label{ex:kldiv-and-dykstra-n3}
(Example~\ref{ex:fbox-eps-and-dotg}, cont'ed)\\
Fix $\varepsilon:=0.001$ (so that
$0<\varepsilon\le 0.005$) and consider the corresponding set
$\mathcal F^{\mathrm{box}}\subseteq\Delta_3$ defined in (\ref{eq:exfbox}):
\[   \mathcal F^{\mathrm{box}}
=
\Bigl\{p\in\Delta_3\ \Bigm| 
p_1 \ge 0.49, \ \
p_1+p_2 \ge 0.50,\ \
0.001 \le p_1-p_2 \le 0.49,\ \
0.001 \le p_2-p_3 \le 0.005\Bigr\}.\]
Let
\[
q=(0.48,\,0.261,\,0.259)\in\Delta_3\cap\mathbb{R}^3_{++}.
\]
For $n=3$ we have $m=3n-3=6$, and the constraint sets in \eqref{eq:specializationCi} are
\[
C_1=\{p\in\Delta_3:\ p_1\ge 0.49\},\qquad
C_2=\{p\in\Delta_3:\ p_1+p_2\ge 0.50\},
\]
\[
C_3=\{p\in\Delta_3:\ p_1-p_2\ge 0.001\},\qquad
C_4=\{p\in\Delta_3:\ p_2-p_3\ge 0.001\},
\]
\[
C_5=\{p\in\Delta_3:\ p_1-p_2\le 0.49\},\qquad
C_6=\{p\in\Delta_3:\ p_2-p_3\le 0.005\},
\]
so that $\mathcal F^{\mathrm{box}}=\bigcap_{i=1}^6 C_i$.
By construction,
\[
q\notin C_1 \quad (\text{since } q_1=0.48<0.49),
\]
while $q\in C_2\cap C_3\cap C_4\cap C_5\cap C_6$ (indeed $q_2-q_3=0.002\in[0.001,0.005]$).
\medskip
We illustrate the equivalent formulation of Algorithm~\ref{ex:algo:dykstra} given in
Lemma~\ref{lemma:AlgoD1}:
\[
u^{(t)} = z^{(t-1)}\,.\,\exp\!\bigl(d^{(t-m)}\bigr),
\qquad
z^{(t)}=\Proj_t\!\bigl(u^{(t)}\bigr),
\qquad
d^{(t)}=d^{(t-m)}+\log\!\left(\frac{z^{(t-1)}}{z^{(t)}}\right),
\]
initialized with $z^{(0)}=q$ and $d^{(-(m-1))}=\cdots=d^{(0)}=0$.\\
\medskip
\noindent\emph{Step $t=1$ (projection onto $C_1$).}\\
Since $m=6$, we have $d^{(1-m)}=d^{(-5)}=0$, hence
\[
u^{(1)}=z^{(0)}\,.\,\exp(d^{(-5)})=z^{(0)}=q.
\]
We compute $z^{(1)}=\Proj_{C_1}^f(u^{(1)})=\Proj_{C_1}^f(q)$ explicitly.
Here $C_1=\{p\in\Delta_3:\ p_1\ge b\}$ with $b:=0.49$ and $A_1=\{1\}$.
Since $\|q\|_1=1$, we have $q^\sharp=q$ and
\[
s:=\frac{1}{\|q\|_1}\sum_{k\in A_1} q_k = q_1 = 0.48 < b,
\]
so we are in case~(2) of Proposition~\ref{prop:projCbv1} with $r=1$. Therefore,
\[
z^{(1)}_1=\frac{b}{s}\,\frac{q_1}{\|q\|_1}=\frac{0.49}{0.48}\,0.48=0.49,
\qquad
z^{(1)}_k=\frac{1-b}{1-s}\,\frac{q_k}{\|q\|_1}=\frac{0.51}{0.52}\,q_k
\quad \text{for } k\in\{2,3\}.
\]
We obtain, rounded to $4$ decimals:
\[
z^{(1)}=\Proj_{C_1}^f(q)=(0.49,\,0.2560,\,0.2540).
\]
The corresponding dual update is
\[
d^{(1)}=\log\!\left(\frac{z^{(0)}}{z^{(1)}}\right)
=\log\!\left(\frac{q}{z^{(1)}}\right)
\simeq (-0.0206,\,0.0193,\,0.0195),
\]
since
\[
\exp(d^{(1)})=\frac{q}{z^{(1)}}\simeq (0.9796,\,1.0195,\,1.0197).
\]
\medskip
\noindent We can see that the other constraints are inactive after step $t=1$:
a direct check shows that $z^{(1)}\in C_2\cap\cdots\cap C_6$:
\[
z^{(1)}_1+z^{(1)}_2 = 0.49+0.2560 = 0.7460 \ge 0.50,
\]
\[
z^{(1)}_1-z^{(1)}_2 = 0.49-0.2560 = 0.2340 \ge 0.001
\quad\text{and}\quad
z^{(1)}_1-z^{(1)}_2 = 0.2340 \le 0.49,
\]
\[
z^{(1)}_2-z^{(1)}_3 = 0.2560-0.2540 = 0.0020 \in [0.001,\,0.005].
\]
Hence $z^{(1)}\in\mathcal F^{\mathrm{box}}$. Since $z^{(1)}=\Proj_{C_1}^f(q)$ and
$\mathcal F^{\mathrm{box}}\subseteq C_1$, it follows that
\[
p^\star=\arg\min_{p\in\mathcal F^{\mathrm{box}}}D_{\mathrm{KL}}(p\|q)=z^{(1)}.
\]
Indeed, for $t=2,\dots,6$ we have $d^{(t-m)}=d^{(t-6)}=0$, hence $u^{(t)}=z^{(t-1)}$.
Moreover, since $z^{(1)}\in C_2\cap\cdots\cap C_6$, each projection is inactive:
\[
z^{(2)}=\Proj_{C_2}^f(z^{(1)})=z^{(1)},\quad
z^{(3)}=\Proj_{C_3}^f(z^{(2)})=z^{(1)},\ \dots,\ 
z^{(6)}=\Proj_{C_6}^f(z^{(5)})=z^{(1)}.
\]
Accordingly, $d^{(t)}=0$ for $t=2,\dots,6$ because $z^{(t)}=z^{(t-1)}$ and $d^{(t-6)}=0$.
\end{example}

\section{Experiments}
\label{sec:numerical}

This section presents three empirical studies:
\begin{itemize}
    \item The first study aims to provide an empirical evaluation of Algorithm~\ref{ex:algo:dykstra} on synthetic instances.
    \item The second study considers a synthetic learning problem with possibilistic annotations.
We use a multi-class classification setting where $\mathcal Y:=\{1,\dots,n\}$ denotes the class set, and where each training item is a pair $(x,\pi)$, with $x\in\mathbb{R}^d$ a feature vector and $\pi$ a normalized possibility distribution on $\mathcal Y$. In this study, we compare two training objectives: 
Model A uses a projection-based target: given the current prediction $q(x)$ of the model on input $x$, it defines the target as the Kullback--Leibler projection of $q(x)$ onto the admissible set $\mathcal F^{\mathrm{box}}(\pi)$ induced by $\pi$, thereby producing a probability distribution consistent with the possibilistic information (Figure~\ref{fig:learning-pipeline}).
Model B uses a fixed probabilistic target, namely the antipignistic probability distribution $\dot p(\pi)\in\Delta_n$, see~\eqref{eq:ppoint}, derived from $\pi$.
\item The third study evaluates the same projection-based approach on a real Natural Language Inference (NLI) task \cite{bowman2015large} derived from the ChaosNLI dataset \cite{nie2020can}, where possibilistic annotations are constructed from crowd vote distributions. In this study, we compare the projection-based objective of Model A with two fixed-target baselines: Model B, which uses the antipignistic probability derived from the possibilistic annotation, and Model C, which uses the normalized vote proportions directly.
\end{itemize}

All experiments are conducted in Python~3.12 on a Mac M2 (16\,GB RAM) \footnote{Code is available at \url{https://github.com/ibaaj/probabilistic-classification-from-possibilistic-data}.}.
The Kullback--Leibler projections are computed using a compiled C++ implementation of Dykstra's algorithm interfaced with the Python code, and small positive clipping of probabilities (at level $10^{-15}$) is used whenever logarithms are evaluated.

The rest of the section is structured as follows.
Subsection~\ref{subsec:numerical:computingzt} describes the numerically stable implementation of Dykstra’s iterations used throughout.
Subsection~\ref{subsec:experimentalalgoone} empirically evaluates Algorithm~\ref{ex:algo:dykstra}.
Subsection~\ref{subsec:topk-learning} presents the synthetic learning experiment: dataset generation, training objectives, training and projection settings, and the evaluation methodology used to compare the two models in terms of predictive performance.
Subsection~\ref{subsec:chaosnli} presents the ChaosNLI experiment and compares projection-based and fixed-target objectives on real data with naturally ambiguous annotations.

\subsection{Practical implementation of Dykstra's algorithm with our  projections}
\label{subsec:numerical:computingzt}
This subsection specifies the implementation of Dykstra’s algorithm
(Algorithm~\ref{ex:algo:dykstra}, equivalently written in~\eqref{eq:AlgoD2})
used to compute the Kullback-Leibler projection on
$\mathcal{F}^{\mathrm{box}}$.

While the iterates are defined by the update (\ref{eq:AlgoD2}):
\[
u^{(t)} = z^{(t-1)} \, . \, \exp\!\bigl(d^{(t-m)}\bigr), 
\]
the direct evaluation of this expression may overflow in floating-point arithmetic during long runs when some components of $d^{(t-m)}$ become large. 

We therefore introduce a numerically stable reformulation for the computation of the vector $u^{(t)}$ used in
\eqref{eq:AlgoD2}. 
We compute $u^{(t)}$ from its logarithm and a constant shift.\\
Let
\[
\ell^{(t)}_k := \log u^{(t)}_k   = \log z^{(t-1)}_k + d^{(t-m)}_k,
\qquad
c_t = \max_{1 \le k \le n} \ell^{(t)}_k,
\]
and define
\[
\widetilde u^{(t)}_k = \exp\!\bigl(\ell^{(t)}_k - c_t\bigr), \qquad k\in \{1 , 2 , \dots  , n\}.
\]
Then $\widetilde u^{(t)} = e^{-c_t}u^{(t)}$, i.e., $\widetilde u^{(t)}$ differs from $u^{(t)}$ only by a strict positive scalar factor.\\
This scalar factor does not alter the subsequent projection step thanks to Lemma \ref{lemma:Projty} and Corollary \ref{cor:projCzzNat}: 
\[
z^{(t)}=\mathrm{Proj}^{f}_{C_{t}}\!\bigl(u^{(t)}\bigr)=\mathrm{Proj}^{f}_{C_{t}}\!\bigl(\widetilde u^{(t)}\bigr).
\]
The subtraction of $c_t$ is only introduced to prevent overflow in the evaluation of $\exp(\ell^{(t)}_k)$.

\subsection{Experimental protocol for evaluating Algorithm~\ref{ex:algo:dykstra} on synthetic data}
\label{subsec:experimentalalgoone}

For a given configuration (dimension $n$, tolerance parameter $\tau \in [0,1]$, and
maximal number of Dykstra cycles $K_{\max}$), we perform 100 independent runs of the following experiment:

\begin{enumerate}
  \item 
  We first draw at random a strictly positive normalized possibility distribution $\pi$ on $\mathcal Y:=\{1,\dots,n\}$.   We then compute:
  \begin{itemize}
    \item the permutation $\sigma$ that sorts $\pi$ in nonincreasing order,
    \[
      \pi_{\sigma(1)} \ge \pi_{\sigma(2)} \ge \cdots \ge \pi_{\sigma(n)} > 0,
    \]
    \item the sorted possibility levels
    \(\tilde\pi_r := \pi_{\sigma(r)}\), $r=1,\dots,n$, with $\tilde\pi_1=1$,
    \item the antipignistic reverse probability distribution
      $\dot p\in\Delta_n$ associated with $\tilde\pi$ by~\eqref{eq:p_i},
    \[
      \dot p_{\sigma(r)}
      :=
      \sum_{j=r}^n \frac{\tilde\pi_j-\tilde\pi_{j+1}}{j},
      \qquad r=1,\dots,n,
    \]
      with the convention $\tilde\pi_{n+1}=0$,
    \item the adjacent gaps of $\dot p$ in the $\pi$-order
    \[
      \dot g_r
      := \frac{\tilde\pi_r-\tilde\pi_{r+1}}{r},
      \qquad r=1,\dots,n-1.
    \]
  \end{itemize}
  Since the possibility distribution $\pi$ is strictly positive, we have $0 \le \dot g_r < 1$ for all $r$, and $\dot g_r = 0$ exactly when $\tilde\pi_r = \tilde\pi_{r+1}$.

  \item 
  From $(\tilde\pi_r)_{r=1}^n$, whose associated adjacent gaps are $(\dot g_r)_{r=1}^{n-1}$ (Section~\ref{sec:prob-constraints}),  we distinguish indices where two consecutive values are equal and indices where a strict decrease occurs:
  \[
    \mathcal R_{\mathrm{equal}}
    := \{ r\in\{1,\dots,n-1\}: \tilde\pi_r=\tilde\pi_{r+1} \},
    \quad
    \mathcal R_{\mathrm{strict}}
    := \{r\in\{1,\dots,n-1\}: \tilde\pi_r>\tilde\pi_{r+1}\}.
  \]
   In the implementation, the set $\mathcal R_{\mathrm{equal}}$ is determined using a tolerance parameter \texttt{tie\_tol}: two adjacent values are treated as equal whenever $|\tilde\pi_r-\tilde\pi_{r+1}| \le \texttt{tie\_tol}$. Here, \texttt{tie\_tol}=0, so $\mathcal R_{\mathrm{equal}}$ corresponds to exact equalities. We set
  \[
    g_{\min} := \min_{r\in\mathcal R_{\mathrm{strict}}} \dot g_r,
    \qquad
    g_{\max} := \max_{r\in\mathcal R_{\mathrm{strict}}} \dot g_r,
    \qquad
    \varepsilon := \min\!\bigl(10^{-9},\ g_{\min},\ 1-g_{\max}\bigr),
  \]
  whenever $\mathcal R_{\mathrm{strict}}\neq\emptyset$; if
  $\mathcal R_{\mathrm{strict}}=\emptyset$ we take $\varepsilon=0$. 
With this choice, $\varepsilon\le \dot g_r \le 1-\varepsilon$ for every $r\in\mathcal R_{\mathrm{strict}}$.
We then set, for $r=1,\dots,n-1$,
\[
\underline\delta_r=
\begin{cases}
\varepsilon, & r\in \mathcal R_{\mathrm{strict}},\\
0, & r\in \mathcal R_{\mathrm{equal}},
\end{cases}
\qquad
\overline\delta_r=
\begin{cases}
1-\varepsilon, & r\in \mathcal R_{\mathrm{strict}},\\
0, & r\in \mathcal R_{\mathrm{equal}}.
\end{cases}
\]
By construction, $\underline{\delta}_r \le \dot g_r \le \overline{\delta}_r$ for all $r\in\mathcal R_{\mathrm{strict}}$, and $\underline{\delta}_r=\overline{\delta}_r=0$ on $\mathcal R_{\mathrm{equal}}$. Hence the antipignistic probability distribution $\dot p$ belongs to $\mathcal{F}^{\mathrm{box}}$.

  \item For the analysis, we encode all constraints defining $\mathcal{F}^{\mathrm{box}}$ as a single linear system
  \[
    A p \ge b,
    \qquad A\in\mathbb{R}^{m\times n},\ b\in\mathbb{R}^m,
  \]
  where $p\in\mathbb{R}^n$ is the unknown and
  \[
    m
    = (n-1)\ \text{(dominance constraints)}
      + (n-1)\ \text{(lower gaps)}
      + (n-1)\ \text{(upper gaps)}
    = 3n-3.
  \]

  We use a block structure
  \[
    A :=
    \begin{bmatrix}
      A^{\mathrm{pref}}\\[2pt]
      D^{\mathrm{low}}\\[2pt]
      D^{\mathrm{up}}
    \end{bmatrix},
    \qquad
    b :=
    \begin{bmatrix}
      b^{\mathrm{pref}}\\[2pt]
      b^{\mathrm{low}}\\[2pt]
      b^{\mathrm{up}}
    \end{bmatrix},
  \]
  corresponding to the families
  $C^{\mathrm{pref}}_s$, $C^{\mathrm{low}}_s$ and $C^{\mathrm{up}}_s$ in
  \eqref{eq:Cprefsdef}–\eqref{eq:Cupdef}.

  \smallskip
  \noindent
  For each $r=1,\dots,n-1$ we denote
  \(A_r:=\{\sigma(1),\dots,\sigma(r)\}\). The dominance constraints
  \[
    \sum_{k\in A_r} p_k \ge 1-\tilde\pi_{r+1},
    \qquad r=1,\dots,n-1,
  \]
  are encoded by a matrix $A^{\mathrm{pref}}\in\{0,1\}^{(n-1)\times n}$ and
  a vector $b^{\mathrm{pref}}\in\mathbb{R}^{n-1}$ defined by
  \[
    A^{\mathrm{pref}}_{r,k} :=
    \begin{cases}
      1, & k\in A_r,\\
      0, & k\notin A_r,
    \end{cases}
    \qquad
    b^{\mathrm{pref}}_r := 1-\tilde\pi_{r+1},
    \qquad r=1,\dots,n-1,\ k=1,\dots,n.
  \]

  \smallskip
  \noindent
  The gap constraints in the $\pi$-order are encoded by a
  matrix $D\in\{-1,0,1\}^{(n-1)\times n}$:
  \[
    D_{r,k} :=
    \begin{cases}
      +1, & k=\sigma(r),\\
      -1, & k=\sigma(r+1),\\
      0,  & \text{otherwise},
    \end{cases}
    \qquad r=1,\dots,n-1,\ k=1,\dots,n.
  \]
  We then set
  \[
    D^{\mathrm{low}} := D,
    \quad
    b^{\mathrm{low}}_r := \underline{\delta}_r,
    \qquad
    D^{\mathrm{up}} := -D,
    \quad
    b^{\mathrm{up}}_r := -\overline{\delta}_r,
    \qquad r=1,\dots,n-1.
  \]
  The rows of $D^{\mathrm{low}}$ enforce lower bounds
  \(
    p_{\sigma(r)}-p_{\sigma(r+1)} \ge \underline{\delta}_r
  \),
  while the rows of $D^{\mathrm{up}}$ encode the upper bounds
  \(
    p_{\sigma(r)}-p_{\sigma(r+1)} \le \overline{\delta}_r
  \).

  \item We draw a strictly positive reference probability
  vector $q\in\Delta_n$ representing the output of a probabilistic classifier.

   \item
Starting from $z^{(0)}:=q\in \Delta_n\cap\mathbb{R}^n_{++}$, we compute the Kullback-Leibler projection
\[
p^\star=\Proj_{\mathcal{F}^{\mathrm{box}}}^f(q)
\]
by applying Dykstra's algorithm with Bregman projections (Algorithm~\ref{ex:algo:dykstra}) to the finite family of constraint sets $(C_i)_{i=1}^m$ defined in~\eqref{eq:specializationCi}. Recall that
\[
\mathcal{F}^{\mathrm{box}}=\bigcap_{i=1}^m C_i,
\qquad m=3n-3,
\]
with the $m$-periodic indexing $[t]$ introduced in Section~\ref{subsec:dykstraalgo} and $\Proj_t=\Proj_{C_{[t]}}^f$.

We implement Algorithm~\ref{ex:algo:dykstra} as in~(\ref{eq:AlgoD2}): we initialize $d^{(t)}:=0$ for all $t\le 0$ and, for each $t\ge 1$, we set
\[
u^{(t)} := z^{(t-1)}\,.\,\exp\!\bigl(d^{(t-m)}\bigr),
\]
and then update
\[
z^{(t)} := \Proj_t\!\bigl(u^{(t)}\bigr),
\qquad
d^{(t)} := d^{(t-m)} + \log\!\left(\frac{z^{(t-1)}}{z^{(t)}}\right).
\]
In practice, the vector $u^{(t)}$ is evaluated by the stabilized computation described in
Subsection~\ref{subsec:numerical:computingzt} (see the definition of $\widetilde u^{(t)}$), and we apply $\Proj_t$ to $\widetilde u^{(t)}$ instead of $u^{(t)}$.

One \emph{cycle} corresponds to $m$ successive updates, i.e., one pass over
$C^{\mathrm{pref}}_1,\dots,C^{\mathrm{pref}}_{n-1}$, then
$C^{\mathrm{low}}_1,\dots,C^{\mathrm{low}}_{n-1}$, then
$C^{\mathrm{up}}_1,\dots,C^{\mathrm{up}}_{n-1}$.

Each Bregman projector $\Proj_{C_i}^f$ is evaluated using the formulas derived in
Section~\ref{sec:bregman-dykstra}:
\begin{itemize}
  \item If $1\le i\le n-1$ (i.e.\ $C_i=C^{\mathrm{pref}}_i$), we use Proposition~\ref{prop:projCbv1}.
  \item If $n\le i\le 2n-2$, $r$ is set to $r:=i-(n-1)\in\{1,\dots,n-1\}$ (so $C_i=C^{\mathrm{low}}_r$) and we use
  Proposition~\ref{prop:projCbv2} (formula~\eqref{eq:comp1}) with
  $(i',j',\delta)=(\sigma(r),\sigma(r+1),\underline{\delta}_r)$.
  \item If $2n-1\le i\le 3n-3$, $r$ is set to $r:=i-(2n-2)\in\{1,\dots,n-1\}$ (so $C_i=C^{\mathrm{up}}_r$) and we use
  Proposition~\ref{prop:projCbv2} (formula~\eqref{eq:comp1}) with
  $(i',j',\delta)=(\sigma(r+1),\sigma(r),-\overline{\delta}_r)$.
\end{itemize}
\item
After each Dykstra cycle we evaluate the maximal constraint violation. Since the constraints defining $\mathcal{F}^{\mathrm{box}}$ are written as
\[
A p \ge b,
\qquad A\in\mathbb{R}^{m\times n},\quad b\in\mathbb{R}^m,
\]
we denote by $a_i^\top\in\mathbb{R}^{1\times n}$ the $i$-th row of $A$ (so that $(Ap)_i= \langle a_i^\top  , p\rangle$) and by $b_i$ the $i$-th component of $b$.
We set
\[
\mathrm{V}(p)
:= \max_{1\le i\le m} \bigl(b_i - \langle a_i^\top  , p\rangle\bigr)_+,
\qquad (x)_+:=\max(x,0).
\]
The iteration is stopped as soon as $\mathrm{V}(p)\le\tau$ using the  tolerance parameter $\tau$, or when $K_{\max}$ cycles have been performed. If a run does not satisfy $\mathrm{V}(p)\le\tau$ within $K_{\max}$ cycles, we record its cycle count as $K_{\max}$ and return the last iterate (after $K_{\max}$ cycles) as the output of the run; the corresponding value $\mathrm{V}(p)$ is reported as its final maximal violation.\\
For each run we record:
\begin{itemize}
  \item the final maximal violation $\mathrm{V}(p)$,
  \item the number of completed Dykstra cycles,
  \item the computation time.
\end{itemize}
A run is counted as \emph{converged} if $\mathrm{V}(p)\le\tau$ at termination. 
\end{enumerate}

\subsubsection{Results}

We repeat the above experiment for five values of the tolerance parameter $\tau \in \{10^{-2},10^{-3},10^{-4},10^{-6},10^{-8}\}$
and for three values of the maximal number of Dykstra cycles
$K_{\max}\in\{10^3,10^4,5\cdot 10^4\},$
keeping $n=100$ fixed.

For each value of $K_{\max}$, we use the same initial random seed so that the underlying random instances are comparable across different cycle budgets. For a fixed $K_{\max}$ and each tolerance level $\tau$, we perform $100$ independent runs and aggregate the following statistics:
\begin{itemize}
  \item convergence rate (fraction of runs with $\mathrm{V}(p)\le\tau$),
  \item mean and $90$th percentile of the number of Dykstra cycles,
  \item mean of the final maximal violation $\mathrm{V}(p)$,
  \item mean computation time.
\end{itemize}
The results are reported in Tables~\ref{tab:dykstra-K1000}-\ref{tab:dykstra-K50000}.

\begin{table}[H]
\centering
\begin{tabular}{lrrrrr}
\toprule
Tolerance $\tau$ & Convergence rate & Mean cycles & 90th perc. cycles & Mean final violation & Mean time [s]\\
\midrule
$1e-02$ & 1.000 & 1.1 & 1.0 & 6.04e-03 & 0.000\\
$1e-03$ & 1.000 & 11.9 & 16.1 & 9.42e-04 & 0.004\\
$1e-04$ & 1.000 & 143.9 & 215.4 & 9.93e-05 & 0.050\\
$1e-06$ & 0.360 & 881.9 & 1000.0 & 6.22e-06 & 0.309\\
$1e-08$ & 0.080 & 973.9 & 1000.0 & 6.24e-06 & 0.341\\
\bottomrule
\end{tabular}
\caption{KL projection on $\mathcal{F}^{\mathrm{box}}$ for $n=100$, with a maximum of $K_{\max}=1000$ Dykstra cycles and 100 random runs per tolerance.}
\label{tab:dykstra-K1000}
\end{table}

\begin{table}[H]
\centering
\begin{tabular}{lrrrrr}
\toprule
Tolerance $\tau$ & Convergence rate & Mean cycles & 90th perc. cycles & Mean final violation & Mean time [s]\\
\midrule
$1e-02$ & 1.000 & 1.1 & 1.0 & 6.04e-03 & 0.000\\
$1e-03$ & 1.000 & 11.9 & 16.1 & 9.42e-04 & 0.004\\
$1e-04$ & 1.000 & 143.9 & 215.4 & 9.93e-05 & 0.050\\
$1e-06$ & 1.000 & 1593.5 & 2971.6 & 9.98e-07 & 0.558\\
$1e-08$ & 0.970 & 2934.1 & 5077.5 & 1.18e-08 & 1.028\\
\bottomrule
\end{tabular}
\caption{KL projection on $\mathcal{F}^{\mathrm{box}}$ for $n=100$, with a maximum of $K_{\max}=10000$ Dykstra cycles and 100 random runs per tolerance.}
\label{tab:dykstra-K10000}
\end{table}

\begin{table}[H]
\centering
\begin{tabular}{lrrrrr}
\toprule
Tolerance $\tau$ & Convergence rate & Mean cycles & 90th perc. cycles & Mean final violation & Mean time [s]\\
\midrule
$1e-02$ & 1.000 & 1.1 & 1.0 & 6.04e-03 & 0.000\\
$1e-03$ & 1.000 & 11.9 & 16.1 & 9.42e-04 & 0.004\\
$1e-04$ & 1.000 & 143.9 & 215.4 & 9.93e-05 & 0.050\\
$1e-06$ & 1.000 & 1593.5 & 2971.6 & 9.98e-07 & 0.558\\
$1e-08$ & 1.000 & 3047.9 & 5077.5 & 9.97e-09 & 1.068\\
\bottomrule
\end{tabular}
\caption{KL projection on $\mathcal{F}^{\mathrm{box}}$ for $n=100$, with a maximum of $K_{\max}=50000$ Dykstra cycles and 100 random runs per tolerance.}
\label{tab:dykstra-K50000}
\end{table}

In this experiment we deliberately construct a \emph{wide} feasible set
$\mathcal F^{\mathrm{box}}$ by choosing the gap bounds
$(\underline{\delta}_r,\overline{\delta}_r)$ so that
$\underline{\delta}_r\le \dot g_r\le \overline{\delta}_r$ while keeping
$\underline{\delta}_r$ small and $\overline{\delta}_r$ close to $1$ whenever $\tilde\pi_r>\tilde\pi_{r+1}$. This setting makes it possible to evaluate the projection procedure on a large admissible set in a regime where the constraints are relatively easy to satisfy.

Tables~\ref{tab:dykstra-K1000}--\ref{tab:dykstra-K50000} summarize the performance of Dykstra’s algorithm for $n=100$ over $100$ random instances per tolerance level, for several cycle budgets $K_{\max}$.
For moderate tolerances, $\tau\in\{10^{-2},10^{-3},10^{-4}\}$, all runs converge for every value of $K_{\max}$.
In this regime, the number of cycles remains small (mean $\approx 1.1$, $11.9$, and $143.9$, respectively), and the mean computation time stays very small: it remains negligible for $\tau=10^{-2}$, around $0.004$s for $\tau=10^{-3}$, and around $0.050$s for $\tau=10^{-4}$.

For stricter tolerances, the cycle budget becomes the main limiting factor.
At $\tau=10^{-6}$ with $K_{\max}=1000$, only $36\%$ of runs satisfy the stopping criterion, and the $90$th percentile of the cycle count reaches the budget limit, which indicates that many runs terminate because the maximum number of cycles is reached.
Increasing the budget to $K_{\max}=10^4$ yields convergence for all runs, with a mean cycle count of about $1594$ and a $90$th percentile around $2972$.
The corresponding mean final violation is below $10^{-6}$, and the mean computation time remains below one second (about $0.558$s).

At $\tau=10^{-8}$, the same pattern is more pronounced.
With $K_{\max}=1000$, the convergence rate is only $8\%$, whereas it rises to $97\%$ for $K_{\max}=10^4$ and reaches $100\%$ for $K_{\max}=5\cdot 10^4$.
The mean cycle count then increases from about $2934$ to about $3048$, while the mean final violation decreases from $1.18\times 10^{-8}$ to $9.97\times 10^{-9}$.
Even in this most demanding setting, the mean computation time remains close to one second (about $1.028$s for $K_{\max}=10^4$ and $1.068$s for $K_{\max}=5\cdot 10^4$).

Overall, these results illustrate the expected trade-off between stricter stopping criteria and computation time induced by the choice of $(\tau,K_{\max})$.
Moderate tolerances are reached very quickly, whereas very small tolerances require several thousand Dykstra cycles.
At the same time, the compiled implementation keeps the computation time low across all tested settings, with mean runtimes ranging from a few milliseconds to about one second.

\subsection{Learning from possibilistic supervision on synthetic data}
\label{subsec:topk-learning}

We consider a synthetic multi-class classification setting, where $\mathcal Y:=\{1,\dots,n\}$ denotes  the class set.
In the learning problem, the observed supervision is a pair $(x,\pi)$, where $x\in\mathbb{R}^d$ is a feature vector and $\pi$ is a strictly positive normalized possibility distribution on $\mathcal Y$ represented as a vector such that:
\[
\pi=(\pi_1,\dots,\pi_n)\in(0,1]^n,\qquad \pi_i>0 \text{ for } i=1,\dots,n,\qquad \max_{1\le j\le n}\pi_j=1.
\]
In the synthetic experiments, each sample is generated together with a ground-truth label $c\in\mathcal Y$ used only for evaluation (see Subsection \ref{subsubsec:synthetic-data} for the dataset generation process); thus we store triplets $(x,c,\pi)$, but the training objectives depend only on $(x,\pi)$. 

From $\pi$ we construct the admissible set $\mathcal F^{\mathrm{box}}(\pi)\subseteq\Delta_n$  as in Subsection~\ref{subsec:experimentalalgoone}. Writing $\pi_{\sigma(1)}\ge\cdots\ge\pi_{\sigma(n)}$ and $\tilde\pi_r:=\pi_{\sigma(r)}$, we define the adjacent gaps by $\dot g_r:=(\tilde\pi_r-\tilde\pi_{r+1})/r$ for $r=1,\dots,n-1$, together with the index sets $\mathcal R_{\mathrm{equal}}:=\{r:\tilde\pi_r=\tilde\pi_{r+1}\}$ and $\mathcal R_{\mathrm{strict}}:=\{r:\tilde\pi_r>\tilde\pi_{r+1}\}$. If $\mathcal R_{\mathrm{strict}}\neq\emptyset$, we set $\varepsilon:=\min\!\bigl(10^{-9},\,\min_{r\in\mathcal R_{\mathrm{strict}}}\dot g_r,\,1-\max_{r\in\mathcal R_{\mathrm{strict}}}\dot g_r\bigr)$; otherwise, we set $\varepsilon=0$. We then take $(\underline\delta_r,\overline\delta_r)=(\varepsilon,1-\varepsilon)$ for $r\in\mathcal R_{\mathrm{strict}}$ and $(\underline\delta_r,\overline\delta_r)=(0,0)$ for $r\in\mathcal R_{\mathrm{equal}}$. This yields a wide admissible set while ensuring that $\dot p(\pi)\in\mathcal F^{\mathrm{box}}(\pi)$.

We train two models, Model A and Model B, on the same training items of the form $(x,\pi)$. 
They use the same affine score map $x\mapsto Wx+b$, but with different parameters:
\[
s^{\mathrm{A}}(x)=W_{\mathrm{A}}x+b_{\mathrm{A}}\in\mathbb{R}^n,
\qquad
s^{\mathrm{B}}(x)=W_{\mathrm{B}}x+b_{\mathrm{B}}\in\mathbb{R}^n,
\]
where $(W_{\mathrm{A}},b_{\mathrm{A}})$ and $(W_{\mathrm{B}},b_{\mathrm{B}})$ belong to
$\mathbb{R}^{n\times d}\times\mathbb{R}^n$.

Each model outputs a probability vector $q$ through a softmax transformation that enforces that $q$ is a strictly
positive probability distribution. 
For $M\in\{\mathrm{A},\mathrm{B}\}$, define
$m_M(x):=\max_{1\le j\le n} s^{M}_j(x)$ and set
\[
q^{M}_c(x)
=
\frac{\exp\!\bigl(s^{M}_c(x)-m_M(x)\bigr)}
{\sum_{j=1}^n \exp\!\bigl(s^{M}_j(x)-m_M(x)\bigr)},
\qquad c=1,\dots,n.
\]
Then $q^{M}(x)\in\Delta_n\cap\mathbb{R}^n_{++}$, and the Kullback--Leibler divergence
$D_{\mathrm{KL}}(p\|q^{M}(x))$ is well-defined for every $p\in\Delta_n$.\\
\smallskip
The two models use different learning objectives:\\
\noindent\emph{(i) Projection target.}
Model A uses as target the Kullback--Leibler projection of its current prediction
$q^{\mathrm{A}}(x)$ (computed from the training instance $x$) onto the admissible set
$\mathcal F^{\mathrm{box}}(\pi)$:
\[
p^\star(x,\pi)
:=
\arg\min_{p\in\mathcal F^{\mathrm{box}}(\pi)} D_{\mathrm{KL}}\!\bigl(p\,\|\,q^{\mathrm{A}}(x)\bigr).
\]
We compute $p^\star(x,\pi)$ by running Algorithm~\ref{ex:algo:dykstra} on $\mathcal F^{\mathrm{box}}(\pi)$.
The projections are those of Proposition~\ref{prop:projCbv1} and Proposition~\ref{prop:projCbv2}, and we apply the numerically stable computations described in Subsection~\ref{subsec:numerical:computingzt}.
Model A is trained by minimizing
\[
\ell_{\mathrm{proj}}^{\mathrm{A}}(x,\pi)
:=
D_{\mathrm{KL}}\!\bigl(p^\star(x,\pi)\,\|\,q^{\mathrm{A}}(x)\bigr).
\]
During training, $p^\star(x,\pi)$ is recomputed from the current prediction $q^{\mathrm{A}}(x)$ at each optimization step and used as a soft target.

\smallskip
\noindent\emph{(ii) Fixed target.}
Model B is trained with the probability vector $\dot p(\pi)\in\Delta_n$ defined in
Section~\ref{sec:prob-constraints} (see \eqref{eq:ppoint}), obtained from $\pi$ by the antipignistic
reverse mapping. The per-item loss is
\[
\ell_{\mathrm{fix}}^{\mathrm{B}}(x,\pi)
:=
D_{\mathrm{KL}}\!\bigl(\dot p(\pi)\,\|\,q^{\mathrm{B}}(x)\bigr).
\]

Both objectives use the same possibilistic annotation $\pi$.
Model A uses $\pi$ through the set $\mathcal F^{\mathrm{box}}(\pi)$ by defining its target as the projected vector $p^\star(x,\pi)\in\mathcal F^{\mathrm{box}}(\pi)$ associated with the current
prediction $q^{\mathrm{A}}(x)$.
Model B, in contrast, uses $\pi$ only through the fixed probability vector $\dot p(\pi)$ and does not impose any additional constraint during training.
Accordingly, $p^\star(x,\pi)$ depends on both $x$ and $\pi$ through $q^{\mathrm{A}}(x)$ and the projection onto $\mathcal F^{\mathrm{box}}(\pi)$, whereas $\dot p(\pi)$ depends only on $\pi$.
Recall that $\dot p(\pi)\in\mathcal F^{\mathrm{box}}(\pi)$ by construction.

Our goal is to assess whether the projection-based objective (Model A) yields better classification accuracy than the fixed-target objective (Model B) under the same supervision.

\subsubsection{Synthetic data generation}
\label{subsubsec:synthetic-data}

Each run produces two independent datasets: a training set $\mathcal D_{\mathrm{tr}}$ and a test set $\mathcal D_{\mathrm{te}}$.
A dataset is a finite set of samples of the form $(x,c,\pi)$, where $x\in\mathbb R^d$ is the input vector,
$c\in\mathcal Y:=\{1,\dots,n\}$ is the class label, and $\pi=\pi(x,c)\in(0,1]^n$ is a strictly positive normalized
possibility distribution on $\mathcal Y$.
When there is no ambiguity, we write $\pi$ instead of $\pi(x,c)$.
The normalization and strict-positivity conditions are
\begin{equation}\label{eq:synth-pi-normalized}
\max_{1\le j\le n}\pi_j=1
\qquad\text{and}\qquad
\pi_j\ge \rho_\pi\ \text{ for } j=1,\dots,n,
\end{equation}
for a fixed lower bound $\rho_\pi>0$.
For each labeled pair $(x,c)$, we set $\pi_c=1$ and we ensure $\pi_j<1$ for all $j\neq c$ (see below).  In what follows, $(x,c,\pi)$ denotes the full synthetic record, while the models are trained using $(x,\pi)$, only; especially, the label $c$ is used solely to compute predictive metrics.

A configuration is defined by the triple $(d,N_{\mathrm{tr}},\alpha)$, where $d$ is the dimension of the input vectors,
$N_{\mathrm{tr}}$ is the training set size, and $\alpha$ controls, for each sample with label $c$, the possibility values
assigned to the classes in $\mathcal Y\setminus\{c\}$.
Across configurations, the test set size is fixed to $N_{\mathrm{te}}=3000$, and all other parameters are fixed as reported
in Table~\ref{tab:synthetic-params}.

\begin{table}[htp!]
\centering
\begin{tabular}{@{}l p{0.85\linewidth}@{}}
\toprule
Symbol & Meaning and values used in the experiments \\
\midrule

\multicolumn{2}{@{}l}{\emph{Dataset and dimensions}}\\
\addlinespace[0.3em]
$n$ & \makecell[l]{Number of classes; fixed to $n=20$.}\\
$\mathcal Y$ & \makecell[l]{Class set $\{1,\dots,n\}$.}\\
$d$ & \makecell[l]{Dimension of the input vector $x$; $d\in\{30,80,150\}$.}\\
$N_{\mathrm{tr}}$ & \makecell[l]{Training set size; $N_{\mathrm{tr}}\in\{200,500,1000\}$.}\\
$N_{\mathrm{te}}$ & \makecell[l]{Test set size; fixed to $N_{\mathrm{te}}=3000$.}\\

\addlinespace[0.6em]
\multicolumn{2}{@{}l}{\emph{Prototypes and input vectors}}\\
\addlinespace[0.3em]
$\mu_c$ & \makecell[l]{Prototype vector associated with class $c$ (used to generate samples of class $c$); $\mu_c\in\mathbb R^d$.}\\
$\beta$ & \makecell[l]{Prototype scale used in $\mu_c=\beta Z_c$. Larger $\beta$ typically makes prototypes farther apart.\\
Values: $\beta=1.5$ if $d=30$, $\beta=0.9$ if $d=80$, $\beta=0.6$ if $d=150$.}\\
$s$ & \makecell[l]{Noise level in the generation of $x$: for a sample with label $c$, the input vector $x$ is generated\\
by perturbing $\mu_c$, and $s$ scales this perturbation. Fixed to $s=2.0$.}\\

\addlinespace[0.6em]
\multicolumn{2}{@{}l}{\emph{Possibilistic annotation}}\\
\addlinespace[0.3em]
$\rho_\pi$ & \makecell[l]{Lower bound for possibility values: for every sample $(x,c,\pi)$ and every class $j$, $\pi_j\ge \rho_\pi$.\\
Fixed to $\rho_\pi=10^{-6}$.}\\
$\alpha$ & \makecell[l]{Base \emph{plausibility level} controlling the possibility values assigned to the classes in $\mathcal Y\setminus\{c\}$.\\
Values: $\alpha\in\{0.4,0.6,0.8,0.95\}$.}\\
$s_\alpha$ & \makecell[l]{Noise level used to perturb the base plausibility level $\alpha$ when defining a per-sample level\\ $\alpha(x)$ in \eqref{eq:synth-alpha}. Fixed to $s_\alpha=0.15$.}\\

$\delta_\pi$ & \makecell[l]{Step size for the possibility values assigned to $\mathcal Y\setminus\{c\}$ for a sample $(x,c,\pi)$: we rank\\
the classes in $\mathcal Y\setminus\{c\}$ from closest to farthest from $x$ using the squared Euclidean distance\\
$d_j(x)=\|x-\mu_j\|_2^2$ to their prototypes $(\mu_j)_{j\in\mathcal Y}$.
When moving from one ranked class to\\ the next, the assigned possibility value is decreased by steps of size
$\delta_\pi$ (until it reaches\\ the floor $\rho_\pi$). Fixed to $\delta_\pi=0.01$.\\ 
}\\
\addlinespace[0.6em]
\multicolumn{2}{@{}l}{\emph{Tie breaking and ranking}}\\
\addlinespace[0.3em]
$\triangleleft$ & \makecell[l]{Natural index order on $\mathcal Y$: $1\triangleleft 2\triangleleft\cdots\triangleleft n$.}\\
$\preceq_x$ & \makecell[l]{Order on $\mathcal Y$ defined from distances to $x$: $i\preceq_x j$ if the prototype $\mu_i$ of class $i$ is closer to $x$\\
than the prototype $\mu_j$ of class $j$, and ties are broken by $\triangleleft$.}\\

\bottomrule
\end{tabular}
\caption{Notation and values for the synthetic data generation.}
\label{tab:synthetic-params}
\end{table}

The data generation proceeds in three steps. We first generate one prototype vector $\mu_c$ for each class $c\in\mathcal Y$.
We then generate labeled input vectors $(x,c)$ for the training and test sets.
Finally, for each labeled pair $(x,c)$, we generate a possibility vector $\pi$ on $\mathcal Y$, which yields samples $(x,c,\pi)$.

For each class $c\in\mathcal Y$, we draw
\begin{equation}\label{eq:synth-Z}
Z_c \sim \mathcal N(0,I_d)
\end{equation}
and define the prototype
\begin{equation}\label{eq:synth-proto}
\mu_c := \beta Z_c \in \mathbb R^d.
\end{equation}
Larger values of $\beta$ typically increase the distances between distinct prototypes.

We generate the training set and the test set in the same way, and we sample the two sets independently.
Each item is obtained by first drawing a class label uniformly on $\mathcal Y$, and then generating an input vector by perturbing
the prototype of that class.

For the training set, for each $i=1,\dots,N_{\mathrm{tr}}$, we draw
\begin{equation}\label{eq:synth-train-label}
c^{\mathrm{tr}}_i \sim \mathrm{Unif}(\mathcal Y),
\qquad
\nu^{\mathrm{tr}}_i \sim \mathcal N(0,I_d),
\end{equation}
and set
\begin{equation}\label{eq:synth-train-x}
x^{\mathrm{tr}}_i := \mu_{c^{\mathrm{tr}}_i}+s\,\nu^{\mathrm{tr}}_i.
\end{equation}
The parameter $s\ge 0$ controls the size of the perturbation around the class prototype.
For the test set, for each $i=1,\dots,N_{\mathrm{te}}$, we draw
\begin{equation}\label{eq:synth-test-label}
c^{\mathrm{te}}_i \sim \mathrm{Unif}(\mathcal Y),
\qquad
\nu^{\mathrm{te}}_i \sim \mathcal N(0,I_d),
\end{equation}
and set
\begin{equation}\label{eq:synth-test-x}
x^{\mathrm{te}}_i := \mu_{c^{\mathrm{te}}_i}+s\,\nu^{\mathrm{te}}_i.
\end{equation}

For each labeled pair $(x,c)$, we construct a possibility vector $\pi=\pi(x,c)$ on $\mathcal Y$ with the following goals:
(i) the label is fully possible, $\pi_c=1$;
(ii) for every $j\neq c$ we have $0<\pi_j<1$ and $\pi_j\ge \rho_\pi$;
(iii) among the classes in $\mathcal Y\setminus\{c\}$, classes whose prototypes are closer to $x$ receive larger values.

This construction is inspired by a common two-level possibilistic encoding, where one class is assigned possibility $1$ and the remaining classes share a plausibility level $\alpha$ \cite{lienen2021credal,lienen2023conformal}.
In our synthetic setting, $\alpha$ plays the role of a base plausibility level, but we go beyond the two-level form by allowing the possibility values of the non-true classes to vary across classes according to their distance-based ranking around $x$.

We proceed in three substeps: ranking, choice of a base level, and assignment.

For an input vector $x\in\mathbb R^d$ and a class $j\in\mathcal Y$, let us define the squared distance to the prototype $\mu_j$ by
\begin{equation}\label{eq:synth-distance}
d_j(x):=\|x-\mu_j\|_2^2.
\end{equation}
We rank classes by increasing values of $d_j(x)$, and if two classes have the same value we break ties using
the natural index order $1\triangleleft 2\triangleleft\cdots\triangleleft n$.
This defines the relation $\preceq_x$ on $\mathcal Y$ by
\begin{equation}\label{eq:synth-order}
i \preceq_x j
\quad \Longleftrightarrow\quad
\bigl(d_i(x) < d_j(x)\bigr)\ \ \text{or}\ \ \bigl(d_i(x)=d_j(x)\ \text{and}\ i\triangleleft j\bigr),
\qquad i,j\in\mathcal Y.
\end{equation}
For a labeled pair $(x,c)$, list the classes in $\mathcal Y\setminus\{c\}$ as $j_{(1)},\dots,j_{(n-1)}$ such that
\begin{equation}\label{eq:synth-ranking}
j_{(1)} \preceq_x j_{(2)} \preceq_x \cdots \preceq_x j_{(n-1)}.
\end{equation}

We set the label to be fully possible:
\begin{equation}\label{eq:synth-pi-label}
\pi_c := 1.
\end{equation}

For the remaining classes $\mathcal Y\setminus\{c\}$, we first define a sample-dependent level $\alpha(x)$.
We start from the base parameter $\alpha$ and add a random perturbation of size $s_\alpha$.
Let $\eta$ be a random scalar drawn independently for each labeled pair $(x,c)$, and define
\begin{equation}\label{eq:synth-eta}
\eta \sim \mathcal N(0,1).
\end{equation}
We then set
\begin{equation}\label{eq:synth-alpha}
\alpha(x) :=
\min\!\Bigl(1-\rho_\pi,\ \max\!\bigl(0,\ \alpha+s_\alpha\,\eta\bigr)\Bigr).
\end{equation}

Next, we build a decreasing sequence along the ranking using the step size $\delta_\pi$.
For each rank $r\in\{1,\dots,n-1\}$, we define
\begin{equation}\label{eq:synth-alpha-r}
\alpha_r(x) := \max\!\bigl(0,\ \alpha(x)-(r-1)\delta_\pi\bigr).
\end{equation}
For every $r\in\{1,\dots,n-2\}$ we have $\alpha_r(x)\ge \alpha_{r+1}(x)$.
Moreover, if $\alpha_r(x)>0$ and $\delta_\pi>0$, then $\alpha_r(x)>\alpha_{r+1}(x)$.
Thus, when moving from rank $r$ to $r+1$, the quantity $\alpha_r(x)$ decreases by $\delta_\pi$ until it reaches $0$.

Finally, we assign possibility values to the ranked classes by
\begin{equation}\label{eq:synth-pi}
\pi_{j_{(r)}} := \min\!\bigl(1-\rho_\pi,\ \rho_\pi+\alpha_r(x)\bigr),
\qquad r=1,\dots,n-1.
\end{equation}
With \eqref{eq:synth-pi}, smaller ranks (classes closer to $x$) receive larger values, and the value is non-increasing along the ranking, and decreases by steps of size $\delta_\pi$ until it reaches the floor $\rho_\pi$ (since $\alpha_r(x)$ reaches $0$),  and it is kept strictly below $1$ by the cap
$1-\rho_\pi$.

For any $j\neq c$, \eqref{eq:synth-pi} gives $\pi_j\ge \rho_\pi$ because $\alpha_r(x)\ge 0$, and $\pi_j\le 1-\rho_\pi$ because of the
explicit cap $\min(1-\rho_\pi,\cdot)$.
Therefore, for all $j\neq c$,
\begin{equation}\label{eq:synth-pi-bounds}
\rho_\pi \le \pi_j \le 1-\rho_\pi,
\end{equation}
which implies $0<\pi_j<1$.

As an illustration, consider $n=5$ and a labeled pair $(x,c)$.
After ranking $\mathcal Y\setminus\{c\}$ as $j_{(1)}\preceq_x j_{(2)}\preceq_x j_{(3)}\preceq_x j_{(4)}$, the construction yields
\[
\pi_c=1,\qquad
\pi_{j_{(1)}}=\min\!\bigl(1-\rho_\pi,\rho_\pi+\alpha_1(x)\bigr),\qquad
\pi_{j_{(2)}}=\min\!\bigl(1-\rho_\pi,\rho_\pi+\alpha_2(x)\bigr),
\]
\[
\pi_{j_{(3)}}=\min\!\bigl(1-\rho_\pi,\rho_\pi+\alpha_3(x)\bigr),\qquad
\pi_{j_{(4)}}=\min\!\bigl(1-\rho_\pi,\rho_\pi+\alpha_4(x)\bigr),
\]
with
\[
\alpha_1(x)=\alpha(x),
\qquad
\alpha_{r+1}(x)=\max\bigl(0,\alpha_r(x)-\delta_\pi\bigr)\ \ \text{for } r=1,2,3.
\]
In particular, for $r=1,2,3$ we have $\alpha_r(x)\ge \alpha_{r+1}(x)$, and if $\alpha_r(x)>0$ and $\delta_\pi>0$
then $\alpha_r(x)>\alpha_{r+1}(x)$.

\subsubsection{Training and evaluation protocol}
\label{subsubsec:topk-eval}

Table~\ref{tab:synthetic-params} summarizes the parameter values used for the synthetic data generation.
Experiments vary $(d,N_{\mathrm{tr}},\alpha)$ over $d\in\{30,80,150\}$, $N_{\mathrm{tr}}\in\{200,500,1000\}$, and $\alpha\in\{0.4,0.6,0.8,0.95\}$.
Recall that the test set size is fixed to $N_{\mathrm{te}}=3000$.

For each configuration $(d,N_{\mathrm{tr}},\alpha)$ we perform $10$ independent runs; in each run we regenerate $(\mathcal D_{\mathrm{tr}},\mathcal D_{\mathrm{te}})$ and reinitialize both models.
For a fixed pair $(d,N_{\mathrm{tr}})$ and a fixed run index, we reuse the same random seeds for generating the class prototypes $(\mu_c)_{c=1}^n$ and the labeled inputs $(x_i,c_i)$ across all values of $\alpha$.
Consequently, within a run index, the labeled pairs $(x_i,c_i)$ are identical for all $\alpha$, and only the possibilistic annotations $\pi_i$ vary with $\alpha$.

Training uses Adam \cite{kingma2014adam} with weight decay $10^{-4}$.
We use batch size $64$ for $N_{\mathrm{tr}}\le 200$ and $128$ otherwise.
We train for $80$ epochs when $N_{\mathrm{tr}}\le 200$ and for $60$ epochs otherwise.

Learning rates are selected by grid search using an independent validation set of size $N_{\mathrm{val}}=N_{\mathrm{tr}}$, generated by the same procedure.
We use the grid
\[
\{1,2,\dots,9\}\times 10^{k},
\qquad k\in\{-4,-3,-2\}.
\]
For each candidate learning rate, we train the corresponding model and evaluate the mean validation accuracy.
The learning-rate searches for Models A and B are performed separately.
For each model, we select the learning rate that maximizes the mean validation accuracy averaged over three validation seeds; for each seed, we regenerate the prototypes and the corresponding train/validation datasets and use a fresh parameter initialization.
None of the selected learning rates lies on the boundary of the search grid.

For Model A, the target $p^\star(x,\pi)$ is computed by Dykstra's algorithm with tolerance $\tau=10^{-8}$ and a maximum of $K_{\max}=2000$ cycles.
If the stopping criterion $\mathrm V(p)\le\tau$ is not met within $K_{\max}$ cycles, we use the last iterate returned after $K_{\max}$ cycles as the target.
The admissible set $\mathcal F^{\mathrm{box}}(\pi)$ is defined from $\pi$, see  Subsection~\ref{subsec:topk-learning}.

Predictive performance is evaluated on $\mathcal D_{\mathrm{te}}$ using the predicted probabilities $q(x)$ produced by the trained model. We report top-$1$ accuracy.

\subsubsection{Results}

\begin{table*}[htp!]
\centering
\scriptsize
\setlength{\tabcolsep}{3pt}
\renewcommand{\arraystretch}{1.15}

\resizebox{\textwidth}{!}{%
\begin{tabular}{lll r rr rr rr}
\toprule
$d$ & $\beta$ & $\alpha$ & $N_{\mathrm{tr}}$ & $\mathrm{lr}_A$ & $\mathrm{lr}_B$ & $\mathrm{Acc}_A^{\mathrm{tr}}$ & $\mathrm{Acc}_B^{\mathrm{tr}}$ & $\mathrm{Acc}_A^{\mathrm{te}}$ & $\mathrm{Acc}_B^{\mathrm{te}}$ \\
\midrule
30 & 1.5 & 0.4 & 200 & 0.01 & 0.0008 & $0.9960\pm0.0039$ & $0.9865\pm0.0094$ & $\bm{0.9118\pm0.0184}$ & $0.8689\pm0.0173$ \\
30 & 1.5 & 0.4 & 500 & 0.03 & 0.002 & $0.9944\pm0.0034$ & $0.9838\pm0.0096$ & $\bm{0.9258\pm0.0161}$ & $0.9055\pm0.0167$ \\
30 & 1.5 & 0.4 & 1000 & 0.006 & 0.001 & $0.9890\pm0.0075$ & $0.9711\pm0.0115$ & $0.9277\pm0.0150$ & $\bm{0.9287\pm0.0136}$ \\
30 & 1.5 & 0.6 & 200 & 0.02 & 0.0006 & $0.9900\pm0.0094$ & $0.9760\pm0.0165$ & $\bm{0.9286\pm0.0155}$ & $0.8629\pm0.0172$ \\
30 & 1.5 & 0.6 & 500 & 0.03 & 0.001 & $0.9874\pm0.0065$ & $0.9704\pm0.0123$ & $\bm{0.9353\pm0.0127}$ & $0.9096\pm0.0159$ \\
30 & 1.5 & 0.6 & 1000 & 0.01 & 0.0006 & $0.9772\pm0.0110$ & $0.9623\pm0.0120$ & $\bm{0.9384\pm0.0143}$ & $0.9291\pm0.0136$ \\
30 & 1.5 & 0.8 & 200 & 0.006 & 0.0004 & $0.9695\pm0.0148$ & $0.9380\pm0.0262$ & $\bm{0.9358\pm0.0136}$ & $0.8414\pm0.0201$ \\
30 & 1.5 & 0.8 & 500 & 0.007 & 0.0006 & $0.9612\pm0.0133$ & $0.9514\pm0.0133$ & $\bm{0.9425\pm0.0109}$ & $0.9011\pm0.0173$ \\
30 & 1.5 & 0.8 & 1000 & 0.008 & 0.0007 & $0.9661\pm0.0135$ & $0.9522\pm0.0134$ & $\bm{0.9467\pm0.0107}$ & $0.9241\pm0.0139$ \\
30 & 1.5 & 0.95 & 200 & 0.004 & 0.0003 & $0.9590\pm0.0168$ & $0.8845\pm0.0539$ & $\bm{0.9392\pm0.0133}$ & $0.8026\pm0.0342$ \\
30 & 1.5 & 0.95 & 500 & 0.007 & 0.0003 & $0.9568\pm0.0159$ & $0.9242\pm0.0191$ & $\bm{0.9464\pm0.0112}$ & $0.8864\pm0.0185$ \\
30 & 1.5 & 0.95 & 1000 & 0.002 & 0.0003 & $0.9551\pm0.0173$ & $0.9365\pm0.0163$ & $\bm{0.9485\pm0.0105}$ & $0.9159\pm0.0152$ \\
80 & 0.9 & 0.4 & 200 & 0.005 & 0.0005 & $0.9985\pm0.0024$ & $0.9990\pm0.0021$ & $\bm{0.9102\pm0.0105}$ & $0.8136\pm0.0196$ \\
80 & 0.9 & 0.4 & 500 & 0.006 & 0.0008 & $0.9968\pm0.0030$ & $0.9980\pm0.0028$ & $\bm{0.9407\pm0.0076}$ & $0.8932\pm0.0133$ \\
80 & 0.9 & 0.4 & 1000 & 0.007 & 0.0003 & $0.9969\pm0.0014$ & $0.9844\pm0.0067$ & $\bm{0.9497\pm0.0061}$ & $0.9314\pm0.0094$ \\
80 & 0.9 & 0.6 & 200 & 0.007 & 0.0004 & $0.9965\pm0.0041$ & $0.9965\pm0.0041$ & $\bm{0.9341\pm0.0085}$ & $0.8084\pm0.0205$ \\
80 & 0.9 & 0.6 & 500 & 0.009 & 0.0005 & $0.9884\pm0.0052$ & $0.9926\pm0.0065$ & $\bm{0.9531\pm0.0069}$ & $0.8939\pm0.0138$ \\
80 & 0.9 & 0.6 & 1000 & 0.006 & 0.0002 & $0.9885\pm0.0049$ & $0.9772\pm0.0082$ & $\bm{0.9571\pm0.0057}$ & $0.9303\pm0.0103$ \\
80 & 0.9 & 0.8 & 200 & 0.006 & 0.0003 & $0.9895\pm0.0050$ & $0.9700\pm0.0127$ & $\bm{0.9470\pm0.0092}$ & $0.7772\pm0.0211$ \\
80 & 0.9 & 0.8 & 500 & 0.007 & 0.0003 & $0.9814\pm0.0075$ & $0.9720\pm0.0082$ & $\bm{0.9584\pm0.0061}$ & $0.8836\pm0.0138$ \\
80 & 0.9 & 0.8 & 1000 & 0.007 & 0.0002 & $0.9853\pm0.0032$ & $0.9678\pm0.0060$ & $\bm{0.9603\pm0.0058}$ & $0.9236\pm0.0088$ \\
80 & 0.9 & 0.95 & 200 & 0.008 & 0.0002 & $0.9885\pm0.0088$ & $0.8885\pm0.0156$ & $\bm{0.9504\pm0.0078}$ & $0.7310\pm0.0137$ \\
80 & 0.9 & 0.95 & 500 & 0.009 & 0.0002 & $0.9832\pm0.0046$ & $0.9306\pm0.0143$ & $\bm{0.9599\pm0.0062}$ & $0.8592\pm0.0176$ \\
80 & 0.9 & 0.95 & 1000 & 0.008 & 0.0002 & $0.9833\pm0.0029$ & $0.9433\pm0.0100$ & $\bm{0.9592\pm0.0055}$ & $0.9097\pm0.0107$ \\
150 & 0.6 & 0.4 & 200 & 0.003 & 0.001 & $0.9995\pm0.0016$ & $1.0000\pm0.0000$ & $\bm{0.7931\pm0.0161}$ & $0.6759\pm0.0189$ \\
150 & 0.6 & 0.4 & 500 & 0.005 & 0.0004 & $0.9978\pm0.0024$ & $0.9972\pm0.0017$ & $\bm{0.8914\pm0.0063}$ & $0.7978\pm0.0090$ \\
150 & 0.6 & 0.4 & 1000 & 0.006 & 0.0002 & $0.9974\pm0.0020$ & $0.9834\pm0.0044$ & $\bm{0.9188\pm0.0065}$ & $0.8663\pm0.0071$ \\
150 & 0.6 & 0.6 & 200 & 0.004 & 0.0008 & $0.9970\pm0.0035$ & $0.9975\pm0.0063$ & $\bm{0.8408\pm0.0108}$ & $0.6515\pm0.0179$ \\
150 & 0.6 & 0.6 & 500 & 0.004 & 0.0005 & $0.9900\pm0.0045$ & $0.9966\pm0.0019$ & $\bm{0.9187\pm0.0048}$ & $0.7862\pm0.0095$ \\
150 & 0.6 & 0.6 & 1000 & 0.009 & 0.0002 & $0.9970\pm0.0013$ & $0.9823\pm0.0050$ & $\bm{0.9293\pm0.0060}$ & $0.8626\pm0.0079$ \\
150 & 0.6 & 0.8 & 200 & 0.003 & 0.0003 & $0.9925\pm0.0054$ & $0.9535\pm0.0153$ & $\bm{0.8545\pm0.0110}$ & $0.6042\pm0.0208$ \\
150 & 0.6 & 0.8 & 500 & 0.006 & 0.0003 & $0.9794\pm0.0077$ & $0.9648\pm0.0065$ & $\bm{0.9306\pm0.0043}$ & $0.7628\pm0.0086$ \\
150 & 0.6 & 0.8 & 1000 & 0.006 & 0.0002 & $0.9877\pm0.0046$ & $0.9595\pm0.0073$ & $\bm{0.9342\pm0.0056}$ & $0.8417\pm0.0080$ \\
150 & 0.6 & 0.95 & 200 & 0.004 & 0.0005 & $0.9960\pm0.0057$ & $0.8655\pm0.0244$ & $\bm{0.8697\pm0.0125}$ & $0.5343\pm0.0214$ \\
150 & 0.6 & 0.95 & 500 & 0.009 & 0.0002 & $0.9930\pm0.0030$ & $0.8884\pm0.0140$ & $\bm{0.9346\pm0.0050}$ & $0.7203\pm0.0119$ \\
150 & 0.6 & 0.95 & 1000 & 0.003 & 0.0002 & $0.9687\pm0.0037$ & $0.9080\pm0.0124$ & $\bm{0.9354\pm0.0052}$ & $0.8147\pm0.0123$ \\
\bottomrule
\end{tabular}%
}

\caption{Top-$1$ accuracies on the synthetic learning task, for Models A (projection target) and B (fixed target).
Each row corresponds to one configuration $(d,\beta,\alpha,N_{\mathrm{tr}})$, with all  parameters fixed as in Table~\ref{tab:synthetic-params}.
For each model, $\mathrm{lr}_A$ and $\mathrm{lr}_B$ are the learning rates selected by validation grid search.
$\mathrm{Acc}^{\mathrm{train}}$ and $\mathrm{Acc}^{\mathrm{test}}$ denote the training and test top-$1$ accuracies, reported as mean $\pm$ standard deviation over $10$ independent runs.
The best test accuracy between Models A and B is shown in bold.}
\label{tab:all_test_acc}
\end{table*}

Table~\ref{tab:all_test_acc} compares Models A and B on the synthetic learning task over all configurations $(d,\beta,\alpha,N_{\mathrm{tr}})$. Overall, Model A has the best test accuracy in $35$ of the $36$ settings. The only exception is $(d,\beta,\alpha,N_{\mathrm{tr}})=(30,1.5,0.4,1000)$, where the two models are almost tied and Model B is slightly higher ($0.9287$ versus $0.9277$). The main pattern is therefore a clear overall advantage for the projection-based objective, with a single near-tie.

The size of the improvement depends strongly on $\alpha$. When $\alpha=0.4$, the advantage of Model A over Model B is generally smaller than for larger values of $\alpha$, and for large training sets it can become almost negligible. By contrast, when $\alpha$ increases to $0.6$, $0.8$, and $0.95$, the gap between the two models becomes much larger. This pattern appears for all three values of $d$, and it is strongest in the hardest settings. For example, with $(d,\beta)=(80,0.9)$ and $N_{\mathrm{tr}}=200$, the test-accuracy gap increases from about $0.097$ at $\alpha=0.4$ to about $0.219$ at $\alpha=0.95$. With $(d,\beta)=(150,0.6)$ and $N_{\mathrm{tr}}=200$, it increases from about $0.117$ to about $0.335$.

The training-set size also has a clear effect. For a fixed $(d,\beta,\alpha)$, the gap between the two models usually becomes smaller as $N_{\mathrm{tr}}$ increases. For example, at $(d,\beta,\alpha)=(80,0.9,0.95)$, the difference in test accuracy is about $0.219$ for $N_{\mathrm{tr}}=200$, about $0.101$ for $N_{\mathrm{tr}}=500$, and about $0.050$ for $N_{\mathrm{tr}}=1000$. A similar reduction of the gap appears in many other rows. This suggests that the projection-based target is most helpful when training data are limited, while the two objectives become closer when more data are available.

The training accuracies help explain this behaviour. Both models usually reach very high training accuracy, often close to $1$. This means that the difference between the methods is not simply that one model can fit the training data and the other cannot. The main difference appears on the test set. In many settings, Model B reaches training accuracy similar to, and sometimes slightly higher than, that of Model A, while still having much lower test accuracy. For example, at $(d,\beta,\alpha,N_{\mathrm{tr}})=(150,0.6,0.6,500)$, Model B has slightly higher training accuracy ($0.9966$ versus $0.9900$) but much lower test accuracy ($0.7862$ versus $0.9187$). This suggests that the advantage of Model A comes mainly from better generalization.

Overall, the results support the following conclusion. When the possibilistic supervision is fairly specific and the training set is large, the fixed target $\dot p(\pi)$ can already work well, and the two objectives may give very similar results. However, when the supervision becomes more ambiguous (larger $\alpha$), and especially when the number of training examples is small, the projection-based target gives a clear and often large improvement in test accuracy. In this synthetic benchmark, updating the target by projecting the current prediction onto $\mathcal F^{\mathrm{box}}(\pi)$ is especially helpful in the most ambiguous and most data-limited settings.

We focus on top-$1$ accuracy as the primary evaluation metric in this study.
Assessing whether the same pattern holds for other criteria (e.g., negative log-likelihood \cite{murphy2012machine}, Brier score \cite{brier1950}, or measures of constraint satisfaction with respect to the constraint sets $C_i$ defining $\mathcal F^{\mathrm{box}}$) requires additional experiments and is left for future work.

\subsection{Learning from possibilistic supervision on ChaosNLI}
\label{subsec:chaosnli}

We now turn to a real natural language inference task based on the ChaosNLI dataset \cite{nie2020can}.
ChaosNLI (Collective HumAn OpinionS on Natural Language Inference) is a benchmark 
built from examples that were originally drawn from SNLI \cite{bowman2015large} (1,514 examples), MultiNLI \cite{williams2018broad} (1,599 examples), and $\alpha$NLI \cite{bhagavatula2019abductive} (1,532 examples), each annotated by a large number of crowd workers.
In this work, we use the SNLI-based and MultiNLI-based portions of ChaosNLI jointly, and restrict attention to the standard three-label natural language inference setting with class set
\[
\mathcal Y=\{\mathrm{entailment},\mathrm{neutral},\mathrm{contradiction}\},
\qquad n=3.
\]
We exclude only the $\alpha$NLI portion, since it does not follow this label structure.

For each item, the supervision takes the form of annotator vote counts over $\mathcal Y$, and the degree of ambiguity is induced by the observed disagreement between annotators.

This setting is therefore complementary to the synthetic study of Subsection~\ref{subsec:topk-learning}: instead of generating possibilistic annotations by construction, we derive them from empirical human judgments.

\subsubsection{Dataset and feature representation}
\label{subsubsec:chaosnli-data}

In our setting, each example consists of 
a pair of natural language sentences (a premise and a hypothesis), together with annotator vote counts over the NLI label set $\mathcal Y$. Each vote corresponds to the view of the annotator about the connection between the premise and the hypothesis: Is the hypothesis entailed or contradicted (or neither) by the premise?

ChaosNLI was released as a collection of multiply annotated examples, without a predefined training/validation/test split for supervised learning. We therefore define our own split protocol:\footnote{The splits used in our experiments are provided in the code repository.} using a fixed split seed, we construct in a  deterministic (reproducible) way a split with target fractions $80\%$ for training, $10\%$ for validation, and $10\%$ for test.
The realized split sizes are $2489$ training items, $310$ validation items, and $314$ test items. For each of the three data splits, we additionally extract two ambiguity-based subsets, denoted $\mathcal S_{\mathrm{amb}}$ and $\mathcal S_{\mathrm{easy}}$; their definitions are given below. On the test split, these subsets contain $79$ items in $\mathcal S_{\mathrm{amb},\mathrm{te}}$ and $75$ items in $\mathcal S_{\mathrm{easy},\mathrm{te}}$.

Each sentence pair is encoded into a fixed vector $x\in\mathbb R^d$ using the pretrained RoBERTa-base encoder \cite{liu2019roberta}; specifically, we apply masked mean pooling to the last hidden layer, which gives $d=768$.
The encoder is kept fixed throughout, and only the classification head is learned.
Thus, as in the synthetic experiment, the comparison focuses on the effect of the training target rather than on differences in feature learning.

\subsubsection{Possibilistic annotation derived from vote counts}
\label{subsubsec:chaosnli-pi}

For each data item, let
\[
v=(v_y)_{y\in\mathcal Y}\in\mathbb N^3
\]
denote the vote counts over the three classes, and let
\begin{equation}\label{eq:voteprop}
\bar v_y:=\frac{v_y}{\sum_{z\in\mathcal Y} v_z},
\qquad y\in\mathcal Y,
\end{equation}
be the corresponding vote proportions.
We denote by $c^\star\in\mathcal Y$ the majority label provided with the item in the ChaosNLI annotation.
When one label has a vote count strictly larger than the other two, this label is exactly
\[
c^\star=\arg\max_{y\in\mathcal Y} v_y.
\]
In tied cases, we keep the dataset-provided majority label.

From these vote counts we construct a possibility distribution
\[
\pi=(\pi_y)_{y\in\mathcal Y}\in(0,1]^3.
\]
Let
\[
v_{\max}:=\max_{y\in\mathcal Y} v_y,
\qquad
\rho_\pi:=10^{-6}.
\]
We define, for each $y\in\mathcal Y$,
\[
\pi_y :=
\begin{cases}
\max\!\bigl(\frac{v_y}{v_{\max}},\,\rho_\pi\bigr), & \text{if } v_y>0,\\[2pt]
\rho_\pi, & \text{if } v_y=0.
\end{cases}
\]
Since $v_{\max}>0$, we have $\pi_{c^\star}=1$ whenever $c^\star$ attains the maximal vote count. Hence $\pi$ is a strictly positive possibility distribution with $\max_{y\in\mathcal Y}\pi_y=1$, obtained by rescaling the empirical vote counts.

As in Section~\ref{sec:prob-constraints}, we then construct the admissible set
$
\mathcal F^{\mathrm{box}}(\pi)\subseteq \Delta_3
$ 
from this possibility distribution.

The gap parameters are chosen by the same procedure as in the previous experiments (see Subsection~\ref{subsec:experimentalalgoone}), with
$
\varepsilon_{\mathrm{cap}}=0.05
$
(instead of $10^{-9}$ in the synthetic setting; the larger value reflects the coarser granularity of a three-class vote-derived possibility distribution).

This construction guarantees that the antipignistic probability $\dot p(\pi)$ belongs to $\mathcal F^{\mathrm{box}}(\pi)$.\\
In addition, we retain the empirical vote proportions $\bar v$ as a third target, which provides a direct probabilistic baseline not mediated by the possibilistic transform.

\subsubsection{Training objectives}
\label{subsubsec:chaosnli-objectives}

We compare three training objectives, all based on the same input representation $x$ and the same underlying supervision signal derived from the vote counts.

\smallskip
\noindent\emph{(i) Projection target (Model A).}
Model A uses as target the Kullback--Leibler projection of its current prediction $q^{\mathrm A}(x)$ onto the admissible set $\mathcal F^{\mathrm{box}}(\pi)$:
\[
p^\star(x,\pi)
:=
\arg\min_{p\in\mathcal F^{\mathrm{box}}(\pi)}
D_{\mathrm{KL}}\!\bigl(p\,\|\,q^{\mathrm A}(x)\bigr).
\]
The corresponding loss is
\[
\ell_{\mathrm{proj}}^{\mathrm A}(x,\pi)
:=
D_{\mathrm{KL}}\!\bigl(p^\star(x,\pi)\,\|\,q^{\mathrm A}(x)\bigr).
\]
Thus the target depends on the current prediction through the projection step and is recomputed during training. The projection is always performed on the full label space $\mathcal Y$, including when some classes receive zero votes, and is computed by Dykstra's algorithm with tolerance $\tau=10^{-6}$ and a maximum of $K_{\max}=500$ cycles.

\smallskip
\noindent\emph{(ii) Antipignistic fixed target (Model B).}
Model B uses the antipignistic probability $\dot p(\pi)\in\Delta_3$ associated with the possibility distribution $\pi$.
Its loss is
\[
\ell_{\mathrm{fix}}^{\mathrm B}(x,\pi)
:=
D_{\mathrm{KL}}\!\bigl(\dot p(\pi)\,\|\,q^{\mathrm B}(x)\bigr).
\]

\smallskip
\noindent\emph{(iii) Vote-proportion target (Model C).}
Model C uses the normalized vote vector $\bar v\in\Delta_3$ defined in \eqref{eq:voteprop} directly:
\[
\ell_{\mathrm{vote}}^{\mathrm C}(x,\pi)
:=
D_{\mathrm{KL}}\!\bigl(\bar v\,\|\,q^{\mathrm C}(x)\bigr).
\]
This baseline does not use the possibilistic representation beyond the original votes themselves.

\smallskip
All three models use the same linear softmax head
\[
x\mapsto q(x)=\mathrm{softmax}(Wx+b),
\]
so differences in performance can be attributed to the target construction rather than to differences in architecture.

\subsubsection{Training, model selection, and ambiguity slices}
\label{subsubsec:chaosnli-eval}
Let
$
\mathcal D_{\mathrm{tr}},
\mathcal D_{\mathrm{val}},
\mathcal D_{\mathrm{te}}
$
be the fixed training, validation, and test splits defined above.

Each experiment is identified by a triplet
\[
(\text{train},\text{val},\text{test}),
\]
where the first element specifies the section used for training, the second the section used for model selection, and the third the section used for final evaluation.

For training, we consider three sections:
\[
\texttt{train\_full}=\mathcal D_{\mathrm{tr}},\qquad
\texttt{train\_S\_amb}=\mathcal S_{\mathrm{amb},\mathrm{tr}},\qquad
\texttt{train\_S\_easy}=\mathcal S_{\mathrm{easy},\mathrm{tr}}.
\]
That is, training is carried out either on the full training split, on its ambiguity-focused subset, or on its easy subset. The subsets $\mathcal S_{\mathrm{amb},\mathrm{tr}}$ and $\mathcal S_{\mathrm{easy},\mathrm{tr}}$ are defined below from the vote-proportion distribution. Apart from the choice of training section, the training procedure is identical across experiments.

Training uses Adam \cite{kingma2014adam} with weight decay $10^{-4}$, batch size $256$, and $100$ epochs. The learning rate follows a cosine annealing schedule and reaches $1\%$ of its initial value at the end of training. After each epoch, we measure accuracy on the chosen validation section and keep the checkpoint with the highest validation accuracy.

Learning rates are selected separately for Models A, B, and C using the grid
\[
\{1,2,\dots,9\}\times 10^k,
\qquad k\in\{-4,-3,-2,-1\}.
\]
For each candidate learning rate, we train the model with three initialization seeds and average the validation accuracies on the selected validation section. We then retain the learning rate with the best mean validation accuracy. Each hyperparameter-search trial uses the same training procedure as the final runs: cosine annealing schedule, 
100 epochs, and best-epoch checkpoint selection on the chosen validation section.

For validation, we consider three sections:
\[
\texttt{val\_full}=\mathcal D_{\mathrm{val}},\qquad
\texttt{val\_S\_amb}=\mathcal S_{\mathrm{amb},\mathrm{val}},\qquad
\texttt{val\_S\_easy}=\mathcal S_{\mathrm{easy},\mathrm{val}}.
\]
That is, model selection is performed either on the full validation split, on its ambiguity-focused subset, or on its easy subset. The subsets $\mathcal S_{\mathrm{amb},\mathrm{val}}$ and $\mathcal S_{\mathrm{easy},\mathrm{val}}$ are defined below using the same fixed rule as for training and test. Each experimental setting is therefore specified by a training section and a validation section, before final evaluation on a chosen test section.

Hence each training-and-selection setting is first defined by a pair \((\text{train},\text{val})\). This includes aligned pairs such as \((\texttt{train\_full},\texttt{val\_full})\), \((\texttt{train\_S\_amb},\texttt{val\_S\_amb})\), and \((\texttt{train\_S\_easy},\texttt{val\_S\_easy})\), as well as mixed pairs such as \((\texttt{train\_S\_amb},\texttt{val\_full})\) and \((\texttt{train\_full},\texttt{val\_S\_easy})\).

For final evaluation, we report results on three test sections:
\[
\texttt{test\_full}=\mathcal D_{\mathrm{te}},\qquad
\texttt{test\_S\_amb}=\mathcal S_{\mathrm{amb},\mathrm{te}},\qquad
\texttt{test\_S\_easy}=\mathcal S_{\mathrm{easy},\mathrm{te}},
\]
where \(\mathcal S_{\mathrm{amb},\mathrm{te}}\) and \(\mathcal S_{\mathrm{easy},\mathrm{te}}\) are defined below.

The ambiguity-based subsets are defined from the vote-proportion distribution $\bar v$, see \eqref{eq:voteprop}. For each item $i$, we compute the largest vote proportion
\[
p_{\max}^{(i)}:=\max_{y\in\mathcal Y}\bar v_y^{(i)},
\]
and the normalized entropy
\[
H_{\mathrm{norm}}^{(i)}
:=
-\frac{\sum_{y\in\mathcal Y}\bar v_y^{(i)}\log \bar v_y^{(i)}}{\log 3},
\]
with the convention $0\log 0:=0$.

We first form the subset of training items that have a unique majority-vote label:
\[
\mathcal D_{\mathrm{tr}}^{\mathrm{uniq}}
:=
\Bigl\{
i\in\mathcal D_{\mathrm{tr}}
:\exists\,y\in\mathcal Y \text{ such that } v_y^{(i)}>v_z^{(i)} \text{ for all } z\neq y
\Bigr\}.
\]
In our split, this subset contains $2466$ items. On this subset, we compute the $30$th and $70$th percentiles of $p_{\max}^{(i)}$ and $H_{\mathrm{norm}}^{(i)}$. This yields the fixed thresholds
$T_{\mathrm{low\text{-}peak}}=0.6$,
$T_{\mathrm{high\text{-}peak}}=0.8$,
$T_{\mathrm{low\text{-}H}}=0.502902$,
and
$T_{\mathrm{high\text{-}H}}=0.703581$.
These thresholds are computed once from $\mathcal D_{\mathrm{tr}}^{\mathrm{uniq}}$ and then used unchanged for all training, validation, and test splits.

For each split $\mathrm{sp}\in\{\mathrm{tr},\mathrm{val},\mathrm{te}\}$, we define the ambiguous subset
\[
\mathcal S_{\mathrm{amb},\mathrm{sp}}
:=
\Bigl\{
i\in\mathcal D_{\mathrm{sp}}
:\text{$i$ has a unique majority-vote label, }
p_{\max}^{(i)}\le T_{\mathrm{low\text{-}peak}},
\text{ and }
H_{\mathrm{norm}}^{(i)}\ge T_{\mathrm{high\text{-}H}}
\Bigr\},
\]
and the easy subset
\[
\mathcal S_{\mathrm{easy},\mathrm{sp}}
:=
\Bigl\{
i\in\mathcal D_{\mathrm{sp}}
:\text{$i$ has a unique majority-vote label, }
p_{\max}^{(i)}\ge T_{\mathrm{high\text{-}peak}},
\text{ and }
H_{\mathrm{norm}}^{(i)}\le T_{\mathrm{low\text{-}H}}
\Bigr\}.
\]

The corresponding section sizes are as follows: for training, \texttt{train\_full} ($n=2489$), \texttt{train\_S\_amb} ($n=511$), and \texttt{train\_S\_easy} ($n=702$); for validation, \texttt{val\_full} ($n=310$), \texttt{val\_S\_amb} ($n=68$), and \texttt{val\_S\_easy} ($n=102$); for testing, \texttt{test\_full} ($n=314$), \texttt{test\_S\_amb} ($n=79$), and \texttt{test\_S\_easy} ($n=75$).

After model selection, we consider $10$ final paired runs for each $(\text{train},\text{val})$ pair. For a fixed run index, Models A, B, and C use the same initialization seed. Each trained model is then evaluated on all three test sections: \texttt{test\_full}, \texttt{test\_S\_amb}, \texttt{test\_S\_easy}.

The comparison therefore varies along three dimensions: the section used for training, the section used for model selection, and the section used for final evaluation. Reporting results in the triplet form
\[
(\text{train},\text{val},\text{test})
\]
makes this structure explicit in the table and in the discussion below.

\subsubsection{Results}
\label{subsubsec:chaosnli-results}

\begin{table}[H]
\centering

\renewcommand{\arraystretch}{1.5}
\resizebox{1.01\columnwidth}{!}{
\begin{tabular}{lll lll rrr rr}
\toprule
Train section & Val section & Test section & $\mathrm{lr}_A$ & $\mathrm{lr}_B$ & $\mathrm{lr}_C$ & $\mathrm{Acc}_A$ & $\mathrm{Acc}_B$ & $\mathrm{Acc}_C$ & $\Delta_{A-B}$ & $\Delta_{A-C}$ \\
\midrule
\multirow[t]{9}{*}[-0.35em]{\makecell[l]{\texttt{train\_full}}} & \multirow[t]{3}{*}[-0.35em]{\makecell[l]{\texttt{val\_full}}} & \makecell[l]{\texttt{test\_full}} & 0.007 & 0.007 & 0.005 & $0.500\pm0.011$ & $0.496\pm0.008$ & $\mathbf{0.505\pm0.008}$ & $+0.004$ & $-0.005$ \\
 &  & \makecell[l]{\texttt{test\_S\_amb}} & 0.007 & 0.007 & 0.005 & $0.337\pm0.014$ & $0.343\pm0.024$ & $\mathbf{0.358\pm0.027}$ & $-0.006$ & $-0.022$ \\
 &  & \makecell[l]{\texttt{test\_S\_easy}} & 0.007 & 0.007 & 0.005 & $\mathbf{0.616\pm0.018}$ & $0.604\pm0.026$ & $0.608\pm0.011$ & $+0.012$ & $+0.008$ \\
\cmidrule(lr){2-11}
 & \multirow[t]{3}{*}[-0.35em]{\makecell[l]{\texttt{val\_S\_amb}}} & \makecell[l]{\texttt{test\_full}} & 0.5 & 0.6 & 0.0001 & $0.449\pm0.022$ & $0.447\pm0.033$ & $\mathbf{0.456\pm0.012}$ & $+0.002$ & $-0.007$ \\
 &  & \makecell[l]{\texttt{test\_S\_amb}} & 0.5 & 0.6 & 0.0001 & $\mathbf{0.400\pm0.011}$ & $0.391\pm0.023$ & $0.373\pm0.052$ & $+0.009$ & $+0.027$ \\
 &  & \makecell[l]{\texttt{test\_S\_easy}} & 0.5 & 0.6 & 0.0001 & $0.492\pm0.027$ & $0.503\pm0.050$ & $\mathbf{0.516\pm0.028}$ & $-0.011$ & $-0.024$ \\
\cmidrule(lr){2-11}
 & \multirow[t]{3}{*}[-0.35em]{\makecell[l]{\texttt{val\_S\_easy}}} & \makecell[l]{\texttt{test\_full}} & 0.03 & 0.007 & 0.08 & $\mathbf{0.500\pm0.012}$ & $0.499\pm0.008$ & $0.495\pm0.014$ & $+0.002$ & $+0.005$ \\
 &  & \makecell[l]{\texttt{test\_S\_amb}} & 0.03 & 0.007 & 0.08 & $\mathbf{0.341\pm0.025}$ & $\mathbf{0.341\pm0.017}$ & $0.332\pm0.021$ & $+0.000$ & $+0.009$ \\
 &  & \makecell[l]{\texttt{test\_S\_easy}} & 0.03 & 0.007 & 0.08 & $0.617\pm0.018$ & $\mathbf{0.624\pm0.021}$ & $0.604\pm0.034$ & $-0.007$ & $+0.013$ \\
\midrule
\multirow[t]{9}{*}[-0.35em]{\makecell[l]{\texttt{train\_S\_amb}}} & \multirow[t]{3}{*}[-0.35em]{\makecell[l]{\texttt{val\_full}}} & \makecell[l]{\texttt{test\_full}} & 0.009 & 0.6 & 0.003 & $\mathbf{0.467\pm0.006}$ & $0.450\pm0.017$ & $0.461\pm0.007$ & $+0.017$ & $+0.006$ \\
 &  & \makecell[l]{\texttt{test\_S\_amb}} & 0.009 & 0.6 & 0.003 & $0.401\pm0.018$ & $0.391\pm0.030$ & $\mathbf{0.405\pm0.008}$ & $+0.010$ & $-0.004$ \\
 &  & \makecell[l]{\texttt{test\_S\_easy}} & 0.009 & 0.6 & 0.003 & $0.519\pm0.015$ & $\mathbf{0.541\pm0.046}$ & $0.511\pm0.018$ & $-0.023$ & $+0.008$ \\
\cmidrule(lr){2-11}
 & \multirow[t]{3}{*}[-0.35em]{\makecell[l]{\texttt{val\_S\_amb}}} & \makecell[l]{\texttt{test\_full}} & 0.009 & 0.7 & 0.7 & $\mathbf{0.468\pm0.006}$ & $0.421\pm0.035$ & $0.432\pm0.032$ & $+0.047$ & $+0.036$ \\
 &  & \makecell[l]{\texttt{test\_S\_amb}} & 0.009 & 0.7 & 0.7 & $\mathbf{0.410\pm0.022}$ & $0.373\pm0.019$ & $0.389\pm0.035$ & $+0.037$ & $+0.022$ \\
 &  & \makecell[l]{\texttt{test\_S\_easy}} & 0.009 & 0.7 & 0.7 & $\mathbf{0.532\pm0.012}$ & $0.464\pm0.049$ & $0.497\pm0.036$ & $+0.068$ & $+0.035$ \\
\cmidrule(lr){2-11}
 & \multirow[t]{3}{*}[-0.35em]{\makecell[l]{\texttt{val\_S\_easy}}} & \makecell[l]{\texttt{test\_full}} & 0.01 & 0.8 & 0.5 & $\mathbf{0.465\pm0.008}$ & $0.450\pm0.017$ & $0.454\pm0.025$ & $+0.015$ & $+0.012$ \\
 &  & \makecell[l]{\texttt{test\_S\_amb}} & 0.01 & 0.8 & 0.5 & $\mathbf{0.404\pm0.017}$ & $0.372\pm0.042$ & $0.357\pm0.047$ & $+0.032$ & $+0.047$ \\
 &  & \makecell[l]{\texttt{test\_S\_easy}} & 0.01 & 0.8 & 0.5 & $0.527\pm0.011$ & $\mathbf{0.560\pm0.042}$ & $0.531\pm0.051$ & $-0.033$ & $-0.004$ \\
\midrule
\multirow[t]{9}{*}[-0.35em]{\makecell[l]{\texttt{train\_S\_easy}}} & \multirow[t]{3}{*}[-0.35em]{\makecell[l]{\texttt{val\_full}}} & \makecell[l]{\texttt{test\_full}} & 0.005 & 0.1 & 0.2 & $0.499\pm0.006$ & $0.485\pm0.020$ & $\mathbf{0.503\pm0.015}$ & $+0.013$ & $-0.004$ \\
 &  & \makecell[l]{\texttt{test\_S\_amb}} & 0.005 & 0.1 & 0.2 & $0.347\pm0.012$ & $0.380\pm0.033$ & $\mathbf{0.415\pm0.027}$ & $-0.033$ & $-0.068$ \\
 &  & \makecell[l]{\texttt{test\_S\_easy}} & 0.005 & 0.1 & 0.2 & $\mathbf{0.613\pm0.018}$ & $0.575\pm0.023$ & $0.593\pm0.018$ & $+0.039$ & $+0.020$ \\
\cmidrule(lr){2-11}
 & \multirow[t]{3}{*}[-0.35em]{\makecell[l]{\texttt{val\_S\_amb}}} & \makecell[l]{\texttt{test\_full}} & 0.08 & 0.7 & 0.3 & $\mathbf{0.483\pm0.021}$ & $0.447\pm0.027$ & $0.477\pm0.028$ & $+0.037$ & $+0.006$ \\
 &  & \makecell[l]{\texttt{test\_S\_amb}} & 0.08 & 0.7 & 0.3 & $\mathbf{0.395\pm0.018}$ & $0.371\pm0.049$ & $0.381\pm0.030$ & $+0.024$ & $+0.014$ \\
 &  & \makecell[l]{\texttt{test\_S\_easy}} & 0.08 & 0.7 & 0.3 & $\mathbf{0.571\pm0.029}$ & $0.508\pm0.053$ & $0.555\pm0.053$ & $+0.063$ & $+0.016$ \\
\cmidrule(lr){2-11}
 & \multirow[t]{3}{*}[-0.35em]{\makecell[l]{\texttt{val\_S\_easy}}} & \makecell[l]{\texttt{test\_full}} & 0.2 & 0.09 & 0.3 & $\mathbf{0.493\pm0.011}$ & $0.489\pm0.008$ & $0.486\pm0.011$ & $+0.004$ & $+0.006$ \\
 &  & \makecell[l]{\texttt{test\_S\_amb}} & 0.2 & 0.09 & 0.3 & $\mathbf{0.418\pm0.025}$ & $0.382\pm0.019$ & $0.406\pm0.021$ & $+0.035$ & $+0.011$ \\
 &  & \makecell[l]{\texttt{test\_S\_easy}} & 0.2 & 0.09 & 0.3 & $0.573\pm0.020$ & $0.584\pm0.021$ & $\mathbf{0.589\pm0.022}$ & $-0.011$ & $-0.016$ \\
\bottomrule
\end{tabular}
}

\vspace{0.5em}
\caption{Top-1 accuracy on ChaosNLI when the training set is restricted to \texttt{train\_full}, \texttt{train\_S\_amb}, or \texttt{train\_S\_easy}, for Models A (projection target), B (antipignistic target), and C (vote-proportion target). The selected learning rates for each target are reported alongside the accuracies. Results are organized by training section, validation section, and test section. Accuracies are mean $\pm$ standard deviation over paired runs, and the best mean accuracy in each row is shown in bold.}
\label{tab:chaosnli-train-section-merged}
\end{table}

Table~\ref{tab:chaosnli-train-section-merged} shows a clear pattern. The clearest relative advantage of Model A appears when training is restricted to the ambiguity-focused subset \texttt{train\_S\_amb}. In that regime, it gives the highest mean accuracy on \texttt{test\_full} for all three validation choices, and it is also best or very close to best on \texttt{test\_S\_amb}. Model A is also strong when training uses the easy subset \texttt{train\_S\_easy}, where it achieves the highest mean accuracy in six of the nine rows. When training uses the full split \texttt{train\_full}, the picture is more mixed, with Model C reaching the best mean in several settings.

The clearest evidence for Model A comes from triplets of the form
\[
(\texttt{train\_S\_amb},\texttt{val\_*},\texttt{test\_full}).
\]
For
$(\texttt{train\_S\_amb},\texttt{val\_full},\texttt{test\_full})$,
Model A reaches $0.467\pm0.006$, compared with $0.450\pm0.017$ for Model B and $0.461\pm0.007$ for Model C.
For
$(\texttt{train\_S\_amb},\texttt{val\_S\_amb},\texttt{test\_full})$,
Model A reaches $0.468\pm0.006$, while Models B and C obtain $0.421\pm0.035$ and $0.432\pm0.032$.
For
$(\texttt{train\_S\_amb},\texttt{val\_S\_easy},\texttt{test\_full})$,
Model A again remains best, with $0.465\pm0.008$ against $0.450\pm0.017$ and $0.454\pm0.025$.
These three rows show that, once training is focused on ambiguous examples, Model A remains the strongest model on the overall test distribution regardless of the validation section used for model selection.

A similar pattern appears on the ambiguous test subset. For
$(\texttt{train\_S\_amb},\texttt{val\_S\_amb},\texttt{test\_S\_amb})$,
Model A reaches $0.410\pm0.022$, ahead of Model B at $0.373\pm0.019$ and Model C at $0.389\pm0.035$.
For
$(\texttt{train\_S\_amb},\texttt{val\_S\_easy},\texttt{test\_S\_amb})$,
Model A is again best, with $0.404\pm0.017$ versus $0.372\pm0.042$ and $0.357\pm0.047$.
Only for
$(\texttt{train\_S\_amb},\texttt{val\_full},\texttt{test\_S\_amb})$
does Model A fall slightly below the best mean, with $0.401\pm0.018$ against $0.405\pm0.008$ for Model C.
Overall, the \texttt{train\_S\_amb} regime gives the clearest and most consistent support for Model A.

A second notable feature of the \texttt{train\_S\_amb} block is the scale of the selected learning rates. For Model A, the selected values stay in the range $0.009$--$0.01$, which is moderate compared with the values selected for the other models in this block.
For Model B, the selected values are much larger, ranging from $0.6$ to $0.8$. For Model C, two of the three selected values are also large ($0.5$ and $0.7$), while the third ($0.003$, under \texttt{val\_full}) stays in a moderate range.
This shows that, on the ambiguity-focused training subset, Model B is selected with substantially larger learning rates than Model A, while Model C also tends to require larger values in two of the three validation settings. In contrast, Model A remains at a moderate scale across all three validation settings in this block.
Together with the accuracy results, this is consistent with Model A being better suited to this ambiguity-focused regime.

When training is performed on the full training split, the results are more mixed. On \texttt{test\_full}, Model C is best for
$(\texttt{train\_full},\texttt{val\_full},\texttt{test\_full})$
with $0.505\pm0.008$, ahead of $0.500\pm0.011$ for Model A and $0.496\pm0.008$ for Model B, and also for
$(\texttt{train\_full},\texttt{val\_S\_amb},\texttt{test\_full})$.
Model A is best for
$(\texttt{train\_full},\texttt{val\_S\_easy},\texttt{test\_full})$
with $0.500\pm0.012$ against $0.499\pm0.008$ and $0.495\pm0.014$.
On \texttt{test\_S\_easy}, Model A reaches the highest mean for one of the three validation choices, namely
$(\texttt{train\_full},\texttt{val\_full},\texttt{test\_S\_easy})$
with $0.616\pm0.018$.
On \texttt{test\_S\_amb}, Model A is best for
$(\texttt{train\_full},\texttt{val\_S\_amb},\texttt{test\_S\_amb})$
with $0.400\pm0.011$, and ties with Model B at $0.341$ under \texttt{val\_S\_easy}, while Model C leads under \texttt{val\_full}.
Thus, under \texttt{train\_full}, the three models share the wins, with no single model dominating.

When training is restricted to the easy subset, Model A achieves the highest mean accuracy in six of the nine rows. Under \texttt{val\_S\_amb}, Model A is best on all three test sections, with margins that are often large: for example, $0.483\pm0.021$ on \texttt{test\_full} versus $0.477\pm0.028$ for Model C and $0.447\pm0.027$ for Model B, and $0.571\pm0.029$ on \texttt{test\_S\_easy} versus $0.555\pm0.053$ and $0.508\pm0.053$.
Under \texttt{val\_S\_easy}, Model A is best on \texttt{test\_full} ($0.493\pm0.011$) and \texttt{test\_S\_amb} ($0.418\pm0.025$), while Model C leads on \texttt{test\_S\_easy} ($0.589\pm0.022$).
The remaining three rows, all under \texttt{val\_full}, are won by Model C on \texttt{test\_full} and \texttt{test\_S\_amb}, and by Model A on \texttt{test\_S\_easy} ($0.613\pm0.018$).
Thus, even when training is concentrated on easy examples, Model A is competitive or best in most settings.

Taken together, Model A achieves the highest mean accuracy in $15$ of the $27$ rows, with one additional tie with Model B. Its advantage is clearest and most consistent under \texttt{train\_S\_amb}, where it is best on \texttt{test\_full} for all three validation choices and on \texttt{test\_S\_amb} for two of the three. Under \texttt{train\_S\_easy}, Model A is best in six of the nine rows, showing that the projection-based objective also remains competitive when training is restricted to easier examples. Under \texttt{train\_full}, the three models share the wins more evenly, with Model C taking several rows on \texttt{test\_full} and \texttt{test\_S\_amb}. Overall, the projection-based target is the strongest model most often across training regimes, validation protocols, and test sections.

\section{Conclusion}

In this article, we show how to characterize using linear constraints a set $\mathcal F^{\mathrm{box}}$ of probability vectors that are compatible with a given normalized possibility distribution $\pi^{\mathrm{full}}$ on a finite set of classes, whose restriction to its support $Y = \{1,2,\cdots,n\}$ is denoted by $\pi$. The characterization combines two types of constraints. First, we enforce probabilistic compatibility with $\pi^{\mathrm{full}}$ through dominance constraints of the form $N(A)\le P(A)\le \Pi(A)$. Second, we add linear shape constraints that preserve the ordering carried by $\pi$: any $p\in\mathcal F^{\mathrm{box}}$ satisfies the equivalence  $\pi_k\ge \pi_{k'} \Longleftrightarrow p_k\ge p_{k'}$ where   $k,k'\in Y$. This yields a non-empty, closed convex subset $\mathcal F^{\mathrm{box}}\subseteq\Delta_n$, which can be written as an intersection of simple constraint sets
$\mathcal F^{\mathrm{box}}=\bigcap_{i=1}^m C_i,$ with $m=3n-3.$\\

Given a strictly positive reference probability vector $q\in\Delta_n$, we then consider the problem of computing the Kullback-Leibler projection of $q$ onto $\mathcal F^{\mathrm{box}}$. Using that $D_{\mathrm{KL}}(\cdot\|q)$ coincides on the probability simplex $\Delta_n$ with the Bregman distance induced by the negative entropy function, we apply Dykstra's algorithm with Bregman projections (Algorithm \ref{ex:algo:dykstra}). We derive explicit formulas for the Bregman projections onto each $C_i$, and show an equivalent reformulation of the algorithm (Lemma \ref{lemma:AlgoD1}), which can be used in practice. \\ Finally, we report three experiments.
The first experiment empirically evaluates the proposed projection procedure on synthetic data and shows that it reliably produces outputs consistent with $\mathcal F^{\mathrm{box}}$, with very small constraint violation within reasonable computation time for moderate tolerances, while tighter tolerances require more cycles.
The second experiment studies learning from possibilistic supervision on synthetic data and indicates that using projection-based targets can improve predictive performance over a fixed probability target derived from $\pi$ under the same supervision.
The third experiment considers a real natural language inference task based on the ChaosNLI dataset \cite{nie2020can}. It shows that the projection-based approach remains competitive on naturally ambiguous annotations.

As a perspective, we plan to extend the empirical study to additional real datasets with multiple annotations per instance, see, e.g., \cite{collins2022eliciting,collins2023human}.  In such a setting, epistemic uncertainty on instances must often be dealt with, because of diverging opinions (conflicts) among people about the ``right'' annotation to be made. This is particularly salient when the classes used correspond to fuzzy concepts, that can easily be interpreted in different ways by the annotators depending on their own experience and background.
For example, in FERPlus \cite{barsoum2016training}, each face image is annotated by several annotators: for each image, an annotator selects one label among eight emotion classes, or chooses an additional label such as ``unknown'' or ``not a face''. From these annotations, we can derive a possibility distribution that captures graded plausibility and explicitly represents ignorance, so that ``unknown''/``not a face'' responses can be incorporated naturally. Such datasets provide a natural setting to assess our method.

Our framework can also be used in the conformal learning setting of \cite{lienen2023conformal}. In their work, conformal prediction provides, for each input instance, a graded description of label uncertainty, which they represent as a possibility distribution $\pi$ over the classes. They project the model output onto a constraint set defined by the dominance constraints induced by $\pi$, and use the KL divergence to this projection as a training loss. However, this projection step does not enforce consistency with the class ordering expressed by $\pi$. The same learning pipeline applies with our construction: we keep the KL-projection scheme, but we project onto $\mathcal F^{\mathrm{box}}$, which enforces the dominance constraints induced by $\pi$ and additionally preserves the ordering expressed by $\pi$. More broadly, the same projection-based learning scheme applies to tasks in which supervision defines admissible class-probability vectors from possibility distributions, including alternatives to label smoothing~\cite{lienen2021label}, handling noisy labels~\cite{lienen2024mitigating}, and credal self-supervised learning~\cite{lienen2021credal}.\\
Finally, as noted in Remark~\ref{remark:extension}, the same KL-projection framework can be extended to admissible sets of probability vectors other than $\mathcal F^{\mathrm{box}}$, defined by combining linear subset inequalities with linear shape constraints, as long as the resulting set $F\subseteq\Delta_n$ is non-empty, closed, and convex. A key strength of our framework is that different types of constraints can be combined within $F$. For instance, credal sets induced by probability-interval constraints (as discussed by \cite{cuzzolin2009credal}) are covered by the proposed setting.

\bibliographystyle{plainnat}        
\bibliography{ref} 

\clearpage
\appendix

\section{Proofs of Section \ref{sec:prob-constraints}}

\subsection{Proof of Proposition \ref{prop:dominance}}
\label{subsec:proof:propdominance}
\PropDominance*

\begin{proof}
Let us remark that the double inequality $N(A) \le P(A) \le \Pi(A)$ is trivially true for  $A =\emptyset$ and $A =\{1, 2 \dots, n\}$.

Recall that $A_0:= \emptyset$,
$A_r:=\{\sigma(1),\dots,\sigma(r)\}$ for $r=1,\dots,n -1$  and $A_r^{c}=\{\sigma(r+1),\dots,\sigma(n)\}$.
By construction,
\[
\Pi(A_r^{c}) = \max_{j\in A_r^{c}} \pi_j = \tilde\pi_{r+1},
\qquad r=1,\dots,n-1.
\]

\medskip 
Assume that $p$ satisfies
\[
N(A)\ \le\ P(A)\ \le\ \Pi(A)
\qquad\text{for all }A\subseteq Y.
\]
Apply this double inequality with $A=A_r^{c}$, $r=1,\dots,n-1$. Then
\[
P(A_r^{c}) \ \le\ \Pi(A_r^{c}) \ =\ \tilde\pi_{r+1},
\]
hence
\[
\sum_{k\in A_r} p_k
= 1 - P(A_r^{c})
\ \ge\ 1 - \Pi(A_r^{c})
= 1 - \tilde\pi_{r+1},
\qquad r=1,\dots,n-1.
\]
Thus all nested subset constraints hold.

\medskip
Conversely, assume that $p$ satisfies
\[
\sum_{k\in A_r} p_k \ \ge\ 1-\tilde\pi_{r+1},
\qquad r=1,\dots,n-1.
\]
For each $r=1,\dots,n-1$ we have
\begin{equation}\label{eq:parc}
    P(A_r^{c})
= 1 - \sum_{k\in A_r} p_k
\ \le\ \tilde\pi_{r+1}
= \Pi(A_r^{c}).
\end{equation}

\smallskip 
Given any non-empty subset $A\subset\{1,\dots,n\}$,  define
\[
s := \min\{ r\in\{1,\dots,n\}:\ \sigma(r)\in A \}.
\]
By definition of $s$, we have $A\subseteq A_{s-1}^{c}$ (since no index
$\sigma(1),\dots,\sigma(s-1)$ belongs to $A$). Moreover,
\[
\Pi(A) = \max_{i\in A} \pi_i
= \pi_{\sigma(s)}
= \tilde\pi_s
= \Pi(A_{s-1}^{c}).
\]
Using \eqref{eq:parc} with $r=s-1$ when $s\ge 2$, we obtain
\[
P(A) \ \le\ P(A_{s-1}^{c})
\ \le\ \Pi(A_{s-1}^{c})
\ =\ \Pi(A).
\]
If $s=1$, then $\sigma(1)\in A$ and $\Pi(A)=\tilde\pi_1=1$, so trivially
$P(A)\le 1=\Pi(A)$. Hence we proved that $P(A)\le \Pi(A)$ for all $A\subseteq \{1, 2, \dots, n\}$.\\ 
Finally, recall that for every $A\subseteq Y$ we have
\[
N(A) = 1-\Pi(A^c).
\]
Therefore, from the inequality $P(A^c)\le \Pi(A^c)$  ,  we obtain
\[
1-P(A^c) \ge 1-\Pi(A^c),
\qquad\text{i.e.,}\qquad
P(A) \ge N(A).
\]
This shows that $P(A)\le \Pi(A)$ for all $A$ is equivalent to $N(A)\le P(A)$ for all $A$.
\end{proof}

\subsection{Proof of Lemma \ref{lemma:lemmaPropAntipignisticProb}}
\label{subsec:proof:lemmaPropAntipignisticProb}

\lemmaPropAntipignisticProb*
\begin{proof}
Let us prove the first statement by contradiction. Suppose that for $i\in\{1, 2, \dots n\}$, we have 
$\dot p_i = 0$. Then from the definition    of $\dot p$,  see (\ref{eq:ppoint}), there is an index $r\in \{1, 2, \dots n\}$ such that:
\[ i = \sigma(r) \quad \text{and} \quad 0 = \dot p_{\sigma(r)} = \sum_{j = r}^n \dfrac{\tilde\pi_j - \tilde\pi_{j + 1}}{j}\]
As a consequence:
\[0  =  \tilde\pi_r - \tilde\pi_{r + 1} = \tilde\pi_{r +  1} - \tilde\pi_{r + 2} = \dots = 
\tilde\pi_{n} - \tilde\pi_{n + 1} = \tilde\pi_{n}\]
As $\tilde\pi_{n} > 0$, we get a contradiction. 

 From the definition  of $\dot p$,  see (\ref{eq:ppoint}), we easily deduce that 
 $\dot p_{\sigma(r)} - \dot p_{\sigma(r + 1)}  =  \sum_{j=r}^n \dfrac{1}{j}(\tilde\pi_j - \tilde\pi_{j + 1}) - 
 \sum_{j=r + 1}^n \dfrac{1}{j}(\tilde\pi_j - \tilde\pi_{j + 1}) = 
 \dfrac{1}{r}(\tilde\pi_r - \tilde\pi_{r + 1}).$

 To prove the third statement, set for all $1 \le j \le n$  and $1 \le s \le r$:
 \[ t_{js} := \begin{cases}
 0 \quad &\text{if} \quad j < s \\
 \dfrac{1}{j}(\tilde\pi_j - \tilde\pi_{j + 1})\quad& \text{if} \quad j \ge s 
 \end{cases}.\]
We have:
\begin{align}
 \sum_{i\in A_r}\, \dot p_i  & = \dot p_{\sigma(1)}  + \dot p_{\sigma(2)} + \dots \dot p_{\sigma(r)} \nonumber\\ 
 & = \sum_{s = 1}^r \sum_{j=s}^n \dfrac{1}{j}(\tilde\pi_j - \tilde\pi_{j + 1})\nonumber\\
 & = \sum_{s = 1}^r \sum_{j=1}^n  t_{j s} = \sum_{j=1}^n   \sum_{s = 1}^r   t_{j s}\nonumber\\
 &\ge \sum_{j=1}^r   \sum_{s = 1}^j   t_{j s} = \sum_{j=1}^r   \sum_{s = 1}^j   \dfrac{1}{j}(\tilde\pi_j - \tilde\pi_{j + 1})   \nonumber\\
 &= \sum_{j=1}^r  (\tilde\pi_j - \tilde\pi_{j + 1}) = 1 - \tilde\pi_{r + 1}.\nonumber
\end{align}
 
\end{proof}

\subsection{Proof of Proposition \ref{proposition:PiP}}
\label{subsec:proof:proposition:PiP}
\PropositionPiP*
\begin{proof}
Set $k= \sigma(r)$ and $k'= \sigma(r')$ with $r , r'\in\{1, 2, \dots, n\}$.  \\
$\bullet$ If $\pi_k \ge  \pi_{k'}$, let us prove the inequality  $p_k \ge  p_{k'}$ by contradiction.\\
Suppose that $p_k <  p_{k'}$, then  $r' < r$, otherwise if  $r\le r'$, then $r < r'$ and we have:
\[ p_{k} - p_{k'} =  (p_{\sigma(r)} - p_{\sigma(r + 1)}) + (p_{\sigma(r + 1)} - p_{\sigma(r + 2)}) + \dots + (p_{\sigma(r' - 1)} - p_{\sigma(r')} )\]
is a sum of $r' -r$ positive numbers, which contradicts $p_{k} - p_{k'} < 0$.\\
As we have $r' < r$, from
\begin{equation}\label{eq:shape1}
0 < p_{k'} - p_{k} =  (p_{\sigma(r')} - p_{\sigma(r' + 1)}) + (p_{\sigma(r' + 1)} - p_{\sigma(r' + 2)}) + \dots + (p_{\sigma(r - 1)} - p_{\sigma(r)}).   
\end{equation}
we deduce that there is at least one term $p_{\sigma(r'+s)} - p_{\sigma(r' +s +1)}$  which is strictly positive and then
by (\ref{eq:overlinedeltar}) we obtain $r'+s\in R_{\mathrm{strict}}$ ; by the definition (\ref{eq:eqstrict}) of 
$R_{\mathrm{strict}}$, we conclude that $(\tilde\pi_{r' + s} - \tilde\pi_{r'+s +1}) > 0$. Finally, we have:
\begin{equation}\label{eq:shape2}
 \pi_{k'} - \pi_{k} =  (\tilde\pi_{r'} - \tilde\pi_{r'+1}) + (\tilde\pi_{r' +1} - \tilde\pi_{r' +2}) + \dots + (\tilde\pi_{r -1} - \tilde\pi_{r}) > 0  
\end{equation}
which contradicts the hypothesis $\pi_k \ge \pi_{k'}$, thus we proved $p_k \ge p_{k'}$.\\
$\bullet$ If $p_k \ge  p_{k'}$, let us prove the inequality  $\pi_k \ge  \pi_{k'}$ by contradiction.\\
Suppose that $\pi_k <  \pi_{k'}$, then  by the nonincreasing of $(\tilde\pi_i := \pi_{\sigma(i)})$, we deduce  $r' < r$ and then 
\[ 0 < \pi_{k'} - \pi_{k} =  (\tilde\pi_{r'} - \tilde\pi_{r'+1}) + (\tilde\pi_{r' +1} - \tilde\pi_{r' +2}) + \dots + (\tilde\pi_{r -1} - \tilde\pi_{r}) > 0. \]
We deduce that there is at least one term $\pi_{\sigma(r'+s)} - \pi_{\sigma(r' +s +1)}$  which is strictly positive and then
by (\ref{eq:eqstrict}) we obtain $r'+s\in R_{\mathrm{strict}}$ ; by  (\ref{eq:underlinedeltar}), we obtain:
\[0 < \underline{\delta}_{(r' + s)} \le p_{r' + s} - p_{r'+s +1}.\]
Finally, we get:\\
\[ p_{k'} - p_{k} =  (p_{\sigma(r')} - p_{\sigma(r' + 1)}) + (p_{\sigma(r' + 1)} - p_{\sigma(r' + 2)}) + \dots + (p_{\sigma(r - 1)} - p_{\sigma(r)}) > 0\]
which contradicts the hypothesis $p_k \ge p_{k'}$. We proved the inequality $\pi_k \ge \pi_{k'}$.
\end{proof}

\section{Proofs of Section \ref{sec:bregman-dykstra}}

\subsection{Proof of Proposition \ref{prop:hyp3.2}}
\label{subsec:proof:hyp32}
\propositionOkNegEntropFbox*

\begin{proof}
$f$ is  very strictly convex  means   (\cite[Definition 2.8]{bauschke2000dykstras})     that $f$ is twice differentiable on   $\operatorname{int}(\mathrm{dom}f) = \mathbb{R}^n_{++}$  and the Hessian matrix of $f$ at any point of $\mathbb{R}^n_{++}$ is a  positive  definite matrix.

General theorems of differential calculus imply that $f$  is twice differentiable on   $\operatorname{int}(\mathrm{dom}f) = \mathbb{R}^n_{++}$. For any $x\in  \mathbb{R}^n_{++}$ and $1 \le i, j \le n$, we easily get: 
\[  \dfrac{\partial^2 f}{\partial x_i \partial x_j}(x) = \begin{cases}
    0 \quad & \quad \text{if} \quad  i \not = j\nonumber \\
    \dfrac{1}{x_j} \quad & \quad \text{if} \quad  i   = j\nonumber
\end{cases}. \]
So, the Hessian matrix   $\nabla f^2(x)$  of $f$ at $x\in \mathbb{R}^n_{++}$ is a diagonal matrix whose eingenvalues are strictly positive, thus $\nabla f^2(x)$ is positive definite. 
We also remark that the second partial derivatives $\mathbb{R}^n_{++} \rightarrow \mathbb{R} : x \mapsto \dfrac{\partial^2 f}{\partial x_i \partial x_j}(x)$ are continuous on $\mathbb{R}^n_{++}.$

$f$ is   a Legendre function    means  (\cite[Definition 2.1]{bauschke2000dykstras})     that $f$ satisfies  the following conditions:
\begin{enumerate}
    \item $f$  is differentiable on  $\operatorname{int}(\mathrm{dom}f) = \mathbb{R}^n_{++}$.
    \item  $\lim_{t \rightarrow 0, t >0} \langle \nabla f(x + t(y -x), y - x\rangle = - \infty$  for all $x\in \mathbb{R}^n_{+} \backslash \mathbb{R}^n_{++}$ and 
    $y\in \mathbb{R}^n_{++}$.
    \item $f$  is strictly convex   on  $\mathbb{R}^n_{++}$.
\end{enumerate}
We have noticed that $f$ is twice differentiable on $\mathbb{R}^n_{++}$ with continuous second partial derivatives, so it is differentiable on $\mathbb{R}^n_{++}$ and the very strictly convex property implies the   strictly convex property, see \cite[Theorem 2.3.7]{peressini1988mathematics}.

To prove the second statement, let $S := \text{Support}(x) =  \{j\in \{1, 2, \dots, n\}\,\mid \, x_j > 0\}$ be the support of $x\in \mathbb{R}^n_{+} \backslash \mathbb{R}^n_{++}$. By hypothesis, $S^c:= \{1, 2, \dots, n\} \backslash S \not=\emptyset$. For any 
$y\in \mathbb{R}^n_{++}$, we have:
\begin{align}
 \langle \nabla f(x + t(y -x), y - x\rangle &= \sum_{k = 1}^n (\log(x_k + t(y_k -x_k)) + 1). (y_k - x_k)\nonumber\\
 &= \sum_{k \in S  }  (\log(x_k + t(y_k -x_k)) + 1). (y_k - x_k) + \sum_{k \in S^c} (\log(t. y_k)   + 1). y_k.\nonumber
\end{align}
Then we deduce:
\[ \lim_{t \rightarrow 0, t >0} \sum_{k \in S  }  (\log(x_k + t(y_k -x_k)) + 1). (y_k - x_k) = \sum_{k \in S  }  (\log(x_k)  + 1). (y_k - x_k).\]
\[ \lim_{t \rightarrow 0, t >0} \sum_{k \in S^c  }  (\log(t. y_k)   + 1). y_k  = -\infty.
\]
Thus, $\lim_{t \rightarrow 0, t >0} \langle \nabla f(x + t(y -x), y - x\rangle = -\infty$. We proved that $f$ is a Legendre function.

$f$ being co-finite   means   (\cite[Definition 2.6]{bauschke2000dykstras})     that $\lim_{r \rightarrow +\infty} \dfrac{f(r . x)}{r } = +\infty$
for all $x\in \mathbb{R}^n\backslash\{0\}$. Then:

$\bullet$ if $x\notin\mathrm{dom} f$, then for all $r > 0$, we have $r.x \notin\mathrm{dom} f$   and  then $ \dfrac{f(r . x)}{r } = +\infty$.

$\bullet $ if $x\in  \mathrm{dom} f\backslash\{0\}= \mathbb{R}^n_+\backslash\{0\}$, let $S := \text{Support}(x) =  \{j\in \{1, 2, \dots, n\}\,\mid \, x_j > 0\}$ be th support of $x$. By hypothesis, $S \not=\emptyset$. For all $r > 0$,  we have:
\[\dfrac{f(r .x)}{r } = \dfrac{1}{r} \sum_{k\in S} \, r. x_k \log(r . x_k) = \sum_{k\in S}    x_k \log(r . x_k).\]
As $\lim_{r \rightarrow +\infty} \log r = +\infty$ and $x_k > 0$ for all $k\in S$, we obtain 
$\lim_{r \rightarrow +\infty}\sum_{k\in S}    x_k \log(r . x_k) = +\infty$.

Finally, the probability distribution $\dot p$ associated with the possibility distribution $\pi$  satisfies $\dot p\in \mathcal{F}^{\mathrm{box}} \cap \mathbb{R}^n_{++}$,  see Lemma \ref{lemma:lemmaPropAntipignisticProb} and Proposition \ref{prop:Fbox-convex-closed}.
\end{proof}

\subsection{Proof of Lemma \ref{lemma:Projty}}
\label{subsec:proof:lemma:Projty}
\lemmaProjty*
\begin{proof}
From the first statement of Lemma \ref{lemma:Dfxty}, we have 
\[ \Proj_C^f(y) :=  \arg\min_{x\in C}D_f(x, y) =  \arg\min_{x\in C}D_{\mathrm{KL}}(x\|y).\]
Let us introduce the following functions:
\[ g_1 : C \rightarrow \mathbb{R}  : x \mapsto D_f(x, t y), \quad g_2 : C \rightarrow \mathbb{R}  : x \mapsto D_f(x,  y).\]
From   the third statement of \cite[Theorem 3.12]{bauschke1997legendre} and  (\ref{eq:projB}), we have:
\[\arg\min_{x\in C} g_1(x) = \{\Proj_C^f(t y)\}, \quad \arg\min_{x\in C}\, g_2(x) = \{\Proj_C^f( y)\}.
\]
As we have  by the second statement of Lemma \ref{lemma:Dfxty}:
\[ g_1(x) = g_2(x) + t - \log t - 1 \quad  \text{for all $x\in C$},\]
we obtain the equality $\arg\min_{x\in C} g_1(x) = \arg\min_{x\in C}\, g_2(x)$.
\end{proof}

\subsection{Proof of Lemma \ref{lemma:KKT}}
\label{subec:proof:lemmaKKT}
\lemmaKKT*

\begin{proof}
As we suppose that $C \cap \mathbb{R}^n_{++} \not= \emptyset$,  $C$  obviously   is a non-empty closed convex subset of $\mathbb{R}^n_+ $ and then the first statement is deduced from \cite[Theorem 3.12]{bauschke1997legendre}.

To prove the second statement, we must explicit the KKT conditions of the  convex optimization problem (\ref{eq:optimpb}) satisfied by $\hat z$. This requires the Lagrangian function associated with (\ref{eq:optimpb}):
\[ L(x, \lambda, \mu, \nu) = D_f(x, z) + \lambda\bigl(b-\langle x,v\rangle\bigr) - \sum_{k=1}^n \mu_k x_k + \nu(1 - \sum_{k=1}^n x_k)  \]
where $(x , \lambda, \mu, \nu) \in \mathbb{R}^n \times\mathbb{R}\times \mathbb{R}^n \times \mathbb{R}$.

As we have $\hat z \in C \cap \mathbb{R}^n_{++}$,  by the remaining KKT conditions satisfied by $\hat z  $, see 
\cite[Chapter 5, p244]{boyd2004convex}, there is 
$(\lambda^\star,  \mu^\star, \nu^\star) \in\mathbb{R}\times \mathbb{R}^n \times \mathbb{R}$ such that:
\begin{enumerate}
    \item $\lambda^\star \ge 0 $  and  $\lambda^\star (b - \langle \hat z, v\rangle) = 0$.
    \item $\mu^\star_k \,.\, \hat z_k = 0$ for all $k\in\{1, 2, \dots, n\}$.
    \item $\dfrac{\partial L}{\partial x_k}(\hat z,\lambda^\star,  \mu^\star, \nu^\star) = 0$ for all $k\in\{1, 2, \dots,  n\}$.
\end{enumerate}
For all $(x, \lambda, \mu, \nu)\in     \mathbb{R}^n_{++} \times \mathbb{R} \times \mathbb{R}^n \times \mathbb{R} $ and     $k\in\{1, 2, \dots,  n\}$,  we have: 
\[\dfrac{\partial L}{\partial x_k}(x, \lambda, \mu, \nu) = \log \dfrac{x_k}{z_k}   - \lambda \,v_k - \mu_k - \nu. \]
As $ \hat z\in\mathbb{R}^n_{++}$, by the second KKT condition, we deduce that $\mu^\star_k = 0 $ for $k\in\{1, 2, \dots,  n\}$.

By the third KKT condition we deduce that for all $k\in\{1, 2, \dots,  n\}$, we have 
\begin{equation}\label{eq:compyhat0}
\hat z_k = e^{\lambda^\star \, v_k + \nu^\star} z_k.    
\end{equation}
Finally, if $b < \langle \hat z, v \rangle$, then from the first KKT condition satisfied by $\hat z $
we get $\lambda^\star = 0$.

\end{proof}

\subsection{Proof of Proposition \ref{prop:projCbv1}}
\label{subsec:proof:Cbv1}

\propprojCbvOne*
\begin{proof}
If $s \ge b$ then  $z^\sharp \in C_r$. Therefore, we deduce from   Corollary \ref{cor:projCzzNat}  and (\ref{eq:projRule1}):    
$$\Proj_C^f(z) = \Proj_C^f(z^\sharp) = z^\sharp$$.

Suppose that we have $s < b $. To rely on Lemma \ref{lemma:KKT}, we introduce $v = [v_k]$ where 
$v_k := \begin{cases} 1 & \text{if } \quad k\in A_r\\
 0 & \text{if } \quad k\notin A_r
 \end{cases}$ and notice that $C_r = C_{b, v}$. By Lemma \ref{lemma:KKT}, there is a pair 
 $(\lambda^\star, \nu^\star) \in\mathbb{R}_+ \times \mathbb{R}$ such that:
 \begin{equation}\label{eq:comphaty1}
 \hat z_k = \begin{cases} e^{\lambda^\star + \nu^\star} z_k & \text{if } \quad k\in A_r\\
 e^{\nu^\star }z_k & \text{if } \quad k\notin A_r
 \end{cases}. 
 \end{equation}
As   $\hat z\in C_r$,     we have:
\begin{equation}\label{eq:lambdanu0}
 1 = \sum_{k=1}^n \hat z_k = \Vert z \Vert_1[e^{\lambda^\star + \nu^\star} s +  e^{ \nu^\star}(1 - s)], \quad b\le \sum_{k\in A_r} \hat z_k = \Vert z \Vert_1 e^{\lambda^\star + \nu^\star } s.   
\end{equation}
 We claim that:
 \begin{equation}\label{eq:lambdanu1}
b =  \sum_{k\in A_r} \hat z_k.
 \end{equation}

In fact, if $b -  \sum_{k\in A_r} \hat z_k \not= 0$, then $b -  \sum_{k\in A_r} \hat z_k < 0$ and from the second statement of Lemma \ref{lemma:KKT}, we get $\lambda^\star = 0$ and (\ref{eq:lambdanu0}) implies that  $\Vert z \Vert_1 e^{\nu^\star} = 1$ and $b \le s$ which contradicts the hypothesis $s < b$.

Finally, by (\ref{eq:lambdanu0}) and (\ref{eq:lambdanu1}), we get 
\[ e^{\lambda^\star + \nu^\star} =  \dfrac{b}{s}\,\dfrac{1}{\Vert z \Vert_1}, \quad e^{\nu^\star} =  \dfrac{1 - b}{1 - s}\, \dfrac{1}{\Vert z \Vert_1}.\]
By replacing $e^{\lambda^\star + \nu^\star}$ with $\dfrac{b}{  s}\,\dfrac{1}{\Vert z \Vert_1}$ and $e^{\nu^\star}$ with $\dfrac{1 - b}{1 - s}\, \dfrac{1}{\Vert z \Vert_1}$ in (\ref{eq:comphaty1}), we obtain (\ref{eq:comp}).
\end{proof}

\subsection{Proof of Lemma \ref{lemma:KKT1}}
\label{subsec:proof:lemmaKKTOne}
\lemmaKKTOne*
\begin{proof}
We deduce (\ref{eq:comphatyij}) from (\ref{eq:comphaty0}) and the definition   (\ref{eq:unv}) of $v$.  

The inequality $\delta  < \hat z_i - \hat z_j$     means that $b: =\delta  <  \langle \hat z, v  \rangle$. Then, by Lemma \ref{lemma:KKT}, we deduce $\lambda^\star = 0$. As $\hat z\in\Delta_n$, we deduce from 
 (\ref{eq:comphatyij})    that $1 = e^{\nu^\star} \Vert z \Vert_1$,   thus  $\hat z = z^\sharp$.
 
\end{proof}

\subsection{Proof of Lemma \ref{lemma:seconddegree}}
\label{subsec:proof:lemma:seconddegree}
\lemmaseconddegree*

\begin{proof}
For all $x \in\mathbb{R}_{++}$, we have: 
\[ \dfrac{\omega x - \omega' x^{-1}}{\omega x + \omega' x^{-1} +u} = \delta \Longleftrightarrow 
\dfrac{\omega x^2 - \omega' }{\omega x + \omega' x^{-1} +u} = \delta \, x 
\Longleftrightarrow  \omega(1 - \delta)x^2 - u\delta x - \omega'(1 + \delta) = 0.\]
Set $a = \omega(1 - \delta),\, b = - u \delta $
and $c = -\omega'(1 + \delta)$. 

It is clear that $a > 0$, $c < 0$ and $b^2 - 4ac > 0$, so the equation $ax^2 + bx + c = 0$ in $\mathbb{R}$ has two distinct roots of opposite sign. For $x = 1$, we have  $a + b + c = \omega - \omega' - \delta < 0$. Thus, since $a > 0$, the strictly positive root $E$ of  
$ax^2 + bx + c $ verifies $E > 1$.
 \end{proof}

\subsection{Proof of Proposition \ref{prop:projCbv2}}
\label{subsec:proof:prop:projCbv2}
\propprojCbvTwo*
\begin{proof}
Set $y:= z^\sharp\in \Delta_n \cap \mathbb{R}^n_{++}$. As by Corollary \ref{cor:projCzzNat}, we have 
$\hat z :=\Proj_{C_r}^f(z)  = \Proj_{C_r}^f(y)$ and noticing 
\[s = y_i - y_j \quad \text{and} \quad  \dfrac{z_i x - z_j x^{-1}}{z_i x + z_j' x^{-1} + \Vert z \Vert_1 - z_i - z_j} = 
\dfrac{y_i  x - y_j x^{-1}}{y_i x + y_j x^{-1} +1 - y_i - y_j} \quad \text{for any $x\in\mathbb{R}_{++}$} \]
we  deduce  (\ref{eq:comp1}) from the components of $\hat y := \Proj_{C_r}^f(y)$.

If $s \ge \delta$ then  $y\in C_r$. Then we deduce from   Corollary \ref{cor:projCzzNat}  and (\ref{eq:projRule1})       $\hat y := \Proj_{C_r}^f(y)= y$, i.e.,  $\Proj_{C_r}^f(z) = z^\sharp$.

Suppose that we have $s < \delta $, so $y\notin C_r$. By applying  Lemma \ref{lemma:KKT1} to $y\in \Delta_n \cap \mathbb{R}^n_{++}$, we have   a pair $(\lambda^\star, \nu^\star)\in \mathbb{R}_+ \times \mathbb{R}$ such that for all $k\in\{1, 2, \dots, n\}$:
\begin{equation}\label{eq:comphaty2}
 \hat y_k = \begin{cases} e^{\lambda^\star + \nu^\star} y_i & \text{if } \quad k = i\\
e^{ - \lambda^\star + \nu^\star} y_j &  \text{if } \quad k = j\\
 e^{ \nu^\star} y_k & \text{if } \quad k \notin\{i, j\} 
 \end{cases}.
\end{equation}
Moreover  $\delta < \hat y_i - \hat y_j$ implies $\lambda^\star = 0$.

Set $u':= 1 - y_i - y_j \ge 0$. From (\ref{eq:comphaty2}),we deduce $1 = e^{\nu^\star}(e^{\lambda^\star} y_i + e^{- \lambda^\star} y_j + u')$ and then
\begin{equation}\label{eq:enustar}
 e^{\nu^\star} = \dfrac{1}{e^{\lambda^\star} y_i + e^{- \lambda^\star} y_j + u'}.
\end{equation}

As $\hat y \not= y$ because $\hat y \in C_r$ and $y\notin C_r$, and $\lambda^\star \ge 0$, from (\ref{eq:enustar}) we   deduce $\lambda^{\star} > 0$ and      obtain:
\[e^{\nu^\star}(e^{\lambda^\star} y_i  - e^{-\lambda^\star} y_j)=\dfrac{e^{\lambda^\star} y_i  - e^{-\lambda^\star} y_j}{e^{\lambda^\star} y_i + e^{- \lambda^\star} y_j + u'} =  \hat y_i - \hat y_j = \delta.\]

As $y\in \Delta_n\cap \mathbb{R}^n_{++}$ and $s = y_i - y_j < \delta$, we can apply  Lemma \ref{lemma:seconddegree} with:
\[\omega := y_i, \quad  \omega':= y_j \quad \text{and also $-1 <  \delta < 1$ and $u'  = 1 - y_i - y_j \ge 0$}\]
to conclude that $e^{\lambda^{\star}}$ is the unique solution $E > 1$ of the following equation in $\mathbb{R}_{++}$: 
\[ \dfrac{ y_i x   -   y_j x^{-1}}{y_i x +   y_j x^{-1} + u'} = \delta\]
So we have $e^{\lambda^{\star}} = E$ and by (\ref{eq:enustar})  $e^{\nu^\star} =   \dfrac{1}{ y_i E +  y_j E^{-1} + u'} = \dfrac{\Vert z \Vert_1}{D}$.

By replacing $e^{\lambda^{\star}}$ by $E$ and $e^{\nu^\star}$ by $\dfrac{\Vert z \Vert_1}{D}$ in (\ref{eq:comphaty2}), using $y:= z^\sharp$,  we deduce (\ref{eq:comp1}).

\end{proof}

\subsection{Proof of Proposition \ref{prop:zd}}
\label{subec:proof:prop:zd}
\propzd*

\begin{proof}
We prove the vector equalities (\ref{eq:AlgoD3}) by induction on $j\ge 1$:
\begin{equation}\label{eq:Hj}
\text{For all $h\in\{1, 2, \dots,m\}$ the three vector equalities (\ref{eq:AlgoD3}) holds for} \quad t = (j -1) m + h.    
\end{equation}.

$\bullet$  For   $j = 1$  and $h\in\{1, 2, \dots, m\}$, we have $t = (j -1) m + h = h$. As $d^{(t - m)} = 0$ by convention, then from (\ref{eq:ut}) and (\ref{eq:AlgoD2}), we get:
\[ u^{(t)} = z^{(t - 1)}, \quad z^{(t)} =  \Proj_t(u^{(t)}) \quad \text{and} \quad d^{(t)} = \log \dfrac{z^{(t - 1)}}{z^{(t)}}\]
thus, for $j =1$,  (\ref{eq:AlgoD3}) is established.

$\bullet$ Suppose that  (\ref{eq:AlgoD3}) holds for $j - 1$, where $j \ge 2$, and let us prove (\ref{eq:AlgoD3}) for $j$.

Let $h\in\{1, 2, \dots, m\}$  and set $t:= (j -1)m + h$. Then from (\ref{eq:ut}) and (\ref{eq:AlgoD2}), we have:
\[ u^{(t)} = z^{(t - 1)} \,.\, \text{exp}(d^{(t - m)}), \quad z^{(t)} =  \Proj_t(u^{(t)}) \quad \text{and} \quad d^{(t)} = d^{(t - m)} +  \log \dfrac{z^{(t - 1)}}{z^{(t)}}\]
As $t - m = (j - 2)m + h $, by relying on the induction hypothesis for $j -1$,  we deduce from its    third vector equality:
\[ d^{(t - m)} =  
\log\!\left(
\prod_{\ell=0}^{j-2}
\frac{z^{(\ell m+h-1)}}{z^{(\ell m+h)}}
\right)\]
Then, from Lemma \ref{lemma:explog}, we get   
\[ u^{(t)} = z^{(t - 1)} \,.\, \text{exp}(d^{(t - m)}) = z^{(t - 1)} \,. \, \text{exp}(\log\!\left(
\prod_{\ell=0}^{j-2}
\frac{z^{(\ell m+h-1)}}{z^{(\ell m+h)}}
\right)) = z^{(t - 1)} \,.\, \prod_{\ell=0}^{j-2}
\frac{z^{(\ell m+h-1)}}{z^{(\ell m+h)}}.\]
Obviously, we have:
\[ d^{(t)} = d^{(t - m)} +  \log \dfrac{z^{(t - 1)}}{z^{(t)}} = \log\!
\!\left(\prod_{\ell=0}^{j-2}
\frac{z^{(\ell m+h-1)}}{z^{(\ell m+h)}}
\right) + \log \dfrac{z^{(t - 1)}}{z^{(t)}} = \log\!\left(
\prod_{\ell=0}^{j-1}
\frac{z^{(\ell m+h-1)}}{z^{(\ell m+h)}}
\right).
\]

\end{proof}

\end{document}